\Crefname{figure}{Fig.}{Figs.}
\Crefname{tabular}{Tab.}{Tabs.}
\Crefname{section}{Sec.}{Secs.}
\theoremstyle{definition}
\newtheorem{theorem}{Theorem} 
\newtheorem{proposition}{Proposition} 
\newtheorem{lemma}{Lemma} 
\theoremstyle{remark}
\newtheorem{remark}[theorem]{Remark} 
\newcommand{\squishlisttwo}{
 \begin{list}{$\bullet$}
  { \setlength{\itemsep}{1pt}
     \setlength{\parsep}{0pt}
    \setlength{\topsep}{0pt}
    \setlength{\partopsep}{0pt}
    \setlength{\leftmargin}{1em}
    \setlength{\labelwidth}{1.5em}
    \setlength{\labelsep}{0.5em} } }
\newcommand{\squishend}{
  \end{list}  }
\DeclareMathOperator*{\argmin}{argmin}
\icmltitlerunning{Fair yet Asymptotically Equal Collaborative Learning}
\begin{document}

\twocolumn[
\icmltitle{Fair yet Asymptotically Equal Collaborative Learning
}



\icmlsetsymbol{equal}{*}

\begin{icmlauthorlist}
\icmlauthor{Xiaoqiang Lin}{nus,equal}
\icmlauthor{Xinyi Xu}{nus,astar,equal}
\icmlauthor{See-Kiong Ng}{ids}
\icmlauthor{Chuan-Sheng Foo}{astar}
\icmlauthor{Bryan Kian Hsiang Low}{nus}
\end{icmlauthorlist}

\icmlaffiliation{nus}{Department of Computer Science, National University of Singapore, Singapore.}
\icmlaffiliation{astar}{Institute for Infocomm Research, A*STAR, Singapore.}
\icmlaffiliation{ids}{Institute of Data Science, National University of Singapore, Singapore}

\icmlcorrespondingauthor{Xinyi Xu}{xinyi.xu@u.nus.edu}

\icmlkeywords{Machine Learning, ICML}

\vskip 0.3in
]



\printAffiliationsAndNotice{\icmlEqualContribution} 

\begin{abstract}
In collaborative learning with streaming data, \textit{nodes} (e.g., organizations) jointly and \textit{continuously} learn a machine learning (ML) model by sharing the \textit{latest model updates} computed from their latest streaming data. For the more resourceful nodes to be willing to share their model updates, they need to be \textit{fairly} incentivized. This paper explores an incentive design that guarantees fairness so that nodes receive rewards commensurate to their contributions. Our approach leverages an explore-then-exploit formulation to estimate the nodes' contributions (i.e., exploration) for realizing our theoretically guaranteed fair incentives (i.e., exploitation).
However, we observe a ``rich get richer'' phenomenon arising from the existing approaches to guarantee fairness and it discourages the participation of the less resourceful nodes. To remedy this, we \textit{additionally} preserve asymptotic \emph{equality}, i.e., less resourceful nodes achieve equal performance eventually to the more resourceful/``rich'' nodes.
We empirically demonstrate in two settings with real-world streaming data: federated online incremental learning and federated reinforcement learning, that our proposed approach outperforms existing baselines in fairness and learning performance while remaining competitive in preserving equality. 
\end{abstract}

\section{Introduction} \label{sec:introduction}
The problem of collaborative learning with streaming data involves having multiple nodes collecting data incrementally \citep{chen2020asynchronous,Le2021_continuous_stream_fl} to jointly and \emph{continuously} learn an ML model by sharing the \emph{latest model updates} computed from their latest streaming data \cite{Jin2021-streaming-data,onlineml2018}, for a possibly long-term collaboration \citep{Xu2021_FL_repeated_game_long_term,Yuan2021_incentive_FL_long_term,Zhang2022_FL_long_term}.\footnote{Refer to \cref{sec:foil} for a detailed example.}
The setting of streaming data is motivated by scenarios where data collection is time-consuming and takes place over a long term, e.g., hospitals collect $100$ thousand medical scans collaboratively over months \citep{Flores2020_nvidia_FL_medical, Miller1996_finance_monthly, Leersum2013_medicine_weekly}. Moreover, due to data scarcity in the beginning of data collection, collaboration through data sharing can benefit the nodes by providing them with an ML model trained on the shared data, e.g., repeated interactions among $17$ partners/nodes \citep{Hale2019_mellody_FL}. As the ML model is continuously trained, it also finds application in real-time decision-making, e.g., provide real-time customized services \citep{Zhao2020timeliness}, or make predictions in the stock market \citep{onlineml2018,Bifet2009DATASM, hft2018}. In contrast, it is undesirable or infeasible to wait till the completion of the data collection and model training, which can take over weeks, months, or an indeterminate length \citep{Leersum2013_medicine_weekly,Miller1996_finance_monthly, Wang2021_FL_field_guide}.

To ensure the effectiveness of such collaboration of data sharing through continuous learning,\footnote{We highlight it is different from continual learning \citep{yoon21-federated-continual-learning} where a model is incrementally trained w.r.t.~\emph{different} tasks, i.e., the distribution of data changes/shifts over time.} it is important to incentivize the nodes to share their latest data (indirectly) via the \emph{latest} model updates. In particular, a \emph{fair} incentive mechanism is shown to be effective, by giving higher incentives to nodes with higher contributions~\cite{profit-allocation-FL-Song2019,Sim2020,Seb2022-incentivizing}. 
One specific approach is via \emph{ex post} fair incentives \cite{Chen2020-FL-incentive,Richardson2020-FL-incentive} based on the nodes' \textit{true contributions} from all of their shared/uploaded model updates over the entire training \cite{profit-allocation-FL-Song2019,Wang2020-SV-in-FL}. This faces two practical obstacles: 1.~it requires an additional external resource (e.g., money) for such incentives, but it is unclear who should provide this resource or what the denomination is, e.g., the exact monetary value of model updates \cite{Sim2020}; 2.~the incentives are realized only after training completes which can take a long and indeterminate time. It means the nodes do \emph{not} know up-front when or what they will receive as incentives, which makes it difficult to convince them to join the collaboration.

In contrast, the approach of training-time incentive design can avoid these obstacles by realizing some carefully designed incentives \emph{during} training \citep{Xu2021-fair-CML,agussurja2022_bayesian_para}.
However, this implies that at the time of incentive realization during training, the true contributions (i.e., defined over the entire training) of the nodes are not completely known.
This presents a challenge because the incentives are ideally designed w.r.t.~the true contributions to guarantee fairness so that the nodes with higher true contributions receive better incentives.
(1) \textbf{How to obtain accurate estimates of the true contributions so that the incentives which are fair w.r.t.~the estimates, are also fair w.r.t.~the true contributions?}

Regarding incentive realization, an existing approach of using model updates \cite{Xu2021-fair-CML} can lead to non-convergence behavior of the model (\cref{sec:experiments}) and has a ``localized'' fairness guarantee w.r.t. a particular iteration $t$ instead of overall.
Another existing approach uses the estimates of some latent variables in an asymptotic approach to ensure fairness \citep{agussurja2022_bayesian_para}, but its theoretical result is limited to only $2$ nodes.
(2) \textbf{What is a suitable realization of incentives during training, with a fairness guarantee w.r.t.~the overall training and applies to more than $2$ nodes?}

Lastly, an undesirable ``rich get richer'' outcome can arise from the fairness guarantee and worse inequality.
Specifically, as nodes with higher contributions (e.g., the more resourceful organizations/companies with better local data) receive better incentives (e.g., better models), it can widen the ``gap'' between the more resourceful and less resourceful nodes.
This can discourage the participation of the less resourceful nodes, which is undesirable as their participation can improve the overall utility (i.e., model performance) as shown in \cref{sec:experiments}.
While \cite{li2019-qFFL} aims to address equality, unfortunately, it cannot also guarantee fairness.
(3) \textbf{How to address the dilemma between fairness to incentivize the more resourceful nodes to join and equality to encourage the less resourceful nodes to join?}

We propose a collaborative incremental learning framework to answer these questions under the streaming data setting:
For (1), as the nodes have stationary data distributions, their true contributions can be estimated cumulatively, and importantly, with more accuracy over more iterations.
We identify the classic \emph{explore-exploit} paradigm: improving the accuracy of contribution estimates (exploration) vs.~using the current estimates to design incentives (exploitation), and adopt an explore-then-exploit formulation by developing a stopping criterion for exploration.
For (2), we utilize the \emph{latest} model during training as a valuable resource for realizing incentives: the nodes with higher contributions synchronize more frequently with the latest model in expectation. We show this design guarantees fairness in terms of the asymptotic convergence behaviors of the models of the nodes instead of localized to certain iteration $t$'s.
For (3), we introduce an equality-preserving perspective so that \textit{all} nodes can receive the equally optimal model asymptotically via a single equalizing coefficient to trade-off (empirical) fairness with asymptotic equality.

Our specific contributions are summarized as follows:
\squishlisttwo
    \item Leveraging the Hotelling's one-sample test to design an adjustable stopping criterion for contribution evaluation as the exploration in an \textit{explore-then-exploit} formulation;

    \item Proposing a novel node sampling distribution to \textit{guarantee fairness} (as in the Shapley value) so that the nodes with higher contributions receive the latest model more frequently, and thus they observe better convergence;
    
    \item Introducing an equalizing coefficient so that all nodes receive the optimal model with \textit{equal asymptotic convergence} while guaranteeing fairness; and
    
    \item Empirically demonstrating on federated online incremental learning and federated reinforcement learning that our proposed approach outperforms existing baselines in terms of fairness and learning performance while remaining competitive in preserving equality.
\squishend

\section{Preliminaries and Setting}
\label{sec:setting}
\paragraph{Federated learning (FL) setting and notations.}
A set $[N]\coloneqq \{i\}_{i=1,\ldots,N}$ of $N$ \textit{honest} compute nodes\footnote{Honest nodes do not deviate from the proposed algorithm and we provide a result from relaxing this assumption in \cref{sec:appendix-proofs}.} collaborate by communicating via a \textit{trusted} coordinating node, \textit{coordinator} to jointly learn an ML model parameterized by $\theta \in \Theta$ w.r.t.~a minimizing objective $\mathbf{J}(\theta) \coloneqq \sum_i p_i\ \mathbf{L}(\theta;\mathcal{D}_i)$ where $\mathcal{D}_i$ is the local data of node $i$, $\mathbf{L}(\theta;\mathcal{D}_i)$ is a loss function of $\theta$ on $\mathcal{D}_i$~(e.g., cross-entropy loss for classification), and $p_i\coloneqq  |\mathcal{D}_i| / \sum_{i'} |\mathcal{D}_{i'}|$ is the data size weighted coefficient~\cite{BrendanMcMahan2017-FedAvg}. Let $\theta_{i,t}$~($\theta_t$) denote the model at node $i$~(coordinator) in iteration $t$. 
At $t=0$, all nodes receive the same initialized model $\theta_{i,0}\coloneqq\theta_0$ from the coordinator. To begin iteration $t>0$, the coordinator selects a subset of size $k\leq N$ nodes which first synchronize with the latest model $\theta_{i, t-1} \gets \theta_{t-1}$ and then compute the derivative of the loss 
$\Delta \theta_{i,t-1} \coloneqq \nabla \mathbf{L}(\theta_{i, t-1};s_{i,t})$ where $s_{i,t}$ is a randomly selected subset/batch of $\mathcal{D}_i$ (or from distribution of $\mathcal{D}_i$ for streaming data). For technical reasons (e.g., proving \cref{proposition:fairness}), we consider the data distribution of each node to be \emph{stationary} over time and defer the non-stationary setting to future work. These selected nodes then upload $\Delta \theta_{i,t-1}$ to the coordinator to be aggregated as $\Delta \theta_{t}\coloneqq \sum_{\text{selected } i} p_i\  \Delta \theta_{i,t-1}$ and updates the latest model as $\theta_t \gets \theta_{t-1} - \Delta \theta_{t}\ .$ All the notations are tabulated in \cref{appendix:algorithm}.

\paragraph{Shapley value for contribution and incentive.}
Node $i$'s contribution till iteration $t$ is defined as 
\begin{equation}
\label{equ:cumulative-sv-psi}
\boldsymbol{\psi}_t \coloneqq \{\psi_{i,t}\}_{i\in [N]}\ , \quad 
\psi_{i,t} \coloneqq t^{-1} \textstyle \sum_{l=0}^{t} \phi_{i,l}
\end{equation}
where $ \textstyle \phi_{i,t} \coloneqq N^{-1} \sum_{\mathcal{S} \subseteq{[N]}\setminus \{i\}} \binom{N-1}{|\mathcal{S}|}^{-1} \mathbf{U}( \mathcal{S} \cup \{i\}) - \mathbf{U}( \mathcal{S})$ is the Shapley value (SV) \citep{shapley1953value} in iteration $t$. As $\phi_{i,t}$ has an exponential time complexity in $N$, our implementation adopts a linear approximation \citep{Fatima2008-linear-SV-approximation} (details in \cref{sec:appendix-proofs}).
The utility function $\mathbf{U}:2^N \mapsto \mathbb{R}$ is defined as the inner product $\langle \Delta \theta_{\mathcal{S},t},\Delta \theta_{[N],t} \rangle$
between the aggregated model update from a subset/coalition of nodes, $\Delta \theta_{\mathcal{S},t} = \sum_{i \in \mathcal{S}} p_i\ \Delta \theta_{i,t}$ and that from all the nodes,  $\Delta \theta_{[N],t} = \sum_{i \in [N]} p_i\ \Delta \theta_{i,t}$.
Intuitively, node $i$'s contribution in iteration $t$ is determined by how closely $\Delta \theta_{i,t}$ aligns with other model updates $\Delta \theta_{i',t}$ \citep{Xu2021-fair-CML}.
The implementation details are in \cref{sec:appendix-proofs}.
We denote $i$'s true contribution with $\psi_{i}^* \coloneqq \lim_{t\to \infty} \psi_{i,t}$,\footnote{As $\psi_{i,t}$ is empirically observed to converge as $\theta_t$ converges, the limit is assumed to exist.} 
and define $\boldsymbol{\psi}^*\coloneqq \{\psi_{i}^*\}_{i\in[N]}$. 
Subsequently, we refer to $\psi_{i,t}$ as the \textit{contribution estimate} and the accuracy is w.r.t.~$\psi^*_{i}$.
Incentives (e.g., monetary \citep{profit-allocation-FL-Song2019}, model updates \citep{Xu2021-fair-CML}) designed in proportion to $\boldsymbol{\psi}^*$ or $\boldsymbol{\psi}$ to guarantee fairness are often called Shapley-fair \citep{Sim2020,agussurja2022_bayesian_para,zhou2023}.

\paragraph{Settings for the running example in \cref{sec:proposed-framework}.}
We use the MNIST~\cite{mnist} dataset with $N=30$ nodes, each with uniformly randomly sampled $600$ images and a standard \textit{convolutional neural network} with two convolution and two fully-connected layers. We reassign $20\%$ of the labels (to a randomly selected incorrect one) for a designated subset of $30\%$ of the nodes to have lower $\psi_i^*$ \cite{profit-allocation-FL-Song2019} to simulate some nodes have noisy observation of data due to different level of resourcefulness (e.g., hospitals with different levels of budgets for data collection equipment with different precision \cite{medicalerror2004}).\footnote{Refer to \cref{appendix:setings-experiments} for more explanation on resourcefulness.}
The accuracy of $\psi_{i,t}$ is evaluated via the \textit{recall fraction} $\coloneqq \frac{\text{$\#$ nodes with designated low-quality data and lowest $30\%$}\  \boldsymbol{\psi}_t}{ \text{$\#$ nodes with designed low-quality data}}$ \cite{Wang2020-SV-in-FL}.
We provide additional results under three more types of low-quality/less valuable data in \cref{appendix:experiments}.

\section{Explore-Exploit in Contribution Evaluation and Incentive Realization}\label{sec:proposed-framework}
Our proposed framework first ``explores'' sufficiently by evaluating the contributions of the nodes (\cref{sec:motivation}); and then ``exploits'' the converged and approximately accurate contribution estimates to realize the fair incentives (\cref{sec:stage2}), by sampling the nodes to receive the latest model according to their contribution estimates (i.e., a higher contribution estimate leads to a higher sampling probability).

\subsection{The Explore-Exploit Perspective and Contribution Evaluation} \label{sec:motivation} 
We view the expected number of iterations for $\theta_t$ to converge to optimal, $T^*$ (an indeterminate and possibly large number) as a fixed budget,\footnote{\cite{Li2019-fedavg-noniid} provides a big $\mathcal{O}$ notation for $T^*$ under stationary data setting, used in \cref{sec:fair-complexities}.} and describe a stopping criterion for contribution evaluation.
From the definition of $\psi_i^*$, increasing exploration (i.e., extending contribution evaluation over more iterations) generally improves the accuracy in $\psi_{i,t}$, as shown in \cref{fig:motivation-accurate-sv}. 
Then, as it takes $T^*$ iterations for $\theta_t$ to converge to optimal, effectively we have a fixed budget of $T^*$ iterations to allocate between contribution evaluation (exploration) and incentive realization (exploitation). \cref{fig:spectrum} shows two extreme cases when choosing the stopping iteration for contribution evaluation. Specifically, allocating all the iterations to contribution evaluation reduces the entire framework to without incentive realization: All the nodes receive the same model throughout \citep{profit-allocation-FL-Song2019,Wang2020-SV-in-FL}, which is unfair \citep{Xu2021-fair-CML}.
In contrast, allocating all iterations to incentive realization while performing contribution evaluation concurrently \cite{Nagalapatti2021-game-of-gradients-fl} is shown to have poor empirical performance in guaranteeing fairness (\cref{fig:error-propagation} in \cref{appendix:experiments}) because the contribution estimates are inaccurate due to insufficient exploration.

\begin{figure}[!ht]
    \centering
    \includegraphics[width=0.9\linewidth]{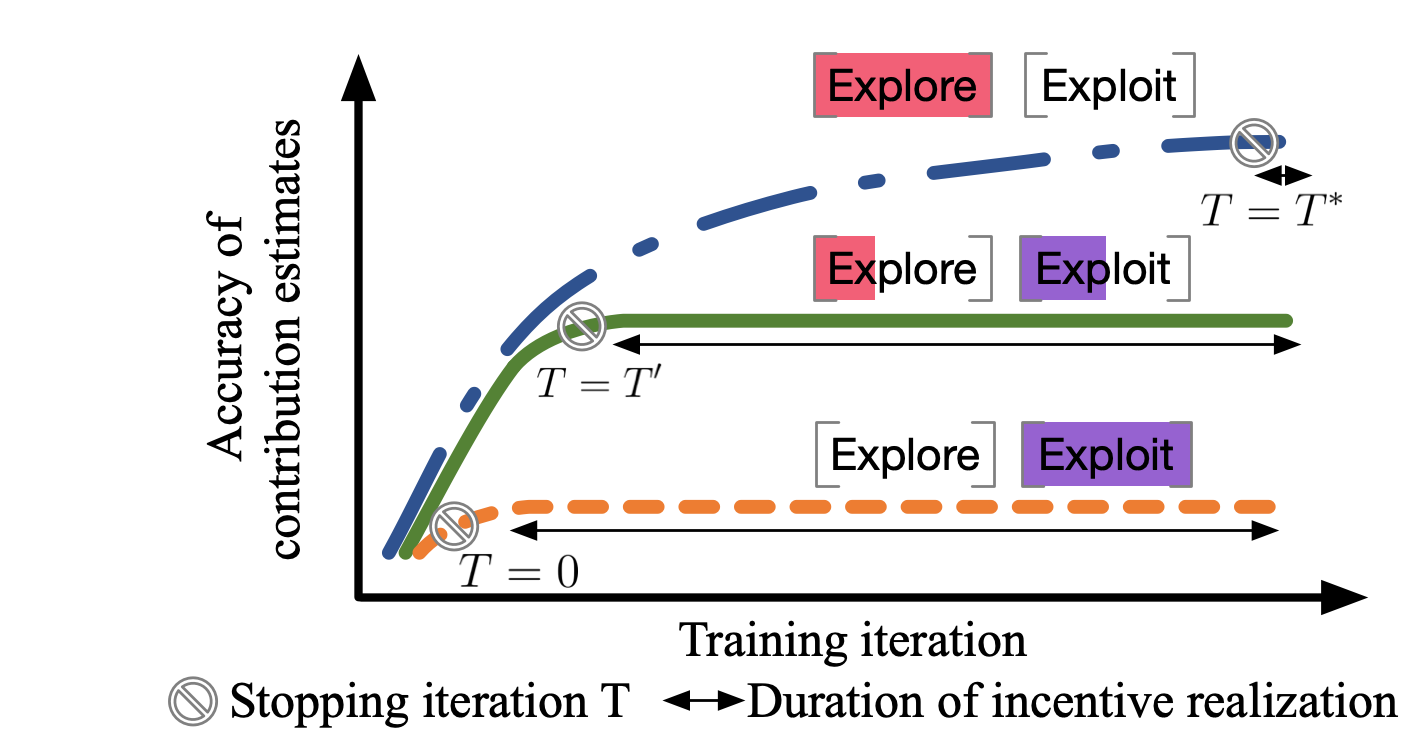
    }
\caption{Illustration of the explore-exploit perspective. The vertical axis is denotes the accuracy of the current contribution estimates (e.g., recall fraction in \cref{sec:setting}). If the stopping iteration $T=0$ (i.e., no-explore-all-exploit), fairness is not guaranteed; if $T=T^*$ (i.e., no-exploit-all-explore), then no iteration for incentives. A carefully set $T'$ avoids these two problematic situations.}
    \label{fig:spectrum}
\end{figure}

\textbf{Hypothesis testing-based stopping criterion.}
Though the recall fraction can reflect the accuracy of $\boldsymbol{\psi}_t$ (\cref{fig:motivation-accurate-sv}), it cannot be used in practice, e.g., because the true noise in the node's data is unknown. Fortunately, the convergence of the observed past values of $\boldsymbol{\psi}_t$ (\cref{fig:motivation-accurate-sv} right) somewhat reflects the convergence of the recall fraction (\cref{fig:motivation-accurate-sv} left). This inspires using Hotelling's one-sample test \cite{hotelling} to monitor the convergence of $\boldsymbol{\psi}_t$ to gauge its accuracy.

Precisely, at current iteration $t_s$, we determine whether $\boldsymbol{\psi}_{t_s} \coloneqq t^{-1}_s \sum^{t_s}_{t=1} \boldsymbol{\phi}_{t}$ (where $\boldsymbol{\phi}_{t} \coloneqq \{\phi_{i,t}\}_{i=1,\ldots,N}$) has converged w.r.t.~$\boldsymbol{\psi}_{t_s-\tau}$ by testing whether $\{\boldsymbol{\phi}_{t}\}_{t =1,\ldots, t_s - \tau}$ and $\{\boldsymbol{\phi}_{t}\}_{t= 1,\ldots, t_s}$ have the same mean.
Informally, $\tau$ is the size of test window (i.e., number of past iterations).
Define $\hat{\boldsymbol{\mu}}_{0, t_s} \coloneqq (t_s-\tau)^{-1}\sum_{t=1}^{t_s-\tau} \boldsymbol{\phi}_t$ and
assume there exist $\boldsymbol{\mu}_{t_s} \in \mathbb{R}^N$ and $\Sigma_{t_s}\in \mathbb{R}^{N\times N}$ s.t.~$\{ \boldsymbol{\phi}_{t}\}_{t=1,\ldots,t_s}$ 
are $t_s$ i.i.d.~samples from the Gaussian $\mathcal{N}(\boldsymbol{\mu}_{t_s}, \Sigma_{t_s})$.\footnote{We theoretically discuss and empirically verify this assumption in \cref{sec:appendix-proofs}.}
As such, rejecting the null hypothesis $h_{0, t_s}: \boldsymbol{\mu}_{t_s}=\hat{\boldsymbol{\mu}}_{0, t_s}$ means there is statistical evidence that $\boldsymbol{\phi}_{t}$ is fluctuating between $t_s-\tau$ and $ t_{s}$, so $\boldsymbol{\psi}_{t_s}$ has not converged.

\begin{proposition}
\label{prop:hypothesis-testing}
For iteration $t_s$, define
$
\texttt{T2} \coloneqq t_s(\boldsymbol{\psi}_{t_s}- \hat{\boldsymbol{\mu}}_{0,t_s})^\top S^{-1}(\boldsymbol{\psi}_{t_s}-\hat{\boldsymbol{\mu}}_{0,t_s})
$
where $S$ is the estimated sample covariance matrix from 
 $\{\boldsymbol{\phi}_{t}\}_{t= 1,\ldots, t_s}$.
The following \textit{stopping criterion} guarantees at most $\alpha$ type-$1$ error:
$
p\text{-value} \coloneqq \text{Pr}(\texttt{T2} \geq T^2_{1-\alpha, \mathcal{N}, 2(t_s-1)} ) \geq \alpha.\footnote{$T^2_{1-\alpha, \mathcal{N}, 2\tau-2}$ denotes the $1-\alpha$ quantile for the Hotelling's $T$-squared distribution $T^2_{\mathcal{N}, 2\tau-2}$.}
$
\end{proposition}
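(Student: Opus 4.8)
The plan is to reduce the nested mean comparison underlying $\texttt{T2}$ to a standard two-sample setting over \emph{disjoint} windows and then invoke the exact sampling theory of Hotelling's $T^2$ statistic. Write $n_1 \coloneqq t_s-\tau$ for the reference window and $n_2 \coloneqq \tau$ for the most recent window, and let $\bar{\boldsymbol{\phi}} \coloneqq n_2^{-1}\sum_{t=t_s-\tau+1}^{t_s}\boldsymbol{\phi}_t$ be the mean over the recent window. The starting point is the algebraic identity
\begin{equation}
\label{eq:nested-to-disjoint}
\boldsymbol{\psi}_{t_s}-\hat{\boldsymbol{\mu}}_{0,t_s}=\frac{n_2}{t_s}\,\bigl(\bar{\boldsymbol{\phi}}-\hat{\boldsymbol{\mu}}_{0,t_s}\bigr),
\end{equation}
which follows from $t_s\,\boldsymbol{\psi}_{t_s}=n_1\,\hat{\boldsymbol{\mu}}_{0,t_s}+n_2\,\bar{\boldsymbol{\phi}}$. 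Thus the deviation tested by $\texttt{T2}$ is, up to a deterministic factor, exactly the difference of the two \emph{disjoint}-window means, which under the i.i.d.\ assumption are independent.

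Given \eqref{eq:nested-to-disjoint}, I would assemble the null distribution from two classical facts about $t_s$ i.i.d.\ draws from $\mathcal{N}(\boldsymbol{\mu}_{t_s},\Sigma_{t_s})$. First, under $h_{0,t_s}$ the mean difference $\bar{\boldsymbol{\phi}}-\hat{\boldsymbol{\mu}}_{0,t_s}$ is centered Gaussian with covariance $(n_1^{-1}+n_2^{-1})\Sigma_{t_s}$, so its whitened version is standard Gaussian in $\mathbb{R}^N$. Second, by Cochran's theorem the (pooled within-window) covariance estimate is independent of the window means and, scaled, is Wishart-distributed with the relevant degrees of freedom. Substituting \eqref{eq:nested-to-disjoint} into the definition of $\texttt{T2}$ and collecting the constants turns the quadratic form into a Gaussian vector normalized by an independent scaled-Wishart matrix, which is \emph{by definition} Hotelling's $T^2_{\mathcal{N},\nu}$ with dimension $\mathcal{N}=N$ and $\nu$ the degrees of freedom carried by the covariance estimate; equivalently $\tfrac{\nu-N+1}{N\nu}\texttt{T2}\sim F_{N,\nu-N+1}$.

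The type-$1$-error guarantee is then immediate from the definition of the quantile. Under $h_{0,t_s}$ we have $\Pr\bigl(\texttt{T2}\ge T^2_{1-\alpha,\mathcal{N},\nu}\bigr)=\alpha$, so declaring ``not converged'' (rejecting the null) precisely when the observed $\texttt{T2}$ exceeds this quantile---equivalently, when the reported $p$-value falls below $\alpha$---rejects a true null with probability at most $\alpha$, which is the claimed bound with $\nu$ as stated in the proposition.

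The main obstacle is that the two windows \emph{overlap}: $\hat{\boldsymbol{\mu}}_{0,t_s}$ is formed from a subset of the very samples that define $\boldsymbol{\psi}_{t_s}$ and $S$, so the textbook one-sample derivation (which treats the hypothesized mean as a fixed constant independent of the data) does not apply verbatim. Identity \eqref{eq:nested-to-disjoint} is precisely what removes this coupling by re-expressing the statistic through the two disjoint windows, restoring the independence between the Gaussian mean-difference and the Wishart normalization that Hotelling's law requires. The remaining delicate point is that the sample covariance $S$ is computed from the \emph{full} sample and therefore contains between-window scatter that is itself a function of $\bar{\boldsymbol{\phi}}-\hat{\boldsymbol{\mu}}_{0,t_s}$; one must either replace $S$ by the pooled within-window estimate to obtain exactness, or argue that under the null this between-window term only inflates $S$, yielding the \emph{conservative} ``at most $\alpha$'' statement and fixing the effective degrees of freedom $\nu$. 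The underlying Gaussianity of $\{\boldsymbol{\phi}_t\}$ is assumed here and is discussed and verified separately in \cref{sec:appendix-proofs}.
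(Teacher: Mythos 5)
Your proposal is correct in spirit but takes a genuinely different --- and considerably more careful --- route than the paper. The paper's own proof is essentially a citation: assuming $\{\boldsymbol{\phi}_t\}_{t=1,\ldots,t_s}$ are i.i.d.\ Gaussian, it asserts (citing Hotelling) that $\texttt{T2}$ follows $T^2_{\mathcal{N},2(t_s-1)}$ under the null and concludes the type-$1$ error bound from the definition of the quantile; it never confronts the fact that the ``hypothesized mean'' $\hat{\boldsymbol{\mu}}_{0,t_s}$ is computed from a subset of the very samples entering $\boldsymbol{\psi}_{t_s}$ and $S$, which is exactly what invalidates a verbatim appeal to the textbook one-sample theorem. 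You identify this overlap as the central obstacle, remove it through the identity $\boldsymbol{\psi}_{t_s}-\hat{\boldsymbol{\mu}}_{0,t_s}=(\tau/t_s)\bigl(\bar{\boldsymbol{\phi}}-\hat{\boldsymbol{\mu}}_{0,t_s}\bigr)$ (which is correct, since $t_s\boldsymbol{\psi}_{t_s}=(t_s-\tau)\hat{\boldsymbol{\mu}}_{0,t_s}+\tau\bar{\boldsymbol{\phi}}$), and rebuild the null distribution from disjoint-window two-sample theory: independent Gaussian mean difference plus an independent Wishart-distributed pooled covariance via Cochran's theorem. Your flagged caveat about $S$ is also the right one: the between-window scatter in the full-sample $S$ is a rank-one term aligned with the tested direction, so (by a Sherman--Morrison argument) the statistic computed with $S$ is a monotone transformation of the exact two-sample statistic, which is what licenses either recalibration or the conservative ``at most $\alpha$'' reading. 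What your route buys is an actual derivation with the independence structure made explicit; what it costs is that the degrees of freedom it yields for the pooled estimate are $(t_s-\tau)+\tau-2=t_s-2$, which matches neither the proposition's stated $2(t_s-1)$ nor the footnote's $2\tau-2$ (the paper is internally inconsistent between these two). So, strictly, your argument proves a corrected version of the statement, whereas the paper's proof --- by implicitly treating the overlapping samples as independent --- never actually earns the degrees of freedom it states.
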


\begin{figure}[t]
    \centering 
    \includegraphics[width=0.93\linewidth]{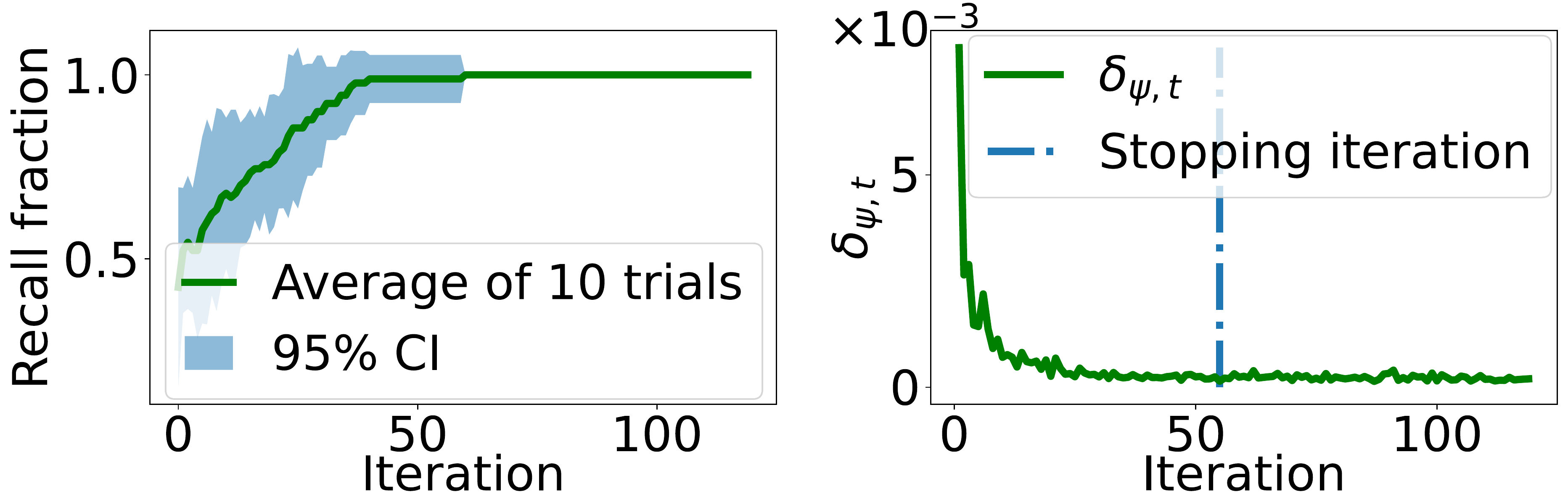}

    \caption{Left: Recall fraction increases to $100\%$ (i.e., optimal) over iterations with
    shaded area denoting the $95\%$ confidence interval.
    Right: Fluctuations $\delta_{\psi,t}\coloneqq |\boldsymbol{\psi}_t - \boldsymbol{\psi}_{t-1}|_\infty$.
    }
    \label{fig:motivation-accurate-sv}
\end{figure}

\cref{prop:hypothesis-testing} presents a stopping criterion based on a test for the lack of statistical evidence $\boldsymbol{\psi}_t$ is fluctuating between $t_s -\tau$ and $t_s$ w.r.t.~a pre-set significance level $\alpha$. We denote the stopping iteration as $T_{\alpha}$.
The significance level $\alpha$ reflects the strictness of the stopping criterion: a larger $\alpha$ rejects $h_{0,t_s}$ more easily and is stricter about the convergence and accuracy of $\boldsymbol{\psi}_t$.
With more iterations, we expect $h_{0,t_s}$ to hold, i.e., $\boldsymbol{\psi}_t$ converges and the $p$-value to be relatively large, so we choose a large $\alpha$~(e.g., $0.5$) and obtain $T_{0.5}=55$ in \cref{fig:motivation-accurate-sv}~(right) with relatively accurate contribution estimates $|\boldsymbol{\psi}_{T_{0.5}} - \boldsymbol{\psi}^* |_{\infty} = 4.4\text{e-}3$ where $\boldsymbol{\psi}^*$ is empirically approximated with $\boldsymbol{\psi}_t$ for the last iteration.

A technical caveat of Hotelling's $T^2$ distribution is it requires $\tau > N$, which means to ensure the accuracy in $\boldsymbol{\psi}_t$ for a larger number of nodes requires a longer contribution evaluation. This implies the criterion is less feasible with a larger $N$. To mitigate this limitation on the feasibility with large $N$, we describe a sub-sampling technique: select a small random subset of $M < N$ nodes for the stopping criterion w.r.t.~their $\psi_{i,t}$. Therefore, only $\tau > M$ is required. Note that the training and update of $\psi_{i,t}$ is as usual for all $N$ nodes. \cref{fig:p-value-and-convergence-complexity} (left) shows the original $p$-value (black) and that with sub-sampling (orange) align well, and validates this mitigation. Further verification is provided in \cref{appendix:experiments}. We apply sub-sampling in our experiments in \cref{sec:experiments}.
Separately, as we focus on the \emph{cross-silo} setting \cite{Wang2021_FL_field_guide,Zhang2022_FL_long_term} where the nodes have reliable connections, we implicitly assume they all can participate in each $t$ during contribution evaluation. If this assumption is not satisfied in practice, our framework can still be applied with a simple modification ($\phi_{i,t}=0$ if $i$ does not participate in iteration $t$ \citep{Wang2020-SV-in-FL}), but it will take longer for $\boldsymbol{\psi}_t$ to converge as shown in \cref{appendix:experiments}.

Importantly, \cref{fig:p-value-and-convergence-complexity} (left) shows $\boldsymbol{\psi}_t$ converges~(red dashed line) earlier than $\theta_t$, as the loss continues to decrease, meaning that \emph{the incentive realization starts when $\theta_t$ is somewhat close to but not quite at convergence}.
In practice, we expect the contribution evaluation to be relatively short, and is observed to be about $1/5$ in length to incentive realization (i.e., train $\theta_t$ to convergence) in our experiments. 
The implication is that fairness is guaranteed when $\theta_t$ starts to have competitive performance instead of at very early stage of training.
In other words, during contribution evaluation, all nodes are willing to share because the performance of $\theta_t$ is not very competitive. When it comes to incentive realization where $\theta_t$ will be trained to convergence, it is important to guarantee fairness for the nodes.

\begin{figure}[t]
    \centering 
    \includegraphics[width=0.98\linewidth]{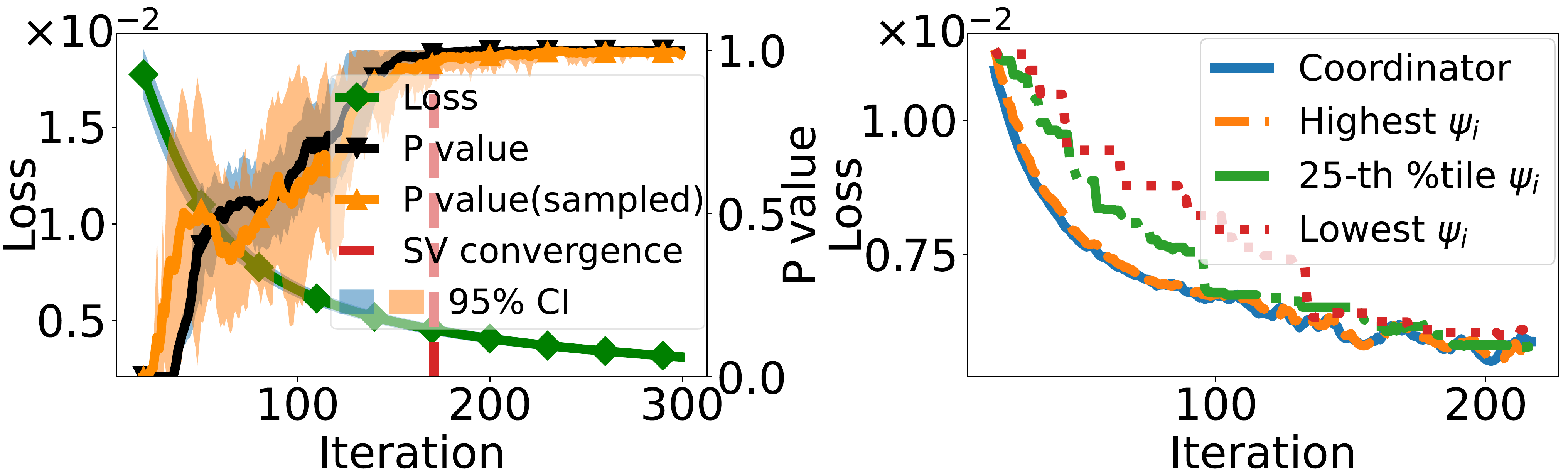}

    \caption{Left: The $p$-value with~(orange) and without~(black) sub-sampling using $\tau = 20$~($35$). Red dashed line marks when $p$-value reaches $0.99$ (i.e., $\boldsymbol{\psi}_t$ has converged).
    Right: Average (over 10 trials) validation loss for the nodes with highest, 25-th percentile and lowest $\psi_{i, T_{\alpha}}$ 
    with $\beta=0.01$.
    }
    \label{fig:p-value-and-convergence-complexity}
\end{figure}
\subsection{Convergence-based Incentive Realization}
\label{sec:stage2}
We design a convergence-based incentive realization by carefully managing the convergence behaviors of the nodes' models via the node sampling distribution, used in FL to select $k<N$ nodes to synchronize with the latest model in each iteration $t$~(and conduct local training to upload their model updates).
We adopt the sampling with replacement scheme~\citep{Li2019-fedavg-noniid,li2020-fl-sampling-with-replacement} as it admits closed-form expressions for analysis. The sampling probability $\varrho_i$ for $i$ is given by the \texttt{softmax} of $\psi_{i,T_{\alpha}}$ and an equalizing coefficient $\beta > 0$~(a.k.a. the temperature parameter) as follows:
\begin{equation} 
\label{equ:sampling-probs}
    \varrho_i \coloneqq \exp\left(  \psi_{i,T_{\alpha}}/\beta\right)  / \textstyle \sum_{i' \in [N]}\exp\left( \psi_{i', T_{\alpha}}/\beta\right).
\end{equation}
For subsequent discussion,
define $q_i\coloneqq 1 - (1 - \varrho_i)^k$ as the probability $i$ is selected in the subset of size $k$ following a counting-based argument w.r.t.~$\varrho_i$. 

\paragraph{Fair convergence complexities.} \label{sec:fair-complexities}
We design the incentives so that the nodes with higher contributions receive the latest model more frequently. This implies these nodes have less expected staleness in their models and lower/better convergence complexities.
The \textit{staleness} $\gamma_{i,t}$ for node $i$ in iteration $t$ is defined as the difference between the current iteration $t$ and the most recent iteration $t'$ in which node $i$ was selected to synchronize $\theta_{i,t'}$ with $\theta_{t'}$. E.g., if node $i$ is selected in iteration $t$ for the update, then $t'=t$ and $\gamma_{i,t}=0$ (staleness resets to $0$ every time $i$ is selected).
Subsequently, we define \textit{expected staleness} for node $i$ from any $t$ as
$\Gamma_i \coloneqq \textstyle \sum_{\gamma =0}^\infty  \mathbb{P}[\text{stale for } \gamma \text{ iterations}] \times \gamma $, and show that $\Gamma_i =(1-q_i) / q_i^2 $ (\cref{sec:appendix-proofs}).
Utilizing a convergence $\mathcal{O}(1/\epsilon)$ for $\theta_t$ \citep[Theorem 2]{Li2019-fedavg-noniid},\footnote{The expected number of iterations for $\mathbb{E}[\mathbf{J}(\theta_t)] - \min_{\theta}\mathbf{J}(\theta) \leq \epsilon$ (assuming stationary data distribution).} we derive an expected \emph{convergence complexity} for node $i$ as $C_i \coloneqq \mathcal{O}(1/\epsilon) + \Gamma_i$, as the sum of the expected number of iterations for $\theta_t$ to converge, and the expected number of iterations for $i$'s model to catch up to $\theta_t$, with the following fairness guarantee:
\begin{proposition}[\bf Fair Expected Convergence] \label{proposition:fairness}
The expected convergence complexity $C_i$ satisfies:
\begin{itemize}[noitemsep,nolistsep]
    \item \textit{symmetry}: $\psi_{i, T_\alpha} = \psi_{i', T_\alpha} \implies C_i = C _{i'}\ ;$
    \item \textit{strict desirability}: $\psi_{i, T_\alpha} > \psi_{i', T_\alpha}\implies C_i < C_{i'}\ ;$
    \item \textit{strict monotonicity}: Supposing $\forall i' \neq i\ \ \psi_{i',T_\alpha}$ is fixed, $\psi'_{i, T_{\alpha} } > \psi_{i, T_{\alpha}} \implies C'_{i} < C_i\ .$
\end{itemize}
\end{proposition}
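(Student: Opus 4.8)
The plan is to observe that all three properties follow from a single chain of monotone maps linking the contribution estimate $\psi_{i,T_\alpha}$ to the convergence complexity $C_i$. Since $C_i = \mathcal{O}(1/\epsilon) + \Gamma_i$ and the term $\mathcal{O}(1/\epsilon)$ is the common convergence budget of $\theta_t$ shared by every node, all three comparisons reduce to the corresponding comparison of the expected staleness $\Gamma_i = (1-q_i)/q_i^2$. It therefore suffices to understand how $\Gamma_i$ depends on $\psi_{i,T_\alpha}$ through the composition $\psi_{i,T_\alpha} \mapsto \varrho_i \mapsto q_i \mapsto \Gamma_i$.

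First I would establish each link of this chain in isolation. (i) The map $q \mapsto \Gamma(q) = (1-q)/q^2 = q^{-2} - q^{-1}$ is \emph{strictly decreasing} on $(0,1)$, since $\Gamma'(q) = -2q^{-3} + q^{-2} = (q-2)/q^3 < 0$ whenever $q \in (0,1)$. (ii) The map $\varrho \mapsto q = 1 - (1-\varrho)^k$ is \emph{strictly increasing} in $\varrho \in (0,1)$ for any integer $k \ge 1$, because $1-\varrho$ is decreasing and the $k$-th power preserves the order. (iii) The \texttt{softmax} in \cref{equ:sampling-probs} is strictly increasing in its own coordinate: writing $\varrho_i = x_i/(x_i + A_i)$ with $x_i \coloneqq \exp(\psi_{i,T_\alpha}/\beta)$ and $A_i \coloneqq \sum_{i' \neq i}\exp(\psi_{i',T_\alpha}/\beta)$, I have $\partial \varrho_i/\partial x_i = A_i/(x_i+A_i)^2 > 0$, and $x_i$ is strictly increasing in $\psi_{i,T_\alpha}$ because $\beta > 0$. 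Note all $\varrho_i \in (0,1)$, hence all $q_i \in (0,1)$, so link (i) applies throughout.

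With these links I would compose. For \emph{symmetry}, equal arguments $\psi_{i,T_\alpha} = \psi_{i',T_\alpha}$ give equal softmax values $\varrho_i = \varrho_{i'}$ (the normaliser is shared), hence $q_i = q_{i'}$, $\Gamma_i = \Gamma_{i'}$, and $C_i = C_{i'}$. For \emph{strict desirability}, I compare two distinct nodes under a single fixed $\psi$-vector: the shared normaliser yields $\varrho_i/\varrho_{i'} = \exp((\psi_{i,T_\alpha} - \psi_{i',T_\alpha})/\beta)$, so $\psi_{i,T_\alpha} > \psi_{i',T_\alpha} \Rightarrow \varrho_i > \varrho_{i'} \Rightarrow q_i > q_{i'} \Rightarrow \Gamma_i < \Gamma_{i'} \Rightarrow C_i < C_{i'}$, the third step using (i). For \emph{strict monotonicity} (the same node, with $\psi_{i',T_\alpha}$ fixed for all $i' \neq i$), the only subtlety is that perturbing $\psi_{i,T_\alpha}$ to $\psi'_{i,T_\alpha}$ also moves the softmax normaliser; link (iii) already accounts for exactly this moving denominator, giving $\varrho'_i > \varrho_i$ and hence $C'_i < C_i$ via the same composition.

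The hard part is bookkeeping rather than analysis: keeping straight which quantities are fixed in the two different comparisons. In \emph{desirability} the whole $\psi$-vector (and thus the softmax normaliser) is frozen while two coordinates are compared, whereas in \emph{monotonicity} one coordinate is varied and the normaliser moves with it; conflating the two would make the softmax step look non-monotone. The two places where rigour genuinely matters are pinning down the strict sign of $\Gamma'(q)$ on $(0,1)$ and handling the moving denominator through the partial-derivative argument in (iii); everything else is a direct composition of monotone maps.
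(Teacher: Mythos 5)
Your proposal is correct and follows essentially the same route as the paper's proof: reduce the comparison of $C_i$ to that of $\Gamma_i$ (since the $\mathcal{O}(1/\epsilon)$ term is common to all nodes), use the closed form $\Gamma_i = (1-q_i)/q_i^2$, and conclude all three properties from the monotonicity of $\Gamma_i$ in $\psi_{i,T_\alpha}$. The only difference is one of rigour: the paper simply asserts this monotonicity (``a larger $\psi_{i,T_\alpha}$ implies a smaller $\Gamma_i$''), whereas you verify it explicitly by decomposing it into the three monotone links $\psi_{i,T_\alpha}\mapsto\varrho_i\mapsto q_i\mapsto\Gamma_i$ and carefully separating the fixed-normaliser comparison (desirability) from the moving-normaliser comparison (monotonicity).
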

Its proof with a formal discussion on the respective sufficient conditions are in \cref{sec:appendix-proofs}.
Symmetry implies that nodes with equal contributions have equal convergence complexities.
Strict desirability guarantees a node with a higher contribution has a lower convergence complexity. 
Strict monotonicity incentivizes the nodes to make high contributions because if a node makes a higher contribution $\psi'_{i, T_{\alpha} } > \psi_{i, T_{\alpha}}$, \emph{ceteris paribus}, its corresponding convergence complexity is lower.
\cref{fig:p-value-and-convergence-complexity} (right) empirically illustrates this result (using MNIST with label noise to vary the quality of data from different nodes) that the nodes with higher $\psi_{i,T_\alpha}$ converge faster (i.e., loss of local model decreases faster).

\paragraph{Asymptotic equality.}
The proposed \cref{equ:sampling-probs} mitigates the ``rich get richer'' by ensuring all nodes having controllable non-zero probabilities of being selected in each $t$ and resets it staleness to $0$. This results in the asymptotically equal convergence complexities. 
\begin{proposition}
\label{prop:equal-convergence}
Let $i_* \coloneqq \argmin_{i\in[N]} \psi_{i, T_{\alpha}}$. 
$(\Gamma_{i_*} = \mathcal{O}(1/\epsilon)) \implies (\forall i\in[N]\ \ C_i = \mathcal{O}(1/\epsilon))\ .$
\end{proposition}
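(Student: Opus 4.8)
The plan is to reduce the statement to the single inequality $\Gamma_i \le \Gamma_{i_*}$ for every $i\in[N]$. Once this is established the conclusion is immediate: using the decomposition $C_i \coloneqq \mathcal{O}(1/\epsilon) + \Gamma_i$ we obtain $C_i \le \mathcal{O}(1/\epsilon) + \Gamma_{i_*} = \mathcal{O}(1/\epsilon) + \mathcal{O}(1/\epsilon) = \mathcal{O}(1/\epsilon)$, where the middle equality invokes the hypothesis $\Gamma_{i_*} = \mathcal{O}(1/\epsilon)$ and the last uses that the sum of two $\mathcal{O}(1/\epsilon)$ quantities is again $\mathcal{O}(1/\epsilon)$. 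Thus the entire argument is a chain of monotonicity observations connecting the defining contribution estimate $\psi_{i,T_\alpha}$ to the expected staleness $\Gamma_i$, verifying that the node with the \emph{smallest} contribution estimate is precisely the node with the \emph{largest} expected staleness.

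First I would observe that the sampling probability $\varrho_i$ in \cref{equ:sampling-probs} is a strictly increasing function of $\psi_{i,T_\alpha}$, since the \texttt{softmax} with fixed temperature $\beta>0$ preserves the order of its inputs and has a common (positive) normalizing denominator. Because $i_* \coloneqq \argmin_{i\in[N]} \psi_{i,T_\alpha}$ by definition, it follows that $\varrho_{i_*} = \min_{i\in[N]} \varrho_i$. Next I would propagate this ordering through the two remaining maps. The selection probability $q_i \coloneqq 1-(1-\varrho_i)^k$ is strictly increasing in $\varrho_i\in(0,1)$, as its derivative $k(1-\varrho_i)^{k-1}$ is positive on this range; hence $q_{i_*} = \min_{i\in[N]} q_i$. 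Finally, recalling the closed form $\Gamma_i = (1-q_i)/q_i^2$ established earlier, I would check that the map $\Gamma(q) = (1-q)/q^2$ is strictly decreasing on $(0,1)$: its derivative is $\Gamma'(q) = (q-2)/q^3 < 0$ there. Composing these three monotonicities shows $\Gamma_i$ is a decreasing function of $\psi_{i,T_\alpha}$, so the minimizer $i_*$ of the contribution estimate is the maximizer of the expected staleness, i.e.\ $\Gamma_{i_*} = \max_{i\in[N]} \Gamma_i$, which is exactly the desired bound $\Gamma_i \le \Gamma_{i_*}$.

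There is no genuinely hard step; the proof is purely a composition of elementary monotonicities. The only points requiring care are the sign computation for $\Gamma'(q)$, which is what guarantees that a larger selection probability yields a \emph{smaller} staleness (so that the minimizer of $\psi$ maps to the maximizer of $\Gamma$ and not the reverse), and the preliminary verification that $\varrho_i\in(0,1)$ and consequently $q_i\in(0,1)$, so that all three monotonicity claims are applied on the domain where they hold. A final subtlety worth flagging is that only an \emph{upper} bound is needed: the argument shows the worst-case node $i_*$ dominates all others in staleness, so controlling $\Gamma_{i_*}$ alone suffices to control every $C_i$, and no two-sided or matching lower bound is required for the $\mathcal{O}(1/\epsilon)$ conclusion.
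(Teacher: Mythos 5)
Your proof is correct and follows essentially the same route as the paper: both reduce the claim to the single inequality $\Gamma_i \le \Gamma_{i_*}$ and then conclude $C_i \le C_{i_*} = \mathcal{O}(1/\epsilon)$. The only difference is presentational — the paper obtains $\Gamma_i \le \Gamma_{i_*}$ by citing \cref{proposition:fairness} (whose own proof rests on exactly the monotonicity of $\Gamma_i$ in $\psi_{i,T_\alpha}$), whereas you inline that monotonicity chain (softmax order preservation, $q_i$ increasing in $\varrho_i$, and $\Gamma(q)=(1-q)/q^2$ decreasing on $(0,1)$) explicitly.
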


\cref{prop:equal-convergence} (its proof is in \cref{sec:appendix-proofs}) states that given the sufficient condition, all nodes, regardless of the contributions, have asymptotically equal convergence complexities at $\mathcal{O}(1/\epsilon)$,\footnote{$\mathcal{O}(1/\epsilon)$ is an FL-dependent convergence complexity~\cite{Li2019-fedavg-noniid} and not specific to our framework.} 
which coincides with the complexity for the node with lowest contribution due to the fairness guarantee. 
The sufficient condition states in order to preserve equality, no node can be left stale for too long: no longer than it takes for $\theta_t$ to converge.

For a finite $\beta >0$, \cref{proposition:fairness} guarantees fairness in $C_i$ so we analyze the effect of $\beta$ on $\Gamma_i$ to find the suitable range for $\beta$. We use some synthetic $\boldsymbol{\psi}_{T_{\alpha}}$ to illustrate $\beta \to \infty$ has an equalizing effect (i.e., $\Gamma_i, \forall i \in [N]$ converges to the same value) while $\beta \to 0$ leads to the ``rich get richer'' inequality in \cref{fig:effects-of-beta-Gamma} where $i=1$ ($i=10$) has the lowest (highest) $\psi_{i,T_{\alpha}}$ with the correspondingly highest (lowest) $\Gamma_i$. Intuitively, if $\beta$ is too large and equalizes $\Gamma_i$ (thus also equalizes $C_i$), then it violates the fairness guarantee. Specifically, the strict desirability is violated, since node $i$ with strictly higher contribution than some node $i'$ does not receive strictly better incentive/lower complexity. Therefore, it reduces the effectiveness of the incentives. In contrast, if $\beta$ is too small, the poor/nodes with lower contributions start to `suffer'/have much larger expected staleness $\Gamma_i$ and be discouraged from collaborating. 
Ideally, $\beta$ should be set to achieve a proportionality $0 < r_1 \leq \psi_i/(1/\Gamma_i)  \leq r_2$ to guarantee fairness and preserve equality. 
Interestingly, $r_1=r_2$ coincides with Shapley fairness \citep[Definition 1]{Sim2020}.
In \cref{sec:appendix-proofs}, we formalize $\beta$'s selection via \cref{lemma:finding-beta}, further analyze the difficulties of preserving equality via $\beta$ and discuss how to prevent $i_*$ from having an arbitrarily bad $\Gamma_{i_*}$.

\begin{figure}[!ht]
    \centering
    \includegraphics[width=0.90\linewidth]{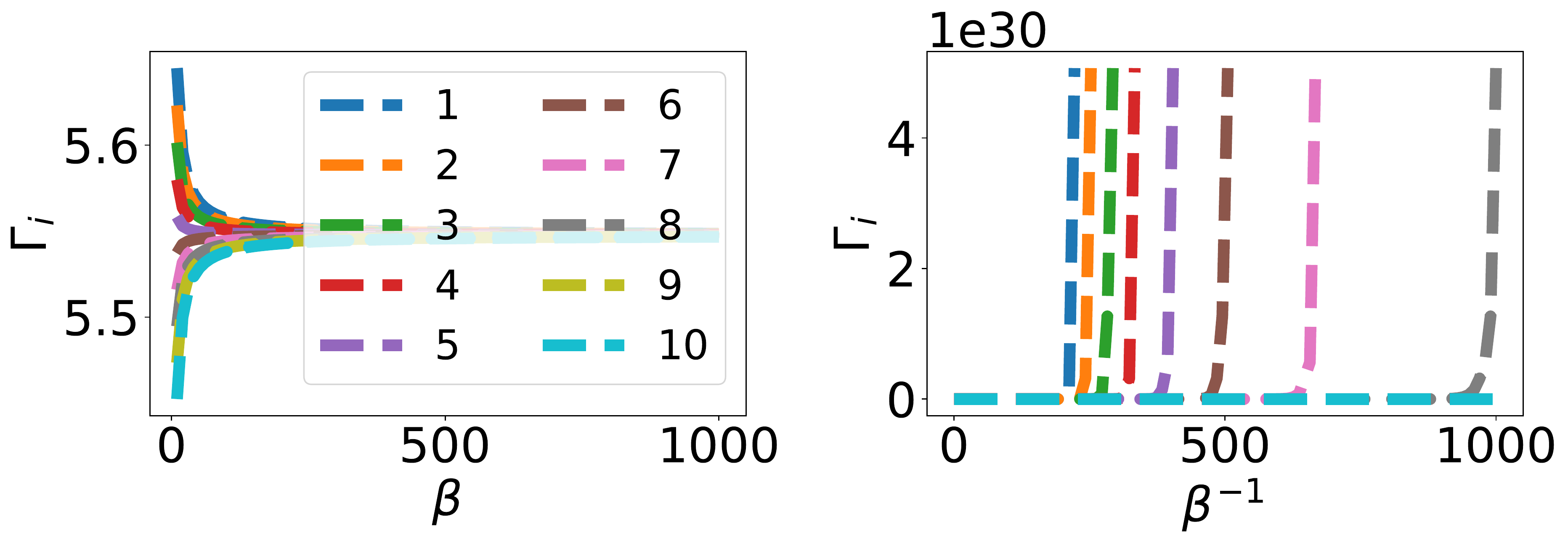}
    \caption{
    Effects of $\beta \to \infty $ (left) and $\beta \to 0$ (right) on $\Gamma_i$. 
    $N=10, k=4$, $\boldsymbol{\psi}_{i, T_{\alpha}} \propto i$ and $|\boldsymbol{\psi}_{T_{\alpha}}|_1 = 1$, for illustration.
    }
    \label{fig:effects-of-beta-Gamma}
\end{figure}

\section{Experiments}\label{sec:experiments}
Our implementation is publicly available at \url{ https://github.com/xqlin98/Fair-yet-Equal-CML}.
\subsection{Experimental Settings}
We investigate 
\textit{federated online incremental learning}~(FOIL)~\cite{LOSING20181261} and \textit{federated reinforcement learning}~(FRL)~\cite{qi2021federated} where new data become available and used during training \cite{Jin2021-streaming-data,Le2021_continuous_stream_fl}. 
\textbf{Data setting.}
At each $t$, each node randomly selects a batch $s_{i,t}$ from its local data $\mathcal{D}_i$ to simulate new data for training, and the aggregation of batches from all nodes at $t+1$ is used for testing the model updated at $t$.
\textbf{Online performance and evaluation metrics.}
As nodes are interested in the latest model $\theta_t$ during training instead of only the final performance, we adopt an online performance \citep{LOSING20181261}: 
$P_{i, \text{online}} \coloneqq t^{-1} \sum_{l=1}^{t} \mathbf{P}(\theta_{i,l-1}, s_l)$ where $s_{t} \coloneqq \bigcup_{i=1}^{N}s_{i,t}$ is the collection of all nodes' latest data and $\mathbf{P}(\theta, s)$ is a performance measure of $\theta$ on $s$~(e.g., test accuracy). 
To evaluate \textit{fairness}, we adopt the Pearson correlation coefficient: $\rho \coloneqq \text{Pearson}(\text{contributions}, \text{incentives})$ where a high value of $\rho$ suggests fairness~\cite{Xu2021-fair-CML}.
The contributions are specified for both learning tasks subsequently. For incentives, we use $P_{i, \text{online}}$~(higher is better incentive) or average staleness $\bar{\gamma}_i \coloneqq T^{-1} \textstyle \sum_{t=1}^T \gamma_{i,t}$~(lower is better incentive).
To evaluate \textit{equality}, we use standard deviation and worst-node performance \cite{li2019-qFFL}.
\textbf{Comparison baselines.} 
(a) FedAvg as the vanilla FL framework~\citep{BrendanMcMahan2017-FedAvg}, (b) $q$-fair FL~($q$FFL) \citep{li2019-qFFL} aiming at equalizing model performance across the nodes, (c) fair gradient rewards in FL~(FGFL) \citep{Xu2021-fair-CML} using model updates to ensure fairness, (d) game of gradients~(GoG)~\citep{Nagalapatti2021-game-of-gradients-fl} dynamically updating the sampling distribution according to SV to favor high-quality nodes, and (e) standalone training (Standalone) without collaboration. We also compare the communication costs and running times of these approaches in \cref{appendix:experiments}. Apart from these existing baselines, we compare a simple extension to FedAvg, namely Cons, which uses constraints to achieve equality in \cref{appendix:experiments}.

\subsection{Federated Online Incremental Learning}\label{sec:foil}

\textbf{Datasets and varying data quality to control true contributions.}
We perform experiments on the following datasets:
(a) image classification tasks: MNIST~\citep{mnist} and CIFAR-10~\citep{cifar10}, (b) medical image classification task: Pathmnist~(PATH)~\citep{medmnistv1} containing images of tissues related to colorectal cancer, (c) high frequency trading dataset~(HFT)~\citep{hft2018} as a time-series task to predict the increase/decrease of the bid price of the financial instrument, and (d) electricity load prediction 
task~(ELECTRICITY)~\citep{electricity2020} as a time-series task to predict the electricity load in German every $15$ minutes.
Recall in \cref{sec:introduction}, we described application scenarios which require real-time decision making (HFT and ELECTRICITY) or long-term medical data collection (PATH). 

To simulate the local data $\mathcal{D}_i$, each dataset is partitioned into $N$ subsets uniformly randomly and distributed to the nodes. 
In addition, we vary data qualities/contributions of different nodes by setting different levels of added noises (e.g., due to observational errors \citep{medicalerror2004,profit-allocation-FL-Song2019,Wang2020-SV-in-FL}), quantities of data, and levels of missing values in feature. Therefore, we can control the true contributions $\boldsymbol{\psi}^*$. So, we can verify whether the incentives are fair, i.e., the node with the best/least noisy data, receives the best incentive/online performance. For noise, we consider two types of noise, in features or labels. Specifically, for feature (label) noise, $\zeta_i$ proportion of data on node $i$ is added with independent zero-mean Gaussian noise with variance $1$~(has the label flipped to a random label) where the noise ratio $\zeta_i$ varies between $0\%$ and $90\%$ ($0\%$ and $20\%$) across nodes. For quantity, we follow a power law partition of data to have data unequally distributed to different nodes~\cite{BrendanMcMahan2017-FedAvg} and $\zeta_i$ is set to be the negative quantity of node $i$ (i.e., $-|D_i|$), so that a larger $\zeta_i$ indicates a lower contribution. For missing values, we select $\zeta_i$ (varies between $0\%$ to $90\%$) proportion of data on node $i$ to assign $0$ to $50\%$ of its features/pixels randomly. Due to space limitations, we present the comparative results for feature noise and defer the rest to \cref{appendix:experiments}.

\textbf{Hyper-parameter details.}
In $t$, each of the $N=30$ nodes trains on their own latest data $s_{i,t}$ for $E=1$ epoch. 
For \textit{contribution evaluation}, we use the stopping criterion by setting $\alpha=0.7, \tau=15$ on $10$ sub-sampled nodes.
The total number of iterations is the same for all baselines~(including ours): $150$ for CIFAR-10, HFT, ELECTRICITY and PATH and $130$ for MNIST.
For \textit{incentive realization}, $k=12$~(i.e., $40\%$ ratio) and $\beta=1/150$. 
For FedAvg, $q$FFL and FGFL, the selection ratio is $40\%$.
For MNIST, we use the same CNN model as in \cref{sec:setting}. The optimization algorithm is \textit{stochastic gradient descent} with a learning rate of $0.002$ on $s_{i,t}$ as a batch (for MNIST, $|s_{i,t}|=3$). Additional details on hyper-parameters are in \cref{appendix:setings-experiments}. 

\textbf{Results.}
We present the fairness results, averaged over $5$ random trials~(standard errors in brackets) in \cref{tab:by-class-vs-not-by-class-pearson} where a high correlation $\rho$ between $\boldsymbol{\zeta}$ and online loss/average staleness indicates fairness.
As both FedAvg and $q$FFL do not consider fairness as in SV, they do not perform well. 
Both FGFL and GoG achieve fairness inconsistently as they both begin realizing the incentives immediately from $t=1$ based on possibly inaccurate contribution estimates (further illustrated in \cref{appendix:experiments}), while our approach first ensures the accuracy in $\boldsymbol{\psi}_{T_\alpha}$.
To directly validate our theoretical results, \cref{table:pearson-staleness} in \cref{appendix:experiments} shows $\rho$ w.r.t.~the average staleness and our approach performs best overall.

\begin{table}[!ht]
    \caption{
    $\rho(\text{online loss}, \zeta)$
    in FOIL under the setting of feature noise. \textit{Higher} $\rho$ implies better fairness.
    }
    \centering
    \resizebox{\linewidth}{!}{
    \label{tab:by-class-vs-not-by-class-pearson}
    \begin{tabular}{c|ccccc}
    \toprule 
     & MNIST & CIFAR-10 & HFT & ELECTRICITY & PATH \\
     \midrule 
    FedAvg&-0.020(0.097)&0.137(0.049)&0.038(0.045)&-0.033(0.038)&0.135(0.026)\\
    qFFL&-0.022(0.114)&-0.109(0.140)&0.060(0.078)&0.036(0.126)&-0.236(0.030)\\
    FGFL&0.556(0.032)&0.313(0.081)&0.055(0.033)&0.476(0.057)&0.419(0.098)\\
    GoG&0.551(0.023)&0.130(0.067)&0.201(0.021)&0.512(0.027)&0.189(0.102)\\
    \midrule 
    Ours&\textbf{0.647(0.018)}&\textbf{0.400(0.069)}&\textbf{0.378(0.055)}&\textbf{0.676(0.018)}&\textbf{0.557(0.060)}\\
    \bottomrule 
    \end{tabular}
    }
\end{table}

\begin{table}[!ht]
\caption{Average/Minimum of online accuracy~(standard error) over all nodes under the setting of feature noise. For ELECTRICITY, we measure MAPE, so lower is better.}
\begin{subtable}{0.99\linewidth}
    \centering
    \caption{Average of online accuracy}
    \label{table:average-online-acc-feature-noise}
    \resizebox{\linewidth}{!}{
    \begin{tabular}{c|ccccc}
    \toprule 
     & MNIST & CIFAR-10 & HFT & ELECTRICITY & PATH \\
     \midrule 
    FedAvg&0.483(0.019)&0.166(0.011)&0.499(0.046)&1.408(0.081)&0.255(0.004)\\
    qFFL&0.101(0.011)&0.100(0.004)&0.281(0.079)&1.413(0.055)&0.101(0.011)\\
    FGFL&0.485(0.018)&0.169(0.010)&0.496(0.056)&1.571(0.090)&0.154(0.009)\\
    GoG&0.572(0.015)&0.193(0.006)&0.556(0.016)&1.394(0.032)&0.288(0.004)\\
    Standalone&0.481(0.013)&0.153(0.004)&0.540(0.014)&1.581(0.083)&0.202(0.003)\\
    \midrule
    Ours&\textbf{0.611(0.009)}&\textbf{0.195(0.007)}&\textbf{0.581(0.014)}&\textbf{0.139(0.002)}&\textbf{0.302(0.005)}\\
    \bottomrule 
    \end{tabular}
    }

    \caption{Minimum of online accuracy}
    \label{table:minimum-online-acc-feature-noise}
    \resizebox{\linewidth}{!}{
    \begin{tabular}{c|ccccc}
    \toprule 
     & MNIST & CIFAR-10 & HFT & ELECTRICITY & PATH \\
     \midrule 
    FedAvg&0.478(0.020)&0.165(0.011)&0.497(0.046)&1.405(0.081)&0.250(0.005)\\
    qFFL&0.101(0.011)&0.100(0.004)&0.281(0.079)&1.413(0.055)&0.101(0.011)\\
    FGFL&0.437(0.030)&0.167(0.010)&0.493(0.058)&1.497(0.071)&0.153(0.008)\\
    GoG&0.553(0.014)&0.189(0.006)&0.548(0.017)&1.383(0.032)&0.271(0.004)\\
    Standalone&0.279(0.024)&0.131(0.006)&0.515(0.017)&1.281(0.065)&0.131(0.006)\\
    \midrule 
    Ours&\textbf{0.603(0.010)}&\textbf{0.193(0.007)}&\textbf{0.581(0.014)}&\textbf{0.139(0.002)}&\textbf{0.298(0.005)}\\
    \bottomrule 
    \end{tabular}
    }
\end{subtable}
\end{table}

\cref{table:average-online-acc-feature-noise} shows our approach performs the best overall w.r.t.~$P_{i,\text{online}}$, as it carefully manages the staleness among local models so that no one is left behind for too long.
\cref{fig:cml_vs_ours_convergence_path} (right) under the setting in \cref{sec:foil} shows FGFL can lead to non-convergence (mentioned in introduction).
It implies some nodes may leave before the end to prevent their performance from deteriorating, and reduces the overall effectiveness of the collaboration. In contrast, our approach (\cref{fig:cml_vs_ours_convergence_path} left) ensures all nodes eventually have the optimal model (i.e., coordinate node) and we compare our theoretical fairness guarantee with theirs in \cref{sec:appendix-proofs}.
Moreover, the results for the poorest/best-performing nodes in 
\cref{table:minimum-online-acc-feature-noise,table:maximum-online-acc} (\cref{appendix:experiments}) show our approach gives a better worst/best-performance than the baselines. Moreover, even the best-performing nodes improve their performance (\cref{table:maximum-online-acc} in \cref{appendix:experiments}), which verifies that incentivizing the less resourceful nodes to join can improve the overall performance.

\begin{figure}[!ht]
    \centering
    \includegraphics[width=0.45\linewidth]{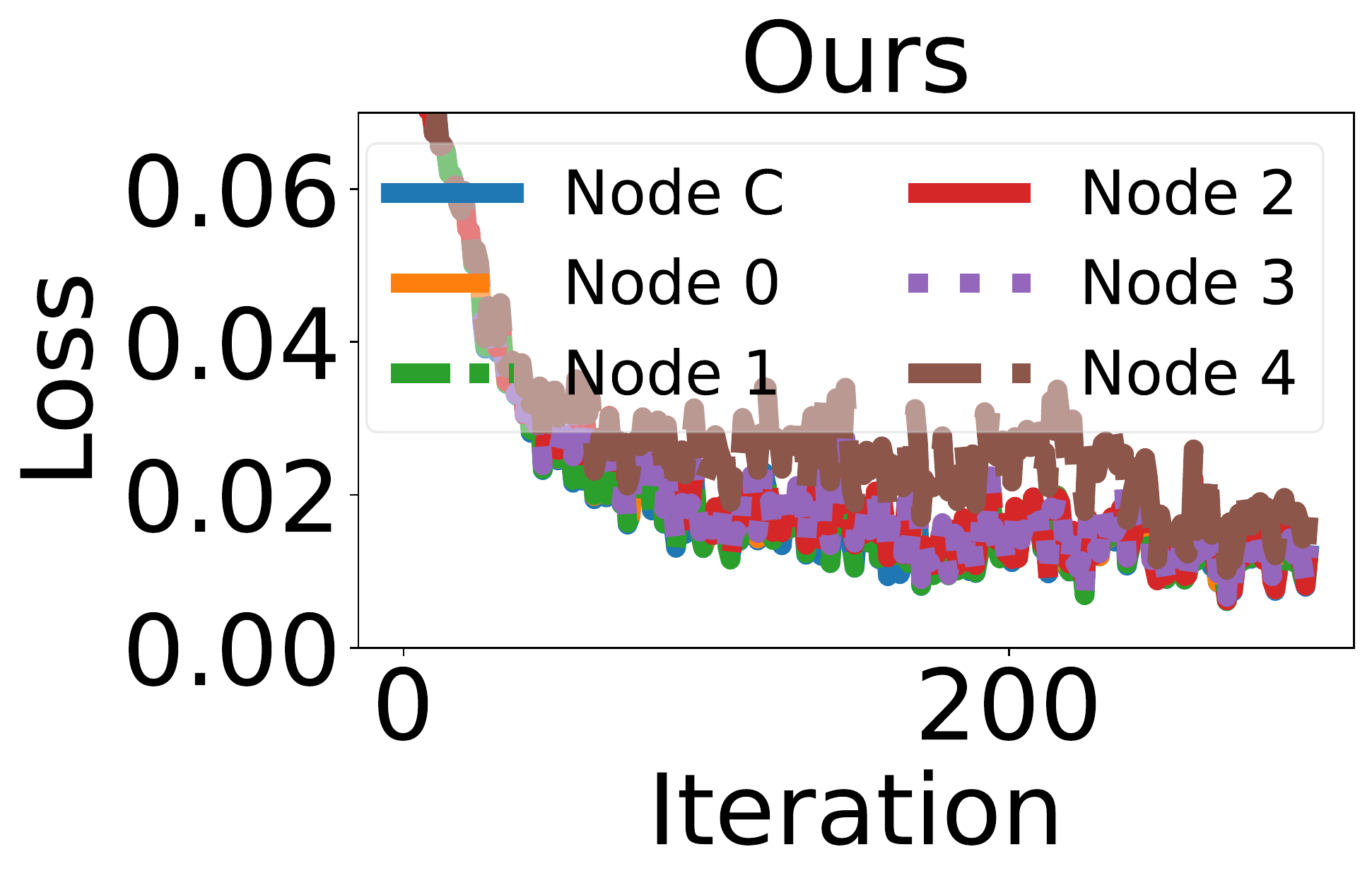}
    \includegraphics[width=0.45\linewidth]{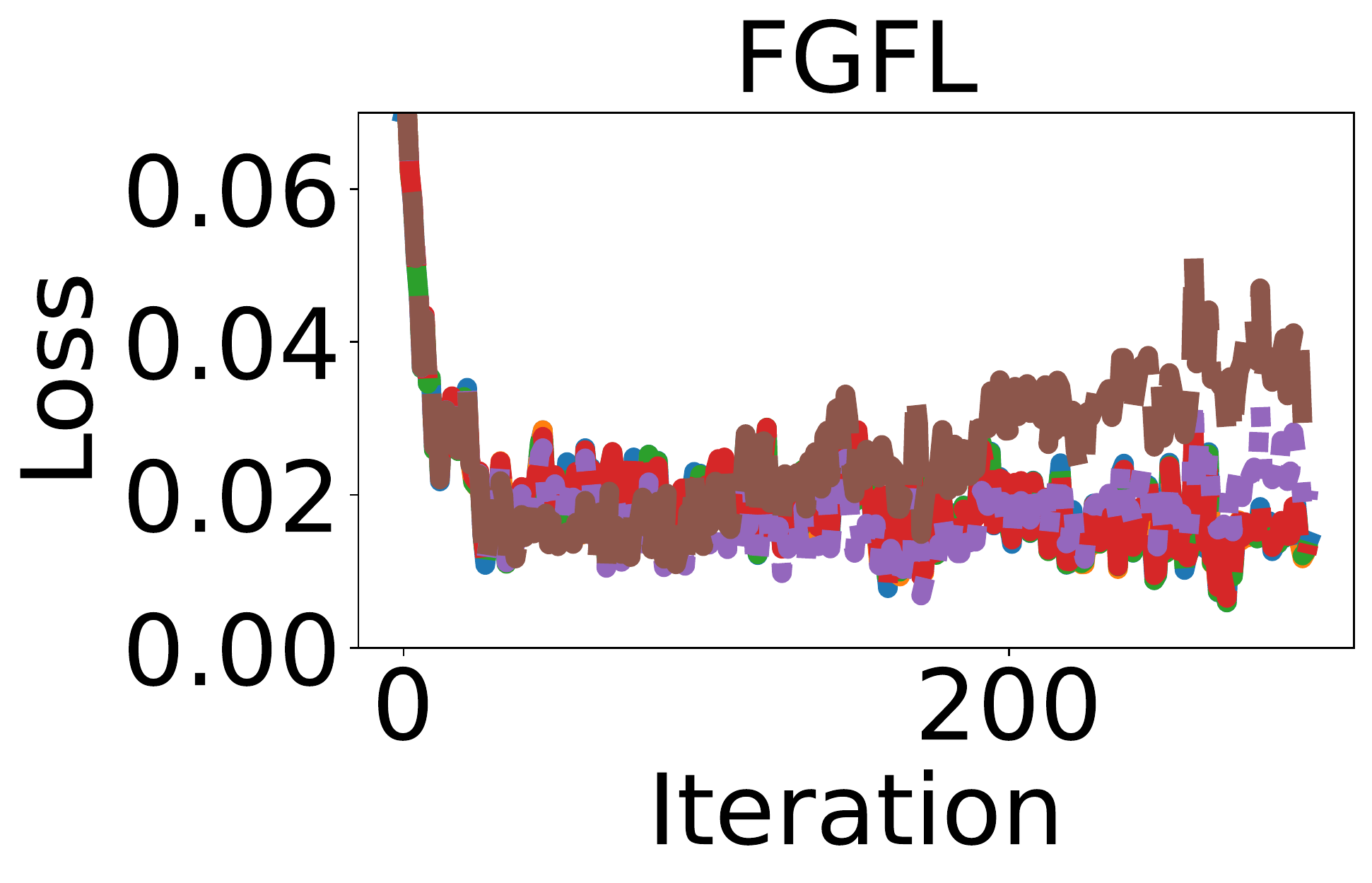}
    \caption{
    Validation loss (over $5$ random trials) of each node of Ours vs.~FGFL. Node C is the coordinator node. The setting is label noise, $N=5, \tau=10, \alpha=0.5$.
    }
    \label{fig:cml_vs_ours_convergence_path}
\end{figure}

\begin{figure}[!ht]
    \centering
    \includegraphics[width=0.45\linewidth]{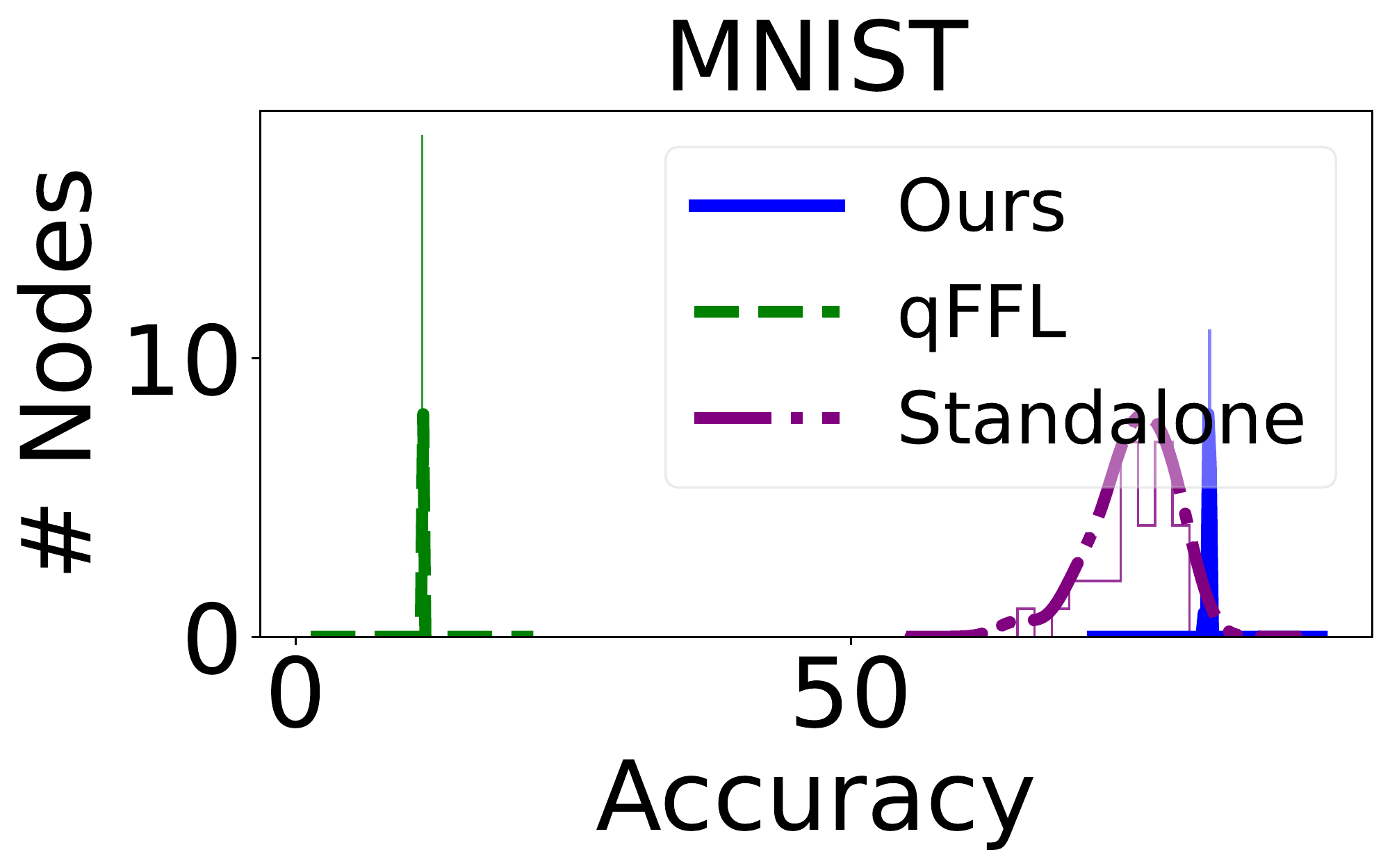}
    \includegraphics[width=0.45\linewidth]{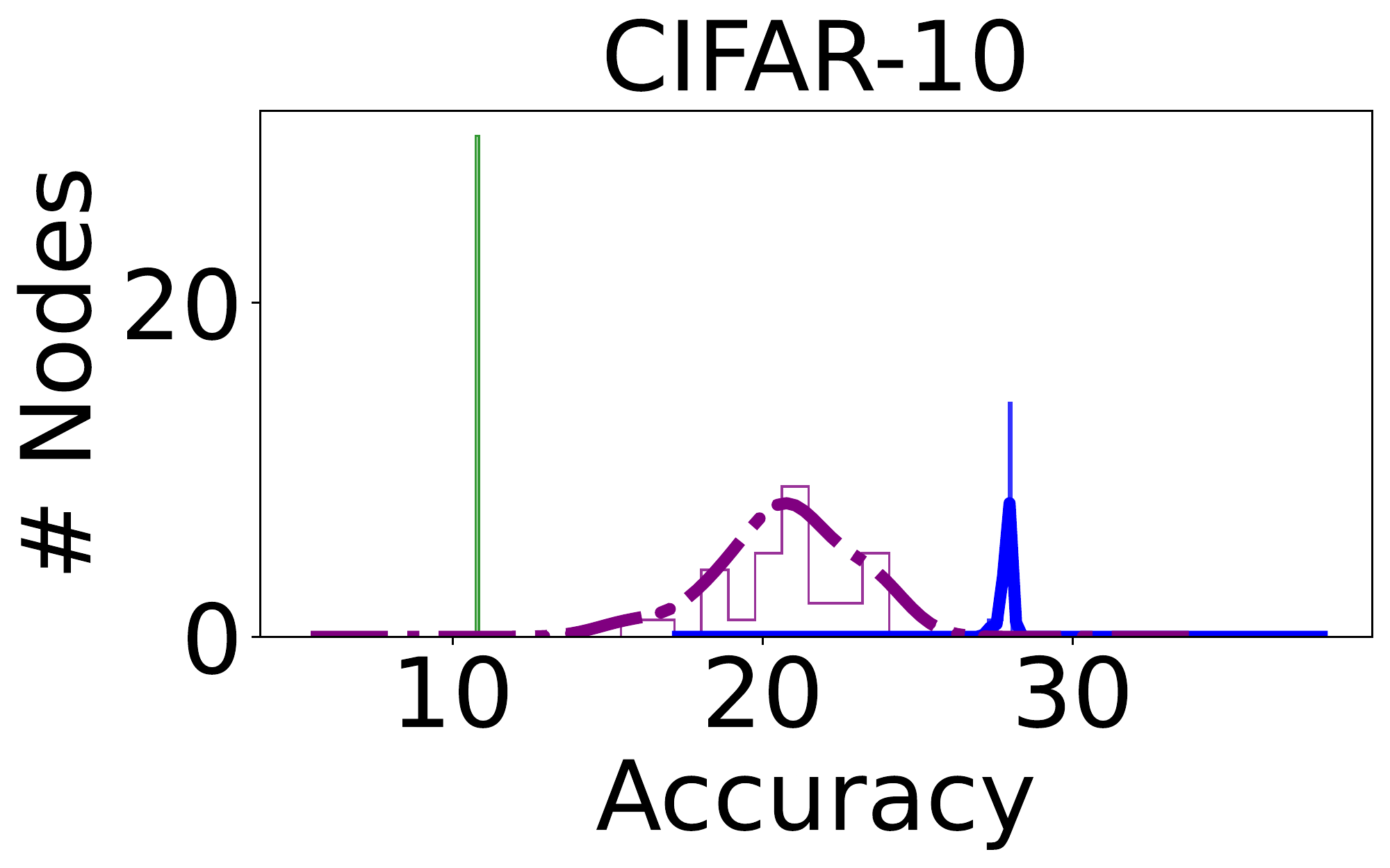}
    \caption{
    Distribution of average (validation) accuracy of final $10$ iterations (over $5$ random trials) under \textit{label} noise. 
    }
    \label{fig:equality-comparison}
\end{figure}

Lastly, \cref{table:minimum-online-acc-feature-noise} shows our approach performs the best overall in preserving the equality in terms of max-min \cite{li2019-qFFL} (i.e., ensuring the worst-performing node has a high online and final performance). 
Moreover, w.r.t.~performance variation \cite{li2019-qFFL}, \cref{fig:equality-comparison} (and quantitatively in \cref{table:standard-dev-online-accu} in \cref{appendix:experiments}) shows our method performs competitively to $q$FFL, as the accuracy is concentrated with small variation. However, $q$FFL performs poorly in terms of validation accuracy (in \cref{fig:equality-comparison} and \cref{table:average-online-acc-feature-noise}) because of its exponentiation of \textit{local objectives} which are biased due to noise in data (i.e., optimizing the local objective for the node having noisiest data with higher weight impairs learning performance for the others). We provide additional comparisons of the fairness and equality performance of these approaches under different degrees of heterogeneity among the local data of the nodes in \cref{appendix:experiments}.

\textbf{Empirical fairness vs.~equality trade-off via $\beta$.}
We empirically find a suitable range of $\beta$ to be $[1/150,1]$ as shown in \cref{table:fairness-varying-beta-feature-noise} and verify the results in \cref{fig:effects-of-beta-Gamma}. Specifically, under the setting of feature noise, we find that $\beta \in [1/150,1]$ produces the most balanced results for fairness (via the correlation coefficient $\rho (\text{online loss}, \boldsymbol{\zeta})$) and equality (via the standard deviation/std of online accuracy).\footnote{We adopt online accuracy to illustrate the rates at which the nodes converge are comparable (i.e., asymptotic equality) instead of final test accuracy which represents the (asymptotic) performance (\cref{table:fairness-varying-beta-final-acc} in \cref{appendix:experiments})}
We highlight \cref{proposition:fairness} guarantees fairness w.r.t.~the \textit{expected} convergence complexities. Hence, as $\beta$ increases, while the expectations $\Gamma_i$ and $C_i$ are guaranteed to be fair, the actual realized model performance (e.g., online loss) can observe lower fairness.

\begin{table}[!ht]
	\centering
    \setlength{\tabcolsep}{2pt}
    \caption{
    $\rho(\text{online loss}, \boldsymbol{\zeta})$ (std of online accuracy).
    Higher $\rho$ (lower std) means better fairness (equality).
    }
    \label{table:fairness-varying-beta-feature-noise}
    \resizebox{\linewidth}{!}{
    \begin{tabular}{c|ccccc}
    \toprule 
    $\beta$ & MNIST & CIFAR-10 & HFT & ELECTRICITY & PATH \\
     \midrule 
    1/350&0.642(9.52e-03)&0.490(1.86e-03)&0.448(6.04e-05)&0.581(1.63e-03)&0.516(3.18e-03)\\
    1/150&0.647(2.2e-03)&0.400(6.5e-04)&0.378(5.8e-05)&\textbf{0.676}(2.95e-04)&\textbf{0.557}(1.41e-03)\\
    1/100&\textbf{0.705}(1.13e-03)&\textbf{0.507}(3.80e-04)&0.415(1.05e-04)&0.572(1.63e-04)&0.312(1.15e-03)\\
    1/50&0.641(5.66e-04)&0.297(2.67e-04)&\textbf{0.476}(\textbf{8.88e-17})&0.286(8.17e-05)&0.282(8.32e-04)\\
    1/20&0.466(4.91e-04)&0.131(3.05e-04)&0.217(1.84e-06)&0.166(\textbf{5.15e-05})&0.127(7.91e-04)\\
    1/10&0.120(4.97e-04)&0.034(2.91e-04)&0.090(1.24e-05)&0.068(6.01e-05)&0.018(7.49e-04)\\
    1&0.171(\textbf{3.79e-04})&0.063(\textbf{2.43e-04})&-0.187(1.30e-05)&0.193(6.11e-05)&0.082(\textbf{6.41e-04})\\
    1000&-0.005(3.88e-04)&-0.185(2.77e-04)&-0.079(1.61e-05)&-0.034(5.18e-05)&0.157(7.19e-04)\\
    \bottomrule 
    \end{tabular}
    }
\end{table}

\subsection{Federated Reinforcement Learning}
We investigate FRL as a natural setting where the data stream in as the agent explores a complex environment and collects its observed states, actions and reward signals. Having a latest model is beneficial in FRL as it guides the agent to explore the environment more ``cleverly'' instead of randomly.
We investigate three Atari games: Breakout, Pong, and SpaceInvaders where all $N=5$ (and $k=2$) nodes explore copies of the same game in parallel for $T=450$ iterations.
We add different levels of noise to $\zeta_i$ proportion of the observed states (images with pixel values from $[0,255]$ are added with zero-mean Gaussian noise with variance $50$ to each pixel) or reward signals (discrete values from $\{-1,0,1\}$ are randomly reassigned) by each node to control their true contributions.
$\boldsymbol{\zeta} = [0.6, 0.4, 0.2, 0, 0]$ for Breakout and SpaceInvaders and $\boldsymbol{\zeta} = [0.2, 0.1, 0.05, 0, 0]$ for Pong (more noise sensitive).
We adopt the Deep Q-Networks~(DQN)~\cite{mnih2013playing} with minor modifications (smaller learning rate and memory size).
For our approach, $\alpha=0.95, \tau=20$, and $\beta=0.01$. Additional details on other hyper-parameters are in \cref{appendix:setings-experiments}.

\textbf{Online score evaluation and fairness results.} 
We evaluate $\theta_{i,t}$ via its average game score in $5$ episodes by following an $\epsilon$-greedy~(for $\epsilon=0.02$) policy as $\mathbf{P}(\theta_{i,t})$ for computing $P_{i,\text{online}}$.
\cref{table:fedrl-noise-scores-benchmark} (brackets indicate standard errors over $3$ independent trials) shows while FGFL and GoG perform competitively, our approach performs best overall.

\begin{table}[!ht]
    \centering
    \caption{
    $\rho(\text{online score}, \zeta)$ 
    in FRL under the settings of reward noise and state noise. 
    \textit{Lower} $\rho$ indicates better fairness. 
    }
    \label{table:fedrl-noise-scores-benchmark}
        \resizebox{\linewidth}{!}{
    \begin{tabular}{c|ccc|ccc}
    \multicolumn{1}{c}{}  & \multicolumn{3}{c}{\bf Reward Noise} & \multicolumn{3}{c}{\bf State Noise}\\
     \toprule
     & Breakout & Pong & SpaceInvaders &  Breakout & Pong & SpaceInvaders \\
     \midrule
    FedAvg&0.169(0.755)&0.329(0.298)&-0.036(0.371)
    &-0.160(0.343)&0.018(0.257)&0.164(0.262)\\
    qFFL&-0.229(0.251)&0.136(0.506)&0.407(0.361)
    &-0.049(0.308)&-0.486(0.360)&-0.173(0.322)\\
    FGFL&-0.884(0.018)&-0.861(0.028)&-0.377(0.223)&-0.858(0.049)&-0.760(0.002)&-0.054(0.043)\\
    GoG&-0.898(0.040)&-0.869(0.060)&-0.399(0.167)&-0.481(0.172)&0.328(0.352)&\textbf{-0.298(0.390)}\\
    \midrule
    Ours&\textbf{-0.900(0.032)}&\textbf{-0.946(0.010)}&\textbf{-0.646(0.291)}&\textbf{-0.978(0.012)}&\textbf{-0.817(0.032)}&0.283(0.261)\\
    \bottomrule
    \end{tabular}
    }
\end{table}

\textbf{Additional RL-specific experiments.} 
We investigate the effects of two RL parameters, the memory size and exploration ratio on the contributions in \cref{fig:RL-additional} for Breakout (more results in \cref{appendix:experiments}).
\cref{fig:RL-additional} (left) shows the agent with smallest memory size has the highest contribution as using a large memory size hides the critical transitions among the redundant trivial transitions and leads to inefficient learning \cite{zhang2018deeper}.
\cref{fig:RL-additional} (right) shows the agent with a moderated exploration ratio 
has the highest contribution. The agent with no exploration (blue) has gradually higher contribution because it starts contributing more by exploitation \textit{after} the environment has been sufficiently explored by others collectively.

\begin{figure}[!ht]
    \centering
    \includegraphics[width=0.49\linewidth]{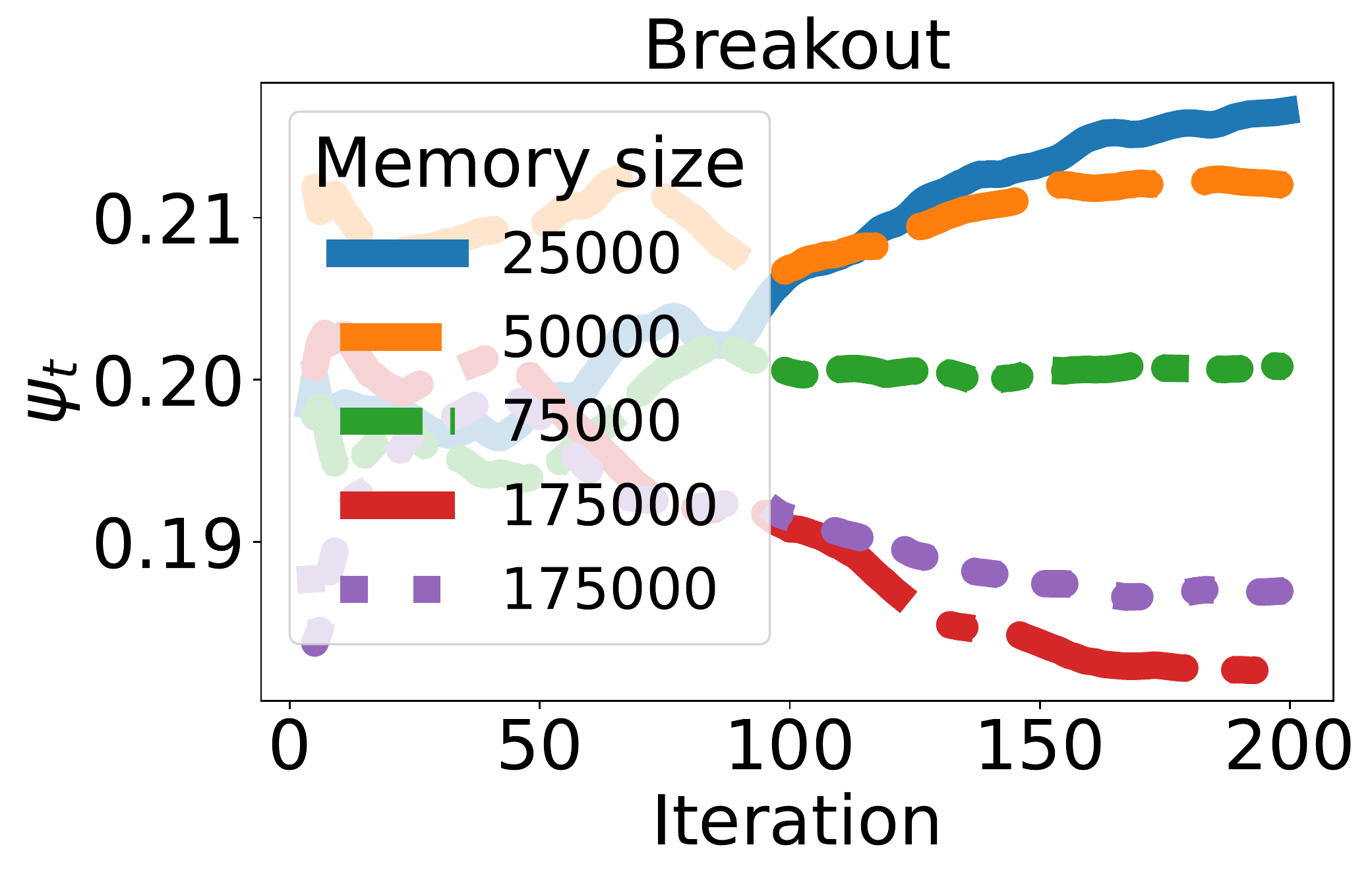}
    \includegraphics[width=0.49\linewidth]{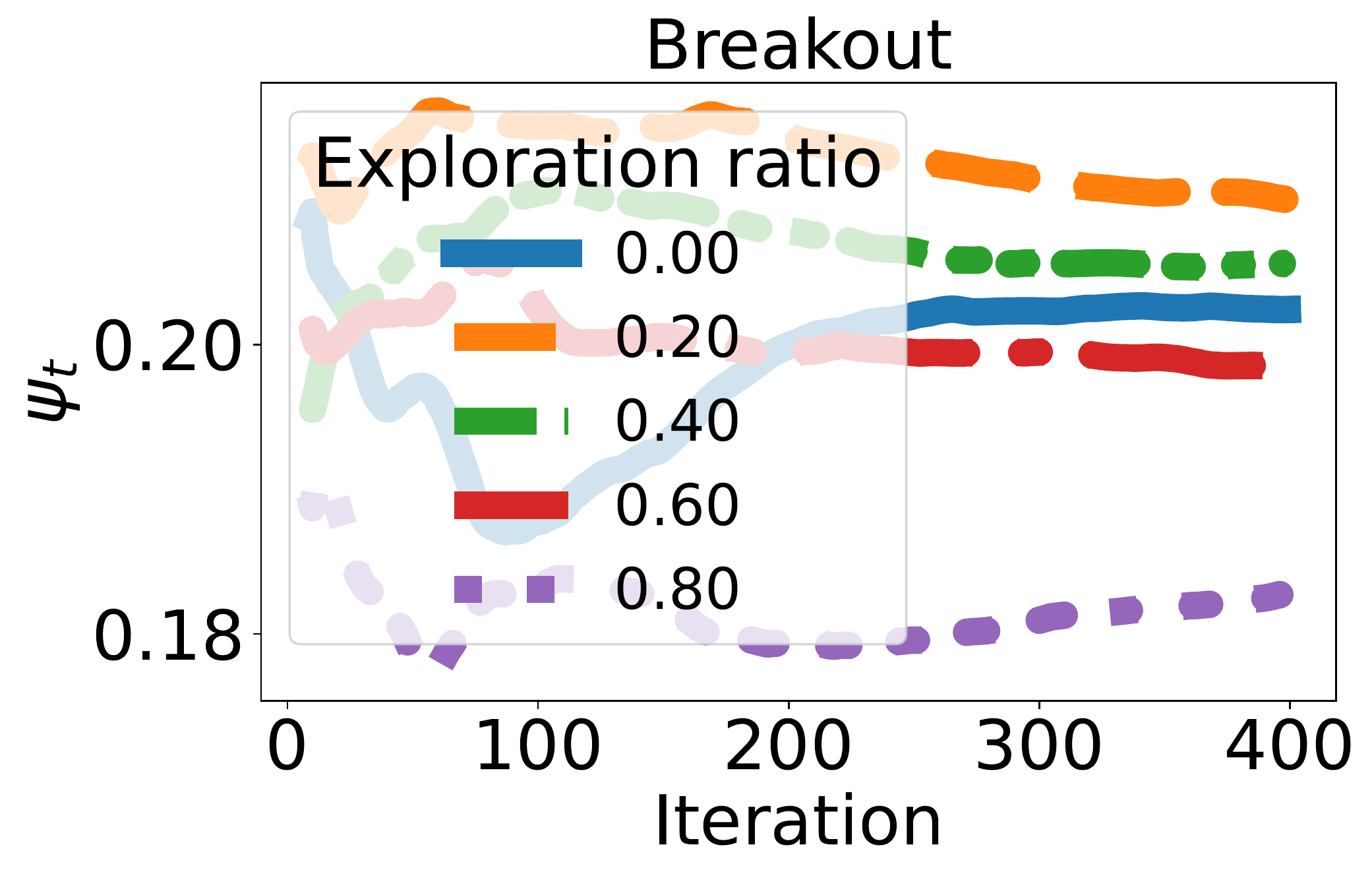}
    \captionof{figure}{Left (right): Contribution evaluation result $\boldsymbol{\psi}_t$ of nodes with different memory size (exploration ratio).
    }
    \label{fig:RL-additional}
\end{figure}

\section{Related Work}\label{sec:related-work}
Designing fair incentives is growing as an important research direction in collaborative learning, and FL in particular \citep{Chen2020-FL-incentive,Cong2020-FL-incentive,Yu-et-al:2020AIES}, through the means of external resources such as monetary incentives \citep{Richardson2020-FL-incentive,Zhang2021-FL-reverse-auction} and inherent resources such as model updates \citep{Nagalapatti2021-game-of-gradients-fl,Xu2021-fair-CML}. 
However, few works have specifically considered the accuracy in the contribution estimates based on which the fair incentives are designed. Importantly, this can negatively affect the fairness of said incentives (shown in \cref{appendix:experiments}). We address this by leveraging the classic explore-exploit paradigm,\footnote{The explore-exploit paradigm is suitable for problems where an accurate modeling (e.g., estimation) is first established and subsequently used in decision making  \citep{Krause2007-explore-vs-exploit}.} which has not been considered for fair incentive design in FL. Some other works have focused on analyzing incentives-aware collaboration by examining the equilibrium~\citep{blum2021one} and stable formation of coalitions~\citep{donahue2021model} in FL. However, these works do not consider the design of the incentive mechanism to achieve fairness.

As highlighted by existing works \citep{Li2021_TERM,Li2021_ditto,li2019-qFFL,Mohri2019_afl}, equality is also important in FL.\footnote{Note while \citep{Li2021_ditto,li2019-qFFL} use the keyword fairness, it is formalized via equality in performance across nodes, and does \emph{not} refer to the fairness in incentive design.} Our investigation also confirms that an undesirable ``rich get richer'' phenomenon (i.e., inequality) can arise from the fairness guarantee. Hence, we introduce an equality-preserving perspective in our fair incentive design. \cref{sec:experiments} empirically compares our proposed method against \citep{li2019-qFFL} which generalizes/is representative of \citep{Mohri2019_afl,Li2021_TERM,Li2021_ditto}.

We highlight our setting assumes \emph{honest} nodes (e.g., they do not strategize) which is commonly assumed in current works~\citep{Sim2020, Xu2021-fair-CML, Seb2022-incentivizing}. This assumption is also supported by quite a few application scenarios. For example, nodes can be hospitals~\citep{Flores2020_nvidia_FL_medical, Leersum2013_medicine_weekly} or regulated financial institutions~\citep{Miller1996_finance_monthly}. \citet{Yuan2021_incentive_FL_long_term,Zhang2022_FL_long_term} relax this assumption while they do not consider fairness. It is an interesting future direction to relax the honest node assumption while guaranteeing fairness.

\section{Discussion and Future Work}
We propose a novel framework to guarantee fair incentives for the nodes in federated learning (to incentivize the more resourceful nodes) while preserving asymptotic equality (to encourage the less resourceful nodes). Interestingly, using a single equalizing coefficient $\beta$, our framework sheds some light on the intricate relationship between fairness and equality in collaborations with finite resources (e.g., the number of nodes selected to synchronize is finite). Achieving absolute equality violates fairness and reduces the effectiveness of incentives. On the other hand, enforcing a larger improvement in incentives due to some increase in contributions can guarantee fairness but potentially creates/worsens inequality and thus discourages the less resourceful nodes from collaborating. We describe how to find a suitable range for $\beta$ and empirically verify its effectiveness in the fairness vs.~equality trade-off.

For future work, it is interesting to explore whether such fairness-equality relationship arises in other collaboration paradigms such as collaborative supervised learning \citep{Sim2020,Phong2022_CML}, unsupervised learning \citep{Seb2022-incentivizing}, parametric learning \citep{agussurja2022_bayesian_para}, (personalized) model fusion \citep{Lam2021ModelFusion, Hoang2021_AID}, active learning \citep{Xu2023ActiveLearning}, reinforcement learning \citep{fan2021faulttolerant} or causal inference \citep{Qiao2023}. Regarding fairness, it also is interesting to explore whether or precisely how a larger number of nodes affects the fairness via the Shapley value \citep{zhou2023}. Moreover, as we adopt the gradient alignment (via the inner product of the gradient vectors) to determine the contributions of the nodes, it is also interesting to investigate the effectiveness of other data valuation methods \citep{sim2022_ijcai} such as \citep{data-shapley-Ghorbani2019, Wu2022DAVINZDV} when a validation dataset is available or \citep{Xu2021ValidationFA} specifically for regression tasks.

\section*{Acknowledgements}
This research/project is supported by the National Research Foundation Singapore and DSO National Laboratories under the AI Singapore Programme (AISG Award No: AISG$2$-RP-$2020$-$018$).
Xinyi Xu is supported by the Institute for Infocomm Research of Agency for Science, Technology and Research (A*STAR).

\bibliography{biblio}
\bibliographystyle{icml2023}

\newpage
\onecolumn
\appendix
\section{Algorithm and Overview} \label{appendix:algorithm}
\begin{table}[H]
\centering
\caption{Specific notations}

\begin{tabular}{|l|l|}
\hline
 Notation &  Meaning  \\ \hline
 $t$ & Iteration index\\
 $N$ & Number of nodes  \\ 
 $\theta_t$ & Global model parameter in iteration $t$  \\
 $\theta_{i,t}$ & Local model parameter for node $i$ in iteration $t$ \\
 $\mathcal{D}_i$ & Local dataset for node $i$ \\
 $\mathbf{L}(\theta;\mathcal{D}_i)$ & Loss function with parameter $\theta$ on dataset $D_i$ \\
 $\mathbf{J}(\theta)$ & Global loss function \\
 $p_i$ & Data size weighted coefficient for node $i$ \\
 $k$ & Number of nodes selected in each iteration \\
 $s_{i,t}$ & Mini-batch data from node $i$ in iteration $t$ when using Stochastic Gradient Decent \\
 $s_t$ & Aggregated mini-batch data from all nodes in iteration $t$: $s_t \coloneqq \bigcup_{i=1}^NS_{i,t}$. \\
$\Delta \theta_{i,t}$ & Gradient of model parameter computed with mini-batch data $s_{i,t}$\\
$\Delta \theta_t$ & Aggregated gradient computed by coordinator in iteration $t$ \\
$[N]$ & Grand coalition formed by all nodes $\{i\}_{1,\dots, N}$\\
$\mathcal{S}$ & Coalitions of nodes $\mathcal{S} \subseteq [N]$ \\
$\mathbf{U}(\mathcal{S})$ & Utility function that takes the coalition $\mathcal{S}$ as inpute \\
$\phi_{i,t}$ & Shapley value for node $i$ in iteration $t$ \\
$\boldsymbol{\phi}_{t}$ & Vector of Shapley values for all nodes in iteration $t$: $\boldsymbol{\phi}_t \coloneqq \{\phi_{i,t};i\in [N]\}$ \\
$\psi_{i,t}$ & Contribution estimate for node $i$ up to iteration $t$\\
$\boldsymbol{\psi}_t$ & Vector of contribution estimates for all nodes up to iteration $t$: $\boldsymbol{\psi}_t \coloneqq \{\psi_{i,t};i\in [N]\}$ \\
$\psi_i^*$ & True contribution for node $i$: $\psi_i^* = \lim_{t \rightarrow \infty}\psi_{i,t}$ \\
$\boldsymbol{\psi}^*$ & Vector of true contributions: $\boldsymbol{\psi}^* = \{\psi_i^*; i\in [N]\}$ \\
$\beta$ & Equalizing coefficient (a.k.a. the temperature parameter) \\
$\varrho_i$ & Probability of node $i$ been selected in exploitation phase \\
$\Gamma_i$ & Expected staleness for node $i$ in exploitation phase\\
$C_i$ & Convergence complexity for node $i$ \\
$\alpha$ & Significance level of hypothesis testing for stopping criterion\\
$\tau$ & Windows size for hypothesis testing \\
$T_{\alpha}$ & Stopping iteration for exploration phase with significance level of $\alpha$. \\
$P_{i,online}$ & Online performance for node $i$ \\
$\Bar{\gamma_i}$ & Average staleness in the experiment for node $i$.\\
$\zeta_i$ & Level of noise/missing values/negative quantity for node $i$ in experiment. \\
$\boldsymbol{\zeta}$ & Vector of level of noise/missing values/negative quantity in experiment: $\boldsymbol{\zeta} \coloneqq \{\zeta_i;i\in [N]\}$. \\
 \hline
\end{tabular}
\label{table:specific_notations}
\end{table} 
\cref{alg:overview} outlines our framework where lines $1-3$ correspond to contribution evaluation (explore) in \cref{sec:motivation} and line $4$ corresponds to incentive realization (exploit) in \cref{sec:stage2}. 
Our proposed algorithm first performs contribution evaluation until $\boldsymbol{\psi}_t$ converge, after which uses $\boldsymbol{\psi}_t$ to design a sampling distribution as in \cref{equ:sampling-probs} and follows it in the remaining training for realizing the incentives.

\begin{algorithm}[!ht]
\caption{Framework Overview}
\label{alg:overview}
\begin{algorithmic}[1]
\STATE \textit{Contribution Evaluation}:
\WHILE{$\boldsymbol{\psi}_{t}$ not converged via \cref{prop:hypothesis-testing}}
\STATE Obtain $\boldsymbol{\psi}_{t+1}$ as in \cref{equ:cumulative-sv-psi}
\ENDWHILE
\STATE Perform \textit{Incentive Realization} via \cref{equ:sampling-probs} w.r.t.~$\boldsymbol{\psi}_{t}$
\end{algorithmic}
\end{algorithm}

\section{Additional Details on Experiment Settings} \label{appendix:setings-experiments}

\subsection{Additional Description on Resourcefulness of Nodes}
We use the term resourcefulness to describe a node's data collection capability, in terms of both quantity and quality of data. For example, a more resourceful node can utilize more resources to clean its data, fix missing values or features, or collect more data. These are all explicitly considered in our experimental settings in FOIL (\cref{sec:experiments}). Resourcefulness is a conceptual description of the node's data (and thus its contribution in FL) and it is made concrete in our implementation via the above specific settings. Therefore, for simplicity, we treat the contributions of the nodes, which are observable in FL, to be an indirect surrogate to the resourcefulness of the nodes, which is \emph{not} observable. With this, a node $i$ that chooses to expend a considerable amount of resources (to collect, clean and preprocess the data) and contribute to the collaboration (by training and uploading model updates on the high-quality data) will be recognized via high $\varphi_i$ and rewarded with a better convergence complexity. On the other hand, if a node does not wish to contribute at all, it can mean the node does not join the collaboration in the first place. However, if a node does want to collaborate but is unfortunately not very resourceful (e.g., on a low budget), then it is important (for equality) that the collaboration does not magnify the effect of this node's less resourcefulness (or widen the inequality gap between the resourceful and less resourceful), hence our equality-preserving perspective.

\subsection{Additional Hyper-parameters for Federated Online Incremental Learning}

The model architectures for the datasets are as follows: (a) CNN with $2$ convolution layers followed by $2$ fully connected (FC) layers, each convolutional layer is followed by a max-pooling layer for MNIST. (b)
CNN with $2$ convolution layers and $3$ FC layers, each convolutional layer is followed by a max-pooling layer for CIFAR-10 and PATH.
(c) Multi-layer perception (MLP) with $3$ FC layers for HFT.
(d) Recurrent neuron network with hidden size of $40$ for ELECTRICITY.

For framework-dependent hyper-parameters, $q=0.1$ in qFFL and the normalization coefficient $\Gamma= 0.01$ in FGFL and the altruism degree $\beta_{\text{altruism}}=2$. Other framework-independent hyper-parameters are as follows.
Except for ELECTRICITY (regression) which uses mean squred error, the other datasets use cross entropy for the loss function. Except MNIST and PATH with a learning rate of $2e^{-3}$, the other datasets use a consistent learning rate of $2e^{-4}$. The size of latest available data $|s_{i,t}|$ for node $i$ in iteration $t$ is $3,6,14,4$ and $7$ for MNIST, CIFAR-10, HFT, ELECTRICITY, and PATH, respectively. All models use SGD as the optimization algorithm and use the batch size is $|s_{i,t}|$.

All the experiments have been run on a server with Intel(R) Xeon(R) Gold 6226R CPU @ $2.90$GHz processor, $256$GB RAM and $4$ NVIDIA GeForce RTX 3080's.

\subsection{Additional Hyper-parameters for Federated Reinforcement Learning} 
The Deep-Q-network (DQN) has $3$ convolutional layers with $[32,64,64]$ number of filters and $[8,4,3]$ for filter size $[4,2,1]$ for stride, followed by an FC layer with $512$ units. We use the \textit{rectified linear unit}~(ReLU) activation function for all layers, and use the initialization method from~\cite{He_2015_ICCV} for the convolutional parameters. The input to the DQN is a $84\times84\times4$ images from the state. 
We follow most of the hyper-parameters from~\cite{mnih2013playing} except a smaller learning rate $2e^{-5}$, as we empirically observe even a marginally larger rate can lead to very ineffective learning and we set a reduced memory size to $10^5$ for each agent due to RAM limitation. 
We use the clipped reward as $\{-1,0,1\}$ and set the reward decay as $\gamma=0.99$. The exploration ratio start from $1$ and linearly decay to $0.1$ after a total of $850$K local training steps. The batch size is $32$. 
The local models synchronize with the coordinator model every $2000$ local steps in the environment, i.e., $2000$ steps in the environment corresponds to an iteration in FL.

\clearpage
\section{Additional Experimental Results} \label{appendix:experiments}

\subsection{Dataset License}
MNIST~\citep{mnist}: Attribution-Share Alike 3.0 License; CIFAR-10~\citep{cifar10}: MIT License; HFT~\citep{hft2018}: Creative Commons Attribution 4.0 International (CC BY 4.0); ELECTRICITY~\citep{electricity2020}: Creative Commons Attribution 4.0 International (CC BY 4.0); PATH~\citep{medmnistv1}: Creative Commons Attribution 4.0 International (CC BY 4.0).

\subsection{Additional results for different settings of the running example} 
Regarding the experiment of using recall fraction to indirectly gauge the accuracy in the contribution estimates, we include additional results with less quantity/missing values/feature noise data to simulate the low-quality data for designated nodes and on CIFAR-10.

For less quantity data, we randomly drop 10\% of the data points in the designated nodes to simulate nodes with less quantity data. For missing values data, we randomly set 50\% of the pixel of images to zero in the designated nodes to simulate that the data have random unfilled features which is common in data collection. For feature noise data, we add standard Gaussian noise $\epsilon \sim \mathcal{N}(0,1)$ to each feature/pixel of the images in the designated nodes. The proportion of data in the designated nodes with missing values/feature noise is the same with label noise.

\begin{figure}[!ht]
    \centering 
    \includegraphics[width=0.990\textwidth]{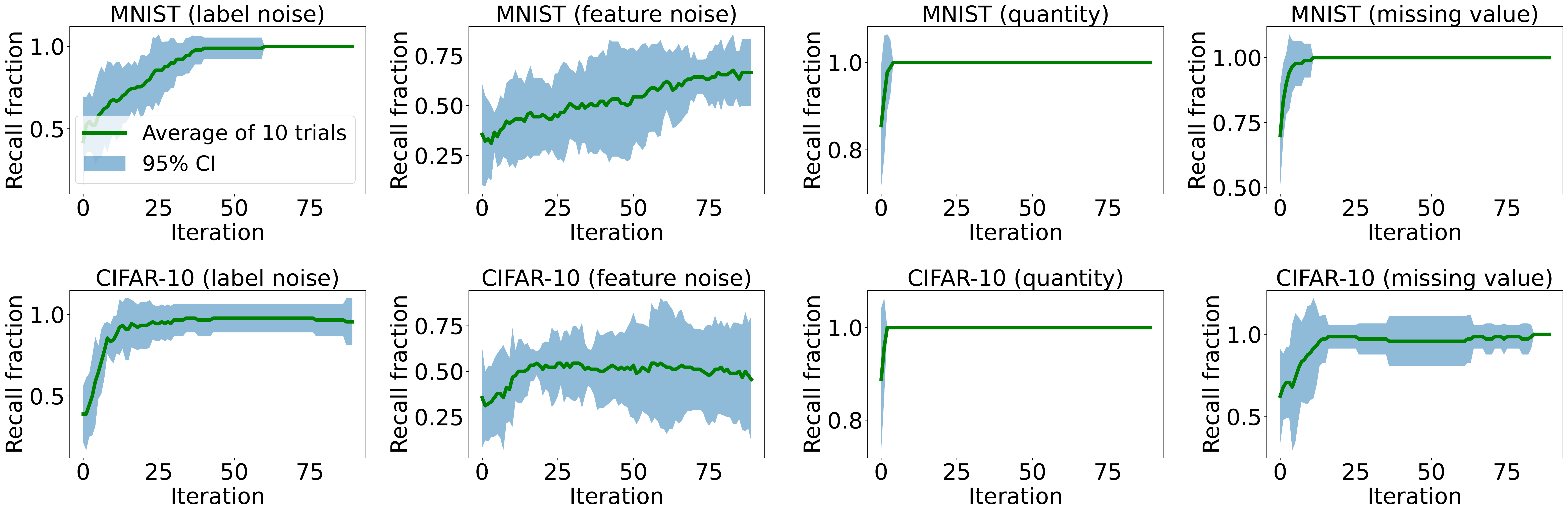}
    \caption{Recall fraction vs.~iteration under different settings.}
    \label{fig:motivation-accurate-sv-appendix}
\end{figure}

\subsection{Empirical Validation of Sub-sampling for Hypothesis Testing}

We empirically validate the effectiveness of using sub-sampling to side-step the technical requirement of Hotelling's $T^2$ distribution of $\tau > N$ in \cref{fig:empirical-model-sv-convergence}. In these various settings, the $p$-values obtained with/without sub-sampling are well aligned so that we can use the sub-sampled set of nodes instead of all $N$ nodes for the stopping criterion.

\begin{figure}[!ht]
    \centering 
    \includegraphics[width=0.99\textwidth]{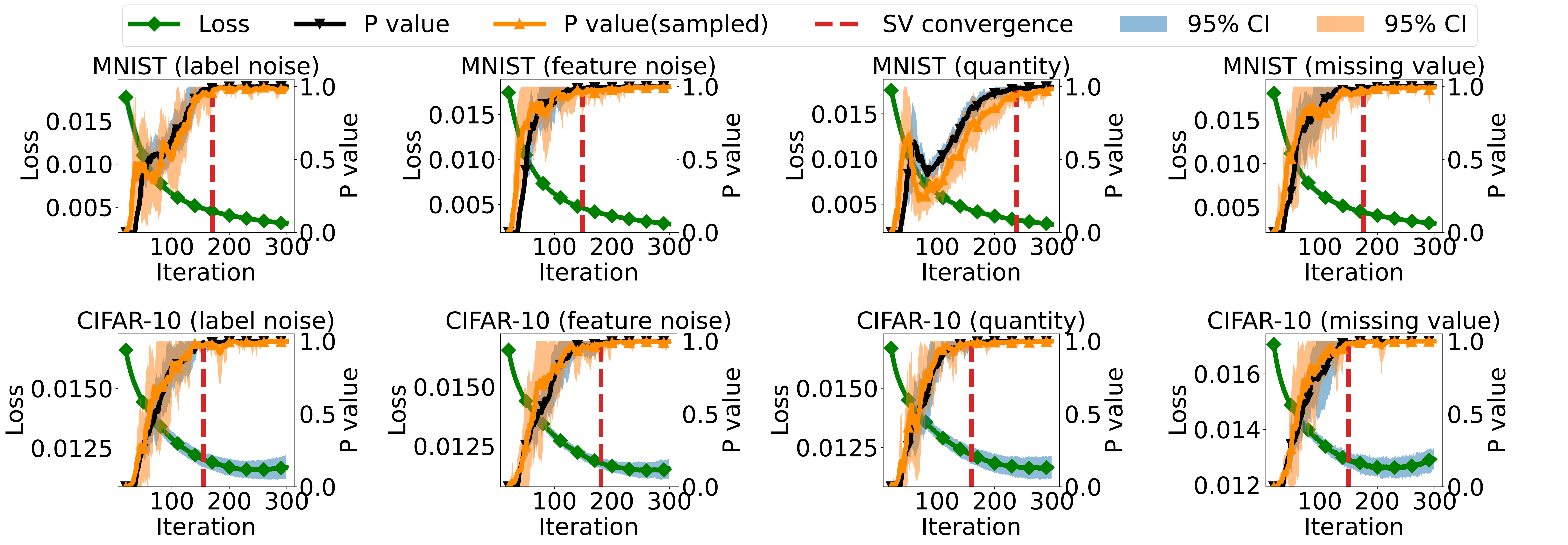}
    \caption{
    The $p$-value with~(orange) and without~(black) sub-sampling using $\tau = 20$~($35$).
    The p-value of original sampling (black) and sub-sampling (orange) vs.~iteration. The shaded area denotes the $95\%$ confidence interval computed from 10 random trials. The results show $\boldsymbol{\psi}_t$ converges (vertical red dashed line, which indicates $p$-value first reaches $0.99$).
    \label{fig:empirical-model-sv-convergence}
    }
\end{figure}

\subsection{Empirical Validation of Contribution Evaluation for Nodes with Unstable Connection.} \label{appendix:exp-unstale-connection}
In the setting with unstable connection, it is impractical for all the nodes to participates in every iteration for the contribution evaluation. Though our framework does not target to this setting, a simple modification can be adopted to make it work. We can simply let $\phi_{i,t} = 0$ if node $i$ does not participate in iteration $t$ when evaluating the contribution of all nodes. And we keep the rest part of our framework the same. Intuitively, if not all the nodes can participate in the contribution evaluation, it would take more iterations for the $\boldsymbol{\psi}_{t}$ to converge. The result in \cref{table:subsample-nodes-stopping-iteration} verifies the intuition, the stopping iteration for contribution evaluation increases if a smaller proportion of nodes participate in the contribution evaluation.

The experiment setting is the same as the setting in \cref{sec:foil}. The $p$ value for the stopping criterion is set to be $0.95$, and we have 10 nodes in the collaboration. We begin to sub-sample $r_{\text{sub}}$ proportion of the nodes to participate the contribution evaluation after iteration $T_{\alpha}-5$ to simulate the case of unstable connection, where $T_{\alpha}$ is the stopping iteration for the full participation. 

\begin{table}[ht]
\setlength{\tabcolsep}{4pt}
\caption{The unstable connection and its effect to the stopping iteration: the stopping iteration (standard error under 3 runs) under different subsampling ratio $r_{\text{sub}}$.}
\label{table:subsample-nodes-stopping-iteration}
\begin{center}
\resizebox{0.60\linewidth}{!}{\begin{tabular}{c|cc|cc}
\multicolumn{1}{c}{}  & \multicolumn{2}{c}{\bf Label Noise} & \multicolumn{2}{c}{\bf Feature Noise}\\
 \toprule
$r_{\text{sub}}$ & MNIST & CIFAR-10 & MNIST & CIFAR-10 \\
 \midrule
0.2&103.00(1.4e+01)&154.67(1.3e+01)&101.67(1.1e+01)&137.67(1.2e+01)\\
0.8&102.67(5.2e+00)&146.00(7.5e+00)&105.33(8.9e+00)&136.00(1.4e+01)\\
0.9&99.00(6.4e+00)&143.67(7.3e+00)&104.00(9.0e+00)&134.33(1.2e+01)\\
1.0&77.00(8.9e+00)&138.00(1.2e+01)&81.67(7.9e+00)&131.33(1.4e+01)\\
\bottomrule 
\end{tabular}
}
\end{center}
\end{table}

\subsection{Additional comparisons of communication complexity and running time among all baselines}\label{appendix:communication-running-time}

Denote $T$ as the overall training iterations, $n_g$ as the number of dimensions of each gradient (i.e., the number of parameters in the model $\theta$) , and $r$ as the selection ratio of nodes in each iteration. Denote $T_1$ as number of iterations for exploration phase and $T_2$ for the exploitation phase. Thus $T_1+T_2 = T$.

\begin{table}[!ht]
\setlength{\tabcolsep}{4pt}
\caption{The communication factors for different baselines. Where $\text{Communication costs} = \text{Accumulated communications}\times \text{Cost per communication}.$}
\label{table:communication-costs}
\begin{center}
\begin{tabular}{c|c|c|c|c}
 \toprule 
Baseline  & Accumulated communications &  Cost per communication & Communication costs & Comparison with ours\\
\midrule 
FedAvg & $r\cdot T \cdot N$ & $2n_g$ & $2n_grTN$& Lower than ours \\
qFFL & $r\cdot T \cdot N$ & $2n_g+1$ & $2n_grTN+rTN$&Higher than ours iff $T_1 < \frac{2n_g}{rT}$ \\
FGFL & $T \cdot N$ & $2n_g$ & $2n_gTN$& Higher than ours \\
GoG & $r\cdot T \cdot N$ & $2n_g$ & $2n_grTN$& Lower than ours\\
Ours & $T_1 \cdot N + r\cdot T_2 \cdot N$ & $2n_g$ & $2n_g(T_1 + rT_2)N$ & N.A. \\
\bottomrule 
\end{tabular}
\end{center}
\end{table}

\textbf{Comparisons of communication costs.}
In \cref{table:communication-costs}, $\text{Accumulated communications}$ denotes the total number of communications between a node and the coordinator and $\text{Cost per communication}$ denotes the cost per such communication. Take FedAvg for example, in one iteration, $rN$ nodes are picked (hence $rN$ communications), so the $ \text{Accumulated communications}$ is $rNT$ for a total of $T$ iterations. In each communication, the node uploads and downloads the gradient containing $n_g$ parameters, so the $\text{Cost per communication}$ is $2n_g$.

Note that the $\text{Cost per communication}$ and $\text{Communication costs}$ denote how many floating point numbers are required. The actual cost in practice additionally depends on how many bits each floating point number requires in storage which incurs a constant linear factor and is omitted for simplicity.

From \cref{table:communication-costs}, we have several observations. When $r < 1$, GoG and FedAvg have the lowest communication costs among all baselines. For Ours, the shorter the exploration phase $T_1$ is, the lower communication cost. However, a shorter exploration phase might cause inaccurate contribution estimates as shown in \cref{fig:motivation-accurate-sv} (left). When $T_1 < 0.2T$ which is observed in our experiments, in the worst-case (i.e.,  $r \rightarrow 0$), our method has an additional communication cost of $0.4n_gTN$ more than GoG (which has the lowest communication cost). Overall, the communication costs do not vary too much among all baselines.

\begin{table}[!ht]
\setlength{\tabcolsep}{4pt}
\caption{Time complexity of fairness mechanism for different baselines. FedAvg is omitted since it does not have a fairness mechanism. $M$ is the number of Monte Carlo simulations in GoG and $n_g |D_{\text{val}}|$ is from the forward passes of a model with $n_g$ parameters on the validation dataset $D_{\text{val}}$.}
\label{table:time-complexity-factor}
\begin{center}
\begin{tabular}{c|c}
 \toprule 
Baseline  & Time complexity\\
\midrule 
qFFL & $O(rn_gNT)$  \\
FGFL & $O(n_gNT)$ \\
GoG & $O(rMNn_g|D_\text{val}|T)$ \\
Ours & $O(N^2 n_g T_1)$ \\
\bottomrule 
\end{tabular}
\end{center}
\end{table}

\begin{table}[!ht]
\setlength{\tabcolsep}{4pt}
\caption{Running time of fairness mechanism in seconds and the fraction of time spent on fairness mechanism w.r.t. the overall training time (in brackets) under the setting of \textbf{label noise}. Results are averaged over 5 runs.}
\label{table:table:running-time-label-noise}
\begin{center}
\begin{tabular}{c|c|c|c|c|c}
 \toprule 
Baseline  & MNIST & CIFAR-10 & HFT & ELECTRICITY & PATH \\
\midrule 
qFFL&7.778(1.2e-02)&24.064(4.0e-02)&4.565(3.1e-02)&20.188(4.5e-02)&28.289(4.5e-02)\\
FGFL&391.282(5.0e-01)&401.867(5.5e-01)&15.727(1.0e-01)&230.978(3.7e-01)&458.683(4.6e-01)\\
GoG&1656.071(9.0e-01)&1311.791(8.7e-01)&1640.153(9.3e-01)&1378.404(9.2e-01)&1400.977(8.8e-01)\\
Ours&444.587(3.9e-01)&400.780(4.1e-01)&7.598(5.9e-02)&237.968(3.7e-01)&241.410(2.8e-01)\\
\bottomrule 
\end{tabular}
\end{center}
\end{table}

\begin{table}[!ht]
\setlength{\tabcolsep}{4pt}
\caption{Running time of fairness mechanism in seconds and the fraction of time spend on fairness mechanism w.r.t. the overall training time (in brackets) under the setting of \textbf{quantity}. Results are averaged over 5 runs.}
\label{table:running-time-quantity}
\begin{center}
\begin{tabular}{c|c|c|c|c|c}
 \toprule 
Baseline  & MNIST & CIFAR-10 & HFT & ELECTRICITY & PATH \\
\midrule 
qFFL&4.346(8.4e-03)&12.173(2.1e-02)&7.768(3.5e-02)&10.829(2.6e-02)&8.793(1.9e-02)\\
FGFL&276.947(5.1e-01)&278.551(5.7e-01)&14.986(5.9e-02)&212.794(4.6e-01)&316.002(5.6e-01)\\
GoG&1237.899(8.3e-01)&835.653(7.7e-01)&2839.274(9.3e-01)&873.078(8.5e-01)&903.396(7.9e-01)\\
Ours&1099.390(5.7e-01)&999.485(5.9e-01)&7.179(3.3e-02)&879.026(6.2e-01)&741.787(5.3e-01)\\
\bottomrule 
\end{tabular}
\end{center}
\end{table}

\textbf{Comparisons of running time.} From \cref{table:time-complexity-factor}, we have some observations. qFFL seems the most efficient since its fairness mechanism is a re-weighting of the objective function using the reported losses from $rN$ nodes and the dependence on $n_g$ is due to calculating the norm of each gradient. The comparison between FGFL and ours depends on $T$ vs. $NT_1$: if $T_1$ is small (contribution estimates converge quickly), specifically $T_1 < T/N$, then ours is more efficient. For GoG, the time complexity for it depends on the number of Monte Carlo simulations $M$ that is often set according to the number of nodes $N$ For example, \cite{Nagalapatti2021-game-of-gradients-fl} sets $M=10$ for $N=5$ and in our experiments we set $M=30$ for $N=10$. Furthermore, GoG has an extra factor of $O(|D_\text{val}|)$ due to the usage of an extra validation dataset. Therefore when $M=30$ and $N=10$ which we used in our experiments later, GoG has the highest time complexity among all baselines. 

To verify these observations, we perform experiments to compare the running time for the fairness mechanism and its proportion in the whole training process. We consider two settings: label noise and quantity. We consider $N=10$ nodes and the total training iteration of $T=250$ for all datasets and baseline. The fraction of nodes selected in each iteration for qFFL, GoG, Ours is set to be $r=0.4$. We note that FedAvg is excluded as it does not have an explicit fairness mechanism.

Based on \cref{table:table:running-time-label-noise} and \cref{table:running-time-quantity}, we have some empirical observations. Since qFFL does not involve a contribution estimation, its running time is always the lowest in \cref{table:table:running-time-label-noise} and \cref{table:running-time-quantity} except on dataset HFT in \cref{table:running-time-quantity} where Ours runs the fastest. Previously in \cref{table:time-complexity-factor}, if $T_1 < T/N$, the time complexity of the fairness mechanism of Ours is lower than that of the FGFL, which is empirically observed in \cref{table:table:running-time-label-noise} on CIFAR-10, HFT and PATH. Otherwise, FGFL has lower complexity as shown in \cref{table:running-time-quantity} on all datasets except HFT. Due to the usuage of validation dataset and $M$'s dependence on the number of nodes $N$, GoG almost always has the highest running time of fairness mechanism as shown in \cref{table:table:running-time-label-noise} and \cref{table:running-time-quantity}. From \cref{table:time-complexity-factor}, qFFL is $1/r$ times faster than FGFL ($r=0.4$ in our experiments which means qFFL is $2.5$ times faster than FGFL from the expressions of the factor). However, from \cref{table:table:running-time-label-noise} and \cref{table:running-time-quantity}, qFFL is around 20 times faster than FGFL in most experiments. The reason is that the constant facor in $O(\cdot)$ for FGFL is larger than that for qFFL due to the extra operations in FGFL (e.g., flattening and unflattening gradients for calculating cosine-similarity, sparsifications of gradients). These operations add to the constant factor but their exact runtimes w.r.t. $n_g$ depends on the implementation library and is not known.

As an additional remark to the analyses and experiments above, we highlight that since our targeted scenario is the long-term FL, the true bottleneck of time in practice would be the time for data collection/preprocessing instead of the running time of our fairness mechanism.

\subsection{Additional experiments to specifically investigate the effect of heterogeneity on model/predictive performance.}
Our additional experiments below demonstrate that our method outperforms (compares favorably to) existing methods in terms of predictive performance when there is heterogeneity, and the performance indeed degrades with the degree of heterogeneity but the degradation is graceful.

To investigate how the degree of heterogeneity affects the model performance, we perform experiments for classification tasks on $N=30$
nodes; the detailed experiment setting can be found in Sec.~\ref{sec:foil}. The difference here is that we vary the degree of heterogeneity of nodes' local data distributions by assigning them different numbers of classes. For example, in the most homogeneous setting, each node has an equal amount of data uniformly randomly sampled from all $10$ classes. While in the most heterogenous setting, each node only has $2$ classes (uniformly randomly sampled from $10$ classes) of data points. Under this setting, few nodes have the same data from one specific class, so we can quantify the heterogeneity by the number of classes each node has. The lower the number of classes in each node, the higher the heterogeneity of the nodes' local data. Note that HFT and ELECTRICITY datasets are exempt from the experiments here since HFT is binary classification and ELECTRICITY is a regression task to which the setting does not apply. Note that PATH only has 9 classes in total, so we set the number of classes for each corresponding setting to $\text{round}(9 \times \frac{\text{Number of classes}}{10})$.

\begin{table}[!ht]
\setlength{\tabcolsep}{4pt}
\caption{The maximum of the \textbf{final model accuracy} among 30 nodes (standard error over 5 runs) for our approach under different levels of heterogeneous local data distribution. Number of classes indicates the level of heterogeneity among nodes: a smaller number of classes corresponds to a higher degree of heterogeneity.}
\label{table:heterogeneity-final-acc}
\begin{center}
\begin{tabular}{c|c|c|c}
 \toprule 
Number of classes  & MNIST & CIFAR-10 & PATH\\
\midrule 
10&0.754(0.029)&0.220(0.012)&0.392(0.014)\\
8&0.723(0.017)&0.218(0.021)&0.393(0.009)\\
6&0.741(0.020)&0.168(0.007)&0.358(0.012)\\
4&0.702(0.022)&0.183(0.014)&0.335(0.005)\\
2&0.688(0.047)&0.149(0.015)&0.188(0.014)\\
\bottomrule 
\end{tabular}
\end{center}
\end{table}

\begin{table}[!ht]
\setlength{\tabcolsep}{4pt}
\caption{The maximum of the \textbf{online model accuracy} among 30 nodes (standard error over 5 runs) for our approach under different levels of heterogeneous local data distribution. Number of classes indicates the level of heterogeneity among nodes: a smaller number of classes corresponds to a higher degree of heterogeneity.}
\label{table:heterogeneity-online-acc}
\begin{center}
\begin{tabular}{c|c|c|c}
 \toprule 
Number of classes  & MNIST & CIFAR-10 & PATH\\
\midrule 
10&0.635(0.017)&0.181(0.009)&0.328(0.006)\\
8&0.624(0.015)&0.183(0.009)&0.340(0.006)\\
6&0.639(0.010)&0.146(0.006)&0.312(0.010)\\
4&0.624(0.014)&0.154(0.014)&0.290(0.007)\\
2&0.627(0.018)&0.139(0.007)&0.168(0.013)\\
\bottomrule 
\end{tabular}
\end{center}
\end{table}

From Table~\ref{table:heterogeneity-final-acc} and Table~\ref{table:heterogeneity-online-acc}, heterogeneity does negatively affect the model performance. Specifically, when the heterogeneity increases (as the number of classes goes from 10 to 2), the model performance will degrade. 

\begin{table}[!ht]
\setlength{\tabcolsep}{4pt}
\caption{The maximum of the \textbf{online model accuracy} among 30 nodes under the heterogenous data setting (how much the performance degrades compared to the homogenous data setting) for \textbf{different FL algorithms}.}
\label{table:heterogeneity-baselines}
\begin{center}
\begin{tabular}{c|c|c|c}
 \toprule 
Number of classes  & MNIST & CIFAR-10 & PATH\\
\midrule 
FedAvg&0.498(0.002)&0.118(-0.034)&0.151(-0.141)\\
qFFL&0.117(0.015)&0.096(-0.003)&0.105(0.020)\\
FGFL&0.188(-0.293)&0.114(-0.014)&0.129(-0.007)\\
GoG&0.599(-0.028)&0.136(-0.052)&0.155(-0.150)\\
Ours&0.627(-0.008)&0.139(-0.042)&0.168(-0.161)\\
\bottomrule 
\end{tabular}
\end{center}
\end{table}

To compare how the different degrees of heterogeneity affect different federated learning baseline algorithms in our paper, we perform experiments to see how different algorithms perform under the same degree of heterogeneity and how much their corresponding performances degrade due to the heterogeneity of nodes' local data. The heterogeneous setting refers to the case when the number of classes = 2 (most heterogeneous) and the homogenous setting refer to the case when the number of classes = 10. From Table~\ref{table:heterogeneity-baselines}, qFFL and FGFL perform relatively poorly in highly heterogenous data settings. FedAvg performs better than qFFL and FGFL. Our approach and GoG perform significantly better than other approaches. Our approach achieves the best performance among all datasets. Under the heterogeneous setting, our approach has a similar degree of model performance degradation (shown in the brackets) as FedAvg and GoG. 

These additional experimental results demonstrate that compared to baselines (1) our method performs well under the heterogeneous data distributions compared to other baselines and (2) the performance of our method degrades gracefully as the degree of heterogeneity increases.

\subsection{Additional Fairness Comparison for FOIL and FRL}

\paragraph{Corresponding results for label noise, quantity and missing values.}
We provide the fairness results under the setting of label noise, quantity, and missing values corresponding in \cref{table:pearson-online-loss,table:average-online-acc,table:minimum-online-acc}.
\begin{table}[!ht]
\setlength{\tabcolsep}{4pt}
\caption{Correlation coefficient $\rho$ between $\zeta$ and \textbf{online loss} under the setting of label noise, quantity, and missing values. \textit{Higher} $\rho$ indicates better fairness result.
}
\label{table:pearson-online-loss}
\begin{center}
\resizebox{0.5\linewidth}{!}{
\begin{tabular}{c|ccccc}
\multicolumn{1}{c}{} &\multicolumn{5}{c}{\bf Label Noise}\\
\toprule
 & MNIST & CIFAR-10 & HFT & ELECTRICITY & PATH \\
 \midrule
FedAvg&-0.224(0.060)&-0.072(0.041)&0.113(0.067)&0.000(0.102)&0.024(0.104)\\
qFFL&0.097(0.088)&0.095(0.144)&0.218(0.091)&-0.162(0.088)&-0.171(0.128)\\
FGFL&0.593(0.056)&0.396(0.031)&-0.282(0.045)&\textbf{0.834(0.017)}&0.314(0.035)\\
GoG&0.119(0.034)&0.260(0.076)&0.091(0.034)&0.174(0.072)&-0.054(0.053)\\
\midrule 
Ours&\textbf{0.678(0.016)}&\textbf{0.455(0.059)}&\textbf{0.347(0.078)}&0.376(0.073)&\textbf{0.469(0.063)}\\
\bottomrule 
\end{tabular}
}
\resizebox{\linewidth}{!}{
\begin{tabular}{c|ccccc|ccccc}
\multicolumn{5}{c}{} \\
\multicolumn{1}{c}{}  &\multicolumn{5}{c}{\bf Quantity} &\multicolumn{5}{c}{\bf Missing Values} \\
\toprule 
 & MNIST & CIFAR-10 & HFT & ELECTRICITY & PATH & MNIST & CIFAR-10 & HFT & ELECTRICITY & PATH \\
 \midrule 
FedAvg&-0.094(0.055)&-0.111(0.019)&-0.020(0.065)&0.125(0.078)&-0.026(0.039)
&0.020(0.088)&-0.041(0.140)&-0.035(0.039)&-0.095(0.059)&0.086(0.085)\\
qFFL&0.338(0.118)&0.131(0.048)&-0.211(0.030)&-0.115(0.058)&0.076(0.077)
&0.167(0.081)&-0.012(0.063)&0.093(0.078)&-0.174(0.088)&-0.075(0.145)\\
FGFL&0.682(0.004)&0.445(0.083)&-0.071(0.073)&-0.317(0.063)&0.279(0.109)
&0.349(0.068)&0.347(0.070)&-0.084(0.110)&\textbf{0.933(0.012)}&\textbf{0.315(0.067)}\\
GoG&0.252(0.027)&0.127(0.098)&\textbf{0.079(0.028)}&\textbf{0.297(0.059)}&0.090(0.026)
&0.358(0.029)&0.198(0.037)&0.208(0.018)&0.262(0.020)&0.242(0.027)\\
\midrule 
Ours&\textbf{0.785(0.005)}&\textbf{0.666(0.040)}&0.050(0.070)&-0.030(0.083)&\textbf{0.580(0.022)}
&\textbf{0.526(0.019)}&\textbf{0.581(0.015)}&\textbf{0.252(0.029)}&0.223(0.059)&0.179(0.057)\\
\bottomrule 
\end{tabular}
}
\end{center}
\end{table}

\begin{table}[!ht]
\setlength{\tabcolsep}{4pt}
\caption{The average of the online accuracy~(standard error) over all nodes under the setting of label noise, quantity, and missing values. 
For ELECTRICITY, we measure mean absolute percentage error(MAPE), so lower is better.}
\label{table:average-online-acc}
\begin{center}
\resizebox{0.5\linewidth}{!}{
\begin{tabular}{c|ccccc}
\multicolumn{1}{c}{} &\multicolumn{5}{c}{\bf Label Noise}\\
\toprule
 & MNIST & CIFAR-10 & HFT & ELECTRICITY & PATH \\
 \midrule
FedAvg&0.509(0.013)&0.161(0.009)&0.472(0.049)&1.386(0.052)&0.307(0.008)\\
qFFL&0.112(0.006)&0.103(0.002)&0.374(0.091)&1.618(0.117)&0.109(0.013)\\
FGFL&0.545(0.008)&0.157(0.007)&0.544(0.009)&1.703(0.041)&0.188(0.010)\\
GoG&0.599(0.007)&0.193(0.005)&0.536(0.017)&1.664(0.079)&0.321(0.006)\\
Standalone&0.519(0.007)&0.158(0.005)&0.580(0.017)&1.661(0.079)&0.233(0.002)\\
\midrule
Ours&\textbf{0.664(0.003)}&\textbf{0.206(0.009)}&\textbf{0.566(0.014)}&\textbf{0.125(0.004)}&\textbf{0.380(0.008)}\\
\bottomrule 
\end{tabular}
}
\resizebox{\linewidth}{!}{
\begin{tabular}{c|ccccc|ccccc}
\multicolumn{5}{c}{} \\
\multicolumn{1}{c}{}  &\multicolumn{5}{c}{\bf Quantity} &\multicolumn{5}{c}{\bf Missing Values} \\
\toprule 
 & MNIST & CIFAR-10 & HFT & ELECTRICITY & PATH & MNIST & CIFAR-10 & HFT & ELECTRICITY & PATH \\
 \midrule 
FedAvg&0.510(0.010)&0.138(0.009)&0.462(0.031)&1.755(0.128)&0.299(0.007)
&0.467(0.011)&0.149(0.003)&0.461(0.047)&1.422(0.058)&0.292(0.006)\\
qFFL&0.114(0.010)&0.104(0.004)&0.350(0.071)&1.697(0.148)&0.129(0.007)
&0.112(0.021)&0.108(0.004)&0.317(0.065)&1.425(0.080)&0.099(0.006)\\
FGFL&0.579(0.011)&0.169(0.006)&0.524(0.023)&1.845(0.071)&0.185(0.007)
&0.582(0.005)&0.156(0.010)&0.556(0.016)&2.293(0.069)&0.184(0.018)\\
GoG&0.606(0.009)&0.185(0.004)&0.543(0.020)&1.978(0.072)&0.320(0.004)
&0.564(0.012)&0.185(0.003)&0.576(0.013)&1.728(0.086)&0.299(0.005)\\
Standalone&0.606(0.006)&0.159(0.002)&0.536(0.014)&1.829(0.056)&0.236(0.003)
&0.577(0.003)&0.148(0.005)&0.539(0.016)&1.805(0.055)&0.228(0.004)\\
\midrule
Ours&\textbf{0.652(0.009)}&\textbf{0.213(0.007)}&\textbf{0.571(0.012)}&\textbf{0.113(0.002)}&\textbf{0.352(0.005)}
&\textbf{0.634(0.003)}&\textbf{0.195(0.004)}&\textbf{0.569(0.017)}&\textbf{0.126(0.003)}&\textbf{0.329(0.004)}\\
\bottomrule 
\end{tabular}
}

\end{center}
\end{table}

\begin{table}[!ht]
\setlength{\tabcolsep}{4pt}
\caption{The minimum of the online accuracy~(standard error) over all nodes under the setting of label noise, quantity, and missing values. For ELECTRICITY, we measure mean absolute percentage error(MAPE), so lower is better.
}
\label{table:minimum-online-acc}
\begin{center}
\resizebox{0.5\linewidth}{!}{
\begin{tabular}{c|ccccc}
\multicolumn{1}{c}{} &\multicolumn{5}{c}{\bf Label Noise}\\
\toprule
 & MNIST & CIFAR-10 & HFT & ELECTRICITY & PATH \\
 \midrule
FedAvg&0.503(0.012)&0.160(0.009)&0.466(0.050)&1.381(0.053)&0.303(0.008)\\
qFFL&0.112(0.006)&0.103(0.002)&0.373(0.091)&1.618(0.117)&0.109(0.013)\\
FGFL&0.533(0.013)&0.155(0.007)&0.542(0.009)&1.533(0.037)&0.185(0.009)\\
GoG&0.578(0.008)&0.187(0.005)&0.531(0.018)&1.644(0.079)&0.306(0.005)\\
Standalone&0.396(0.009)&0.132(0.006)&0.567(0.020)&1.301(0.059)&0.182(0.004)\\
\midrule 
Ours&\textbf{0.661(0.004)}&\textbf{0.204(0.009)}&\textbf{0.564(0.015)}&\textbf{0.124(0.004)}&\textbf{0.376(0.007)}\\
\bottomrule 
\end{tabular}
}
\resizebox{\linewidth}{!}{
\begin{tabular}{c|ccccc|ccccc}
\multicolumn{5}{c}{} \\
\multicolumn{1}{c}{}  &\multicolumn{5}{c}{\bf Quantity} &\multicolumn{5}{c}{\bf Missing Values} \\
\toprule 
 & MNIST & CIFAR-10 & HFT & ELECTRICITY & PATH & MNIST & CIFAR-10 & HFT & ELECTRICITY & PATH \\
 \midrule 
FedAvg&0.502(0.010)&0.138(0.009)&0.459(0.031)&1.749(0.129)&0.295(0.007)
&0.462(0.011)&0.148(0.003)&0.456(0.047)&1.417(0.058)&0.288(0.006)\\
qFFL&0.113(0.010)&0.104(0.004)&0.350(0.071)&1.697(0.148)&0.129(0.007)
&0.112(0.021)&0.108(0.004)&0.317(0.065)&1.425(0.080)&0.099(0.006)\\
FGFL&0.551(0.011)&0.165(0.007)&0.522(0.024)&1.704(0.075)&0.184(0.007)
&0.580(0.006)&0.154(0.010)&0.555(0.016)&2.109(0.076)&0.183(0.018)\\
GoG&0.586(0.012)&0.180(0.003)&0.538(0.021)&1.953(0.071)&0.299(0.003)
&0.553(0.010)&0.180(0.003)&0.574(0.013)&1.693(0.082)&0.288(0.005)\\
Standalone&0.392(0.009)&0.134(0.004)&0.487(0.026)&1.689(0.058)&0.177(0.002)
&0.528(0.004)&0.121(0.005)&0.517(0.019)&1.671(0.056)&0.181(0.009)\\
\midrule 
Ours&\textbf{0.643(0.011)}&\textbf{0.209(0.008)}&\textbf{0.571(0.012)}&\textbf{0.113(0.002)}&\textbf{0.345(0.006)}
&\textbf{0.633(0.003)}&\textbf{0.193(0.004)}&\textbf{0.568(0.017)}&\textbf{0.126(0.003)}&\textbf{0.326(0.003)}\\
\bottomrule 
\end{tabular}
}
\end{center}
\end{table}

\paragraph{Additional fairness results.}

We provide additional fairness results w.r.t.~the average staleness, online accuracy in \cref{table:pearson-staleness,table:pearson-online-accuracy}.

\begin{table}[!ht]
\setlength{\tabcolsep}{4pt}
\caption{Correlation coefficient $\rho$ between $\zeta$ and \textbf{average staleness} and \textit{higher} $\rho$ indicates better fairness. FGFL is excluded as staleness is not well defined.
}
\label{table:pearson-staleness}
\begin{center}
\resizebox{\linewidth}{!}{
\begin{tabular}{c|ccccc|ccccc}
\multicolumn{1}{c}{}  & \multicolumn{5}{c}{\bf Feature Noise}&\multicolumn{5}{c}{\bf Label Noise}\\
\toprule
 & MNIST & CIFAR-10 & HFT & ELECTRICITY & PATH & MNIST & CIFAR-10 & HFT & ELECTRICITY & PATH \\
 \midrule
FedAvg& -0.213(0.073)&0.000(0.173)&0.012(0.128)&-0.098(0.194)&0.023(0.190) &-0.213(0.073)&0.000(0.173)&0.012(0.128)&-0.098(0.194)&0.023(0.190)\\
qFFL&0.407(0.056)&0.319(0.249)&0.035(0.219)&0.502(0.048)&0.268(0.121)
&0.375(0.033)&0.421(0.117)&0.205(0.221)&\textbf{0.596(0.065)}&0.370(0.116)\\
GoG&0.378(0.054)&-0.151(0.021)&-0.066(0.002)&-0.112(0.028)&0.338(0.029)
&-0.179(0.063)&-0.104(0.016)&-0.271(0.002)&-0.093(0.006)&0.042(0.024)\\
\midrule 
Ours&\textbf{0.650(0.038)}&\textbf{0.627(0.091)}&\textbf{0.387(0.092)}&\textbf{0.670(0.018)}&\textbf{0.646(0.121)}
&\textbf{0.679(0.022)}&\textbf{0.583(0.049)}&\textbf{0.364(0.165)}&0.376(0.172)&\textbf{0.577(0.104)}\\
\bottomrule 
\multicolumn{5}{c}{} \\
\multicolumn{1}{c}{}  &\multicolumn{5}{c}{\bf Quantity} &\multicolumn{5}{c}{\bf Missing Values} \\
\toprule 
 & MNIST & CIFAR-10 & HFT & ELECTRICITY & PATH & MNIST & CIFAR-10 & HFT & ELECTRICITY & PATH \\
 \midrule 
FedAvg&-0.088(0.116)&-0.028(0.126)&-0.067(0.162)&-0.059(0.135)&-0.080(0.100)
&0.018(0.176)&0.028(0.130)&-0.037(0.129)&-0.154(0.135)&-0.008(0.138)\\
qFFL&0.761(0.033)&\textbf{0.801(0.055)}&\textbf{0.778(0.052)}&\textbf{0.728(0.041)}&\textbf{0.762(0.071)}
&-0.066(0.140)&0.411(0.100)&0.042(0.165)&\textbf{0.471(0.155)}&\textbf{0.386(0.164)}\\
GoG&0.052(0.014)&0.250(0.035)&0.157(0.004)&0.298(0.031)&0.188(0.033)
&-0.018(0.008)&0.061(0.006)&0.162(0.001)&0.227(0.007)&-0.247(0.024)\\
\midrule 
Ours&\textbf{0.789(0.015)}&0.717(0.069)&0.057(0.139)&-0.106(0.073)&0.701(0.054)
&\textbf{0.572(0.040)}&\textbf{0.620(0.086)}&\textbf{0.249(0.058)}&0.330(0.109)&0.355(0.113)\\
\bottomrule 
\end{tabular}
}
\end{center}
\end{table}

\begin{table}[!ht]
\setlength{\tabcolsep}{4pt}
\caption{$\rho$ calculated between between $\zeta$ and \textbf{online accuracy}. \textit{Lower} $\rho$ indicates better fairness, except for the ELECTRICITY dataset where the MAPE is used and higher $\rho$ value indicates better fairness. 
}
\label{table:pearson-online-accuracy}
\begin{center}
\resizebox{\linewidth}{!}{
\begin{tabular}{c|ccccc|ccccc}
\multicolumn{1}{c}{}  & \multicolumn{5}{c}{\bf Feature Noise}&\multicolumn{5}{c}{\bf Label Noise}\\
\toprule
 & MNIST & CIFAR-10 & HFT & ELECTRICITY & PATH & MNIST & CIFAR-10 & HFT & ELECTRICITY & PATH \\
 \midrule
FedAvg&-0.015(0.123)&-0.043(0.118)&-0.035(0.093)&-0.088(0.027)&-0.093(0.089)
&0.115(0.038)&-0.060(0.028)&-0.129(0.098)&-0.031(0.098)&0.059(0.038)\\
qFFL&-0.266(0.104)&0.023(0.071)&-0.100(0.068)&0.176(0.136)&-0.005(0.005)
&0.013(0.017)&-0.068(0.031)&0.063(0.040)&0.318(0.063)&-0.161(0.161)\\
FGFL&-0.484(0.051)&-0.120(0.037)&-0.039(0.130)&-0.764(0.025)&\textbf{-0.306(0.133)}
&-0.493(0.084)&-0.144(0.180)&0.124(0.087)&-0.784(0.012)&\textbf{-0.203(0.084)}\\
GoG&-0.296(0.050)&-0.067(0.068)&\textbf{-0.268(0.112)}&-0.296(0.111)&-0.213(0.115)
&-0.389(0.024)&0.029(0.061)&-0.076(0.128)&-0.093(0.065)&0.144(0.061)\\
\midrule 
Ours&\textbf{-0.621(0.026)}&\textbf{-0.429(0.069)}&-0.087(0.057)&\textbf{0.687(0.007)}&-0.072(0.138)
&\textbf{-0.608(0.013)}&\textbf{-0.484(0.047)}&\textbf{-0.178(0.087)}&\textbf{0.384(0.080)}&-0.132(0.142)\\
\bottomrule 
\multicolumn{5}{c}{} \\
\multicolumn{1}{c}{}  &\multicolumn{5}{c}{\bf Quantity} &\multicolumn{5}{c}{\bf Missing Values} \\
\toprule 
 & MNIST & CIFAR-10 & HFT & ELECTRICITY & PATH & MNIST & CIFAR-10 & HFT & ELECTRICITY & PATH \\
 \midrule 
FedAvg&0.014(0.088)&0.039(0.046)&-0.017(0.063)&-0.057(0.097)&0.071(0.179)
&0.024(0.073)&-0.159(0.040)&0.018(0.041)&-0.064(0.126)&-0.172(0.057)\\
qFFL&-0.556(0.184)&0.057(0.057)&-0.025(0.077)&\textbf{0.309(0.056)}&-0.003(0.003)
&-0.089(0.057)&0.026(0.026)&0.085(0.103)&0.278(0.088)&0.042(0.042)\\
FGFL&-0.682(0.005)&\textbf{-0.554(0.065)}&-0.030(0.075)&0.278(0.156)&-0.098(0.206)
&-0.167(0.087)&-0.027(0.114)&0.179(0.077)&\textbf{0.759(0.068)}&\textbf{-0.386(0.074)}\\
GoG&-0.427(0.084)&-0.269(0.074)&\textbf{-0.085(0.047)}&-0.195(0.045)&-0.145(0.072)
&-0.156(0.075)&-0.074(0.043)&-0.047(0.063)&-0.017(0.018)&-0.085(0.089)\\
\midrule 
Ours&\textbf{-0.793(0.014)}&-0.412(0.281)&0.037(0.071)&-0.101(0.040)&\textbf{-0.312(0.134)} 
&\textbf{-0.524(0.024)}&\textbf{-0.360(0.085)}&\textbf{-0.133(0.109)}&0.318(0.052)&-0.048(0.120)\\

\bottomrule 
\end{tabular}
}
\end{center}
\end{table}

\clearpage
\paragraph{Generally positive $\psi_{i,T_\alpha}$.}
We observe  $\psi_{i,T_\alpha}$ is all positive, and validate our assumption that a node with noisy data has lower contributions in \cref{fig:sv_plot_appendix}.

\begin{figure}[!htb]
    \centering 
  \includegraphics[width=0.195\textwidth]{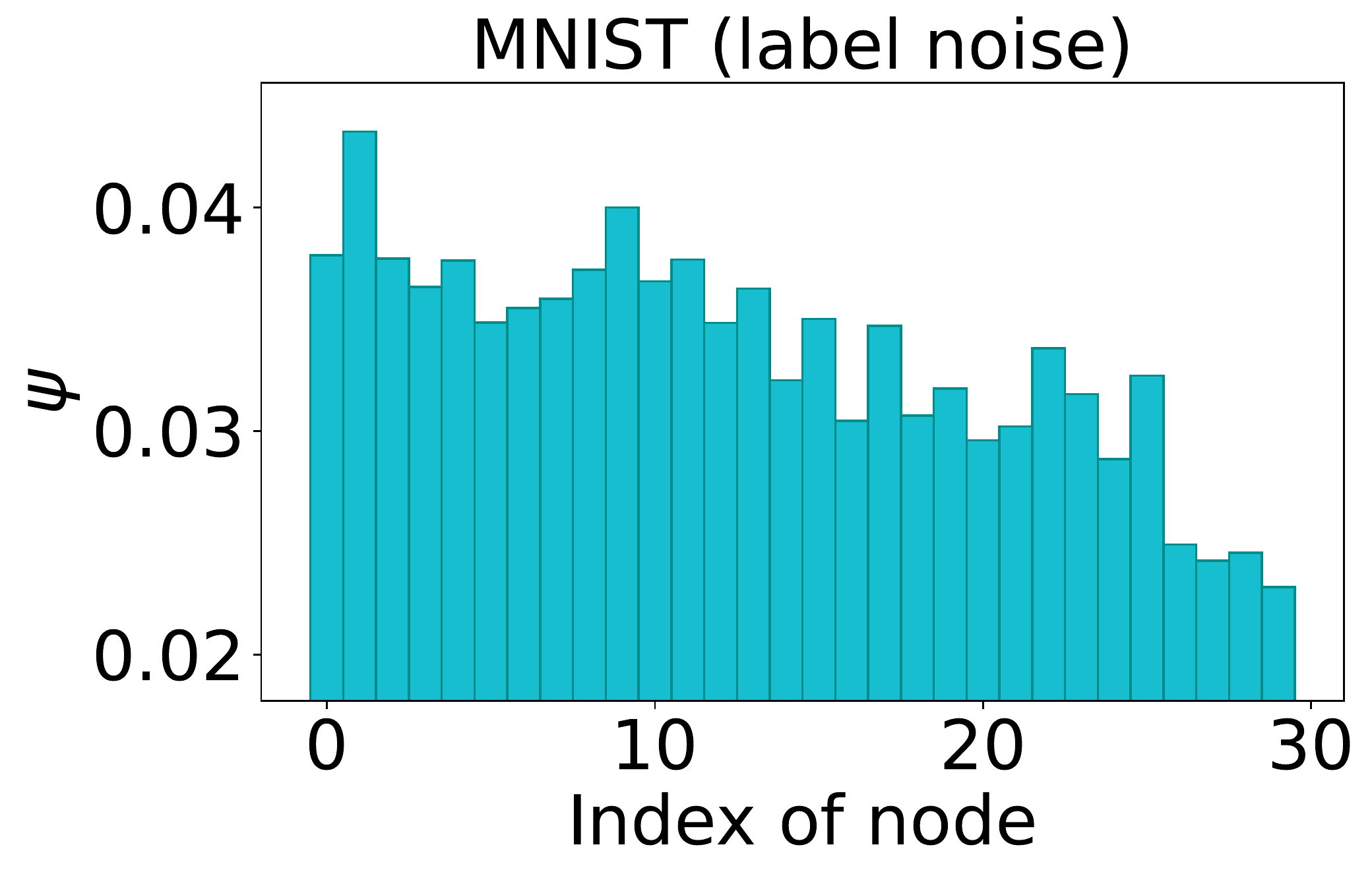}
  \includegraphics[width=0.195\textwidth]{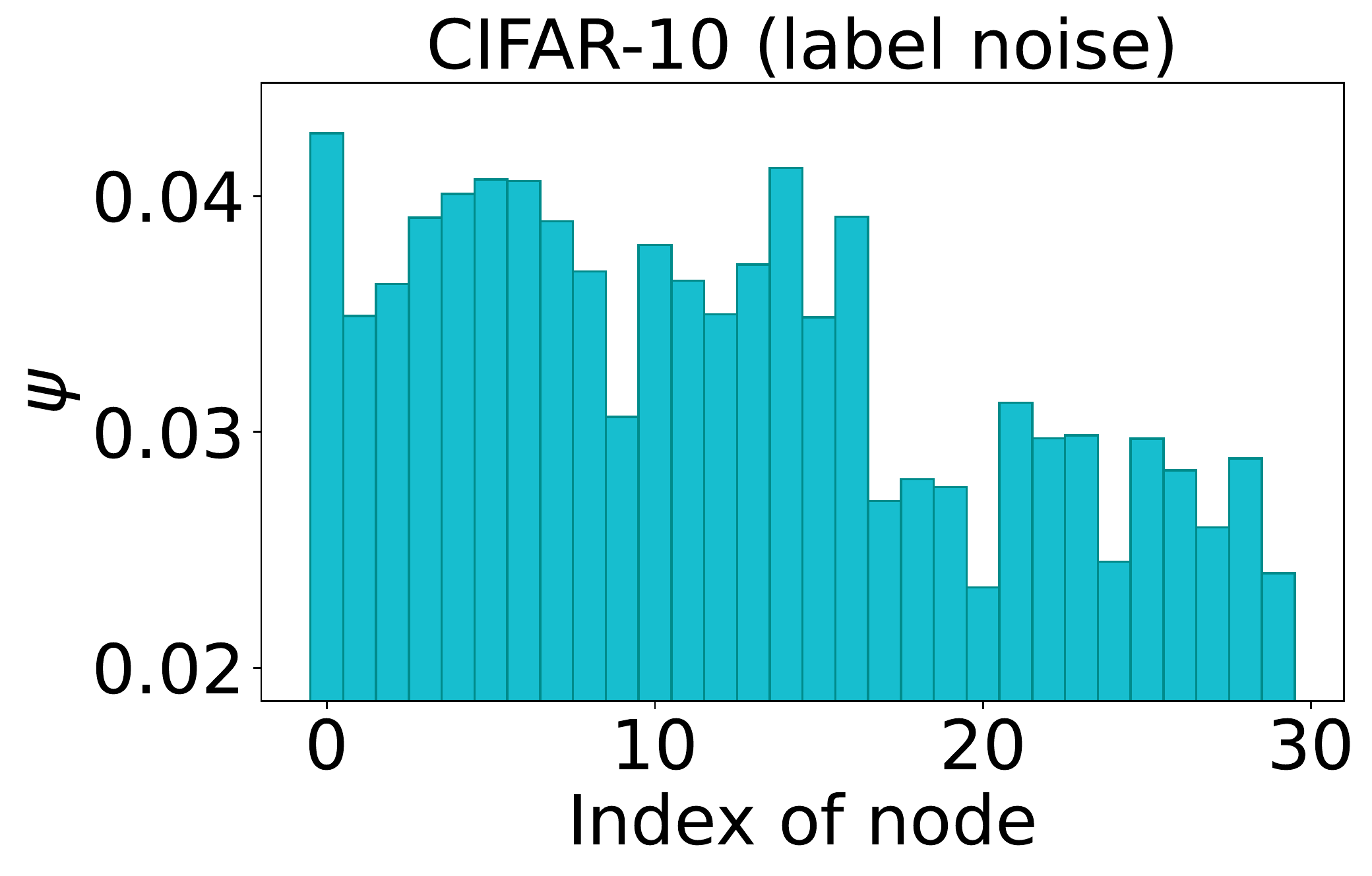}
  \includegraphics[width=0.195\textwidth]{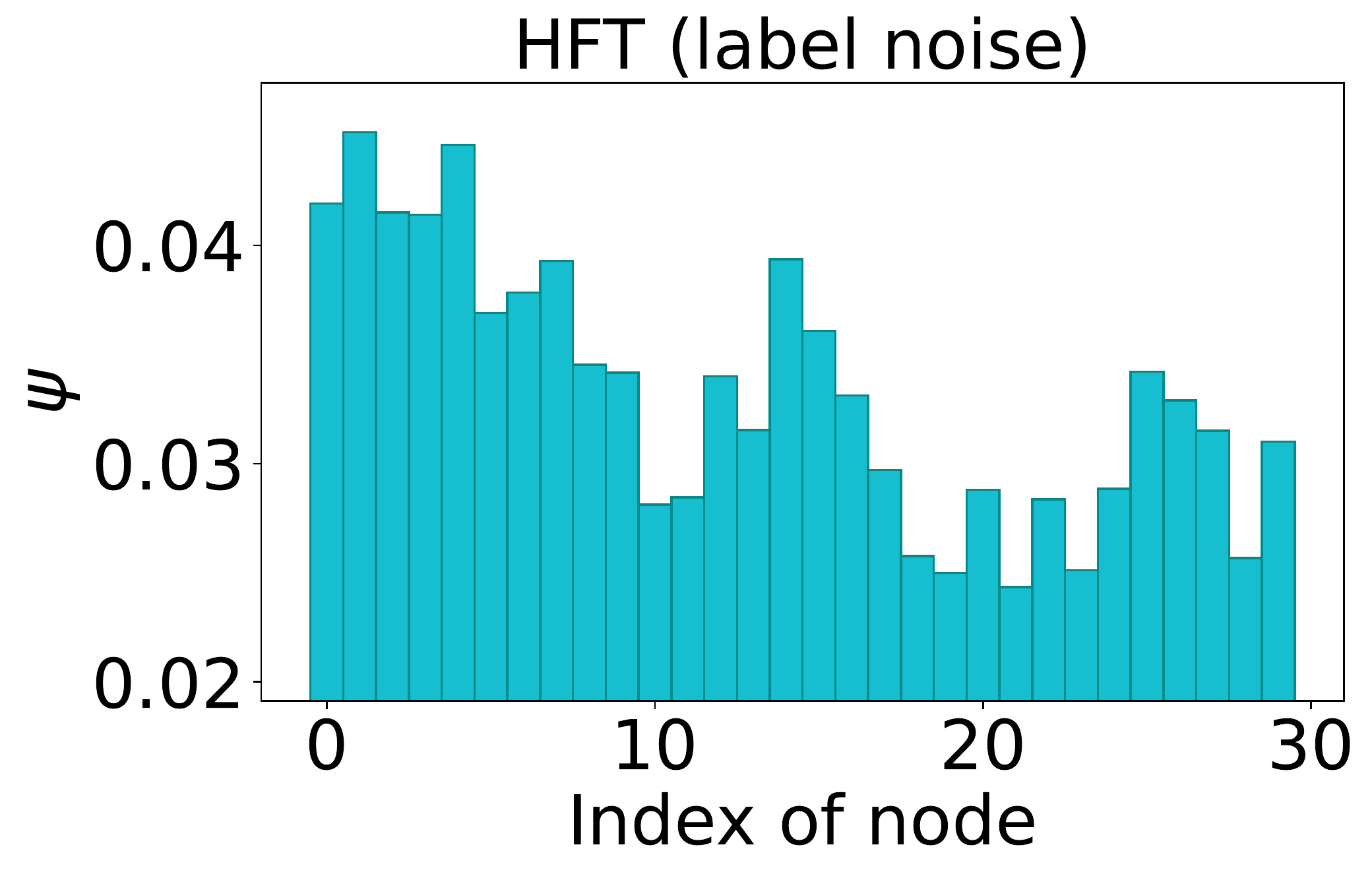}
  \includegraphics[width=0.195\textwidth]{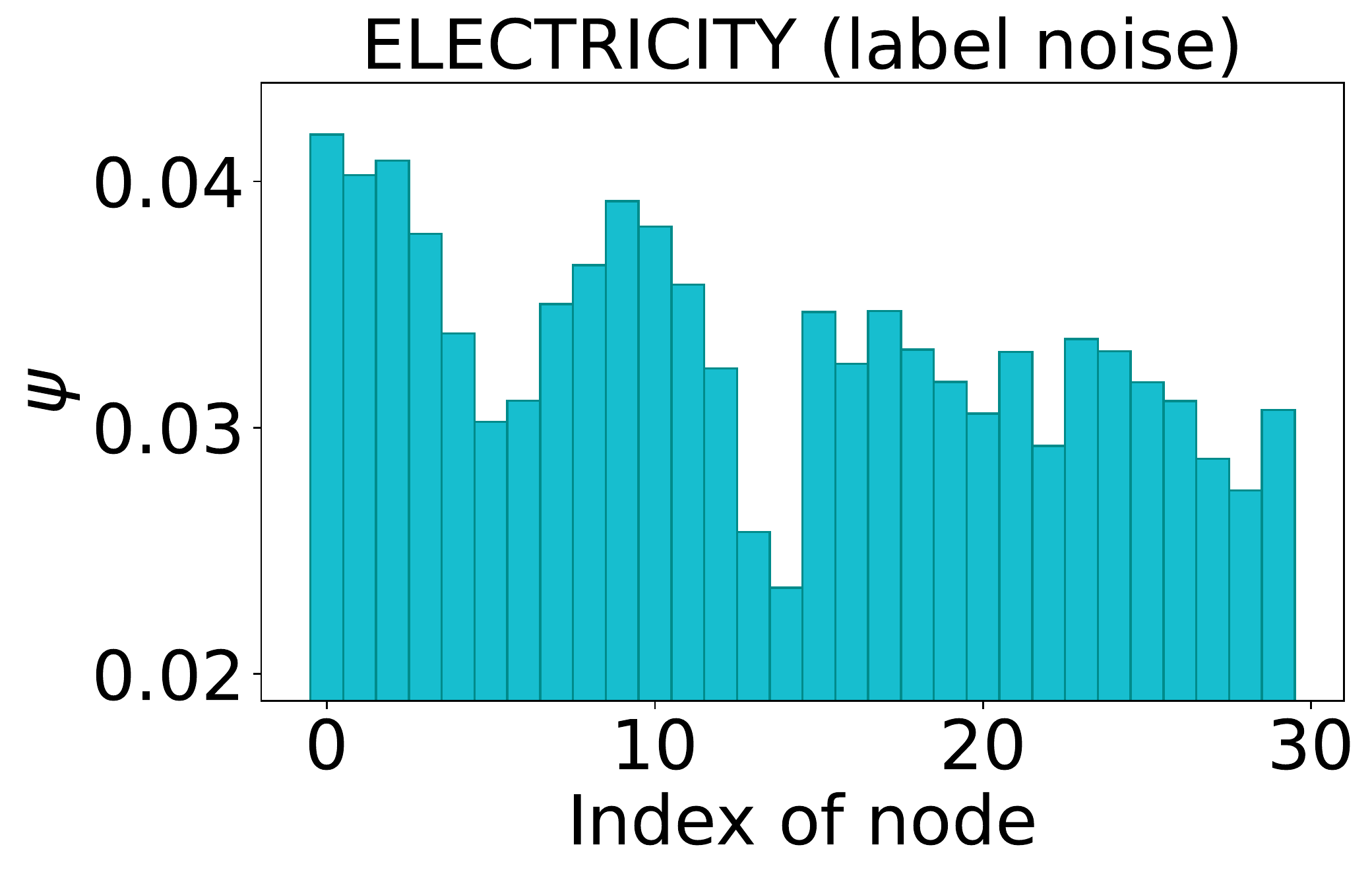}
  \includegraphics[width=0.195\textwidth]{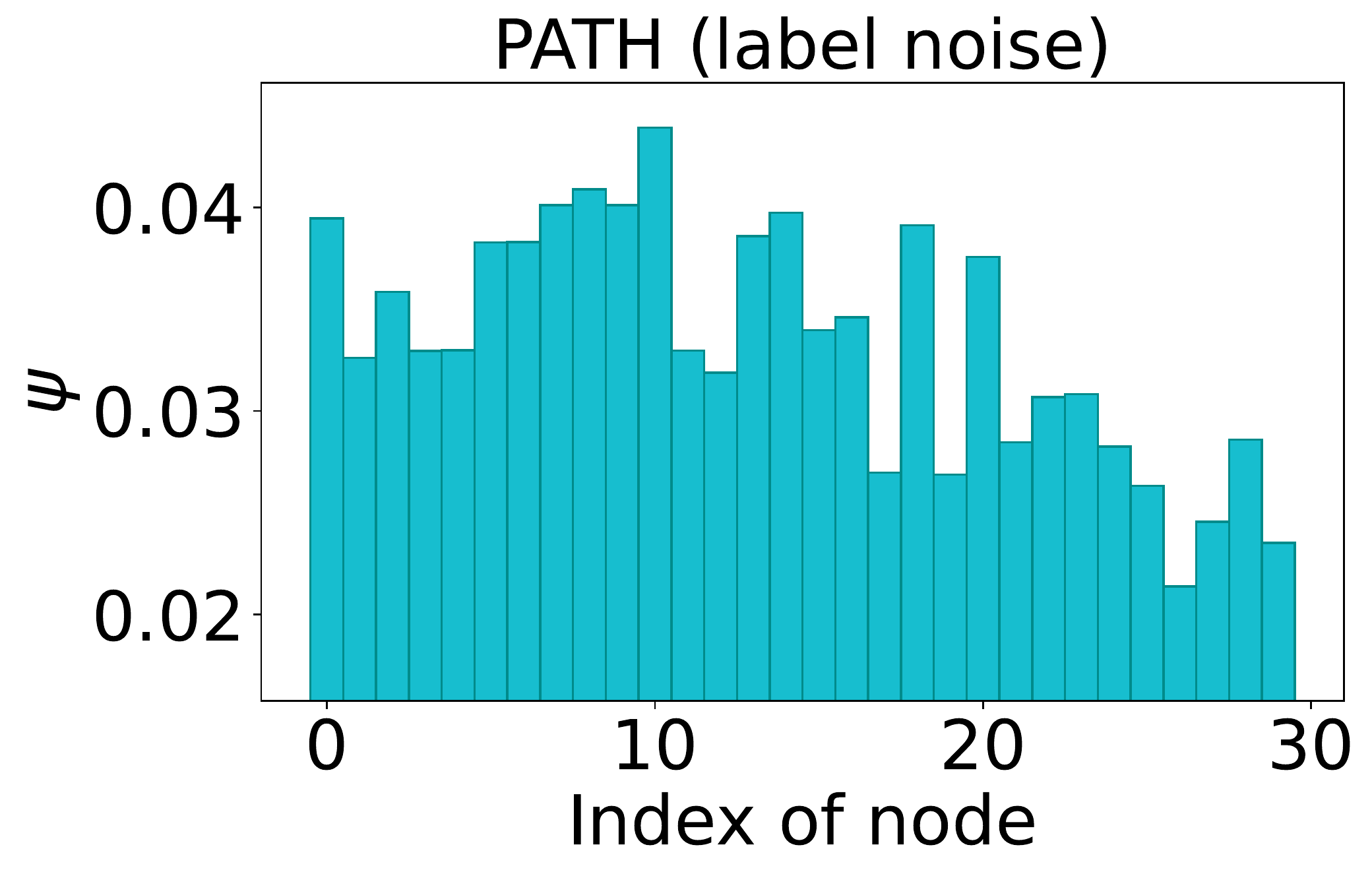}

  \includegraphics[width=0.195\textwidth]{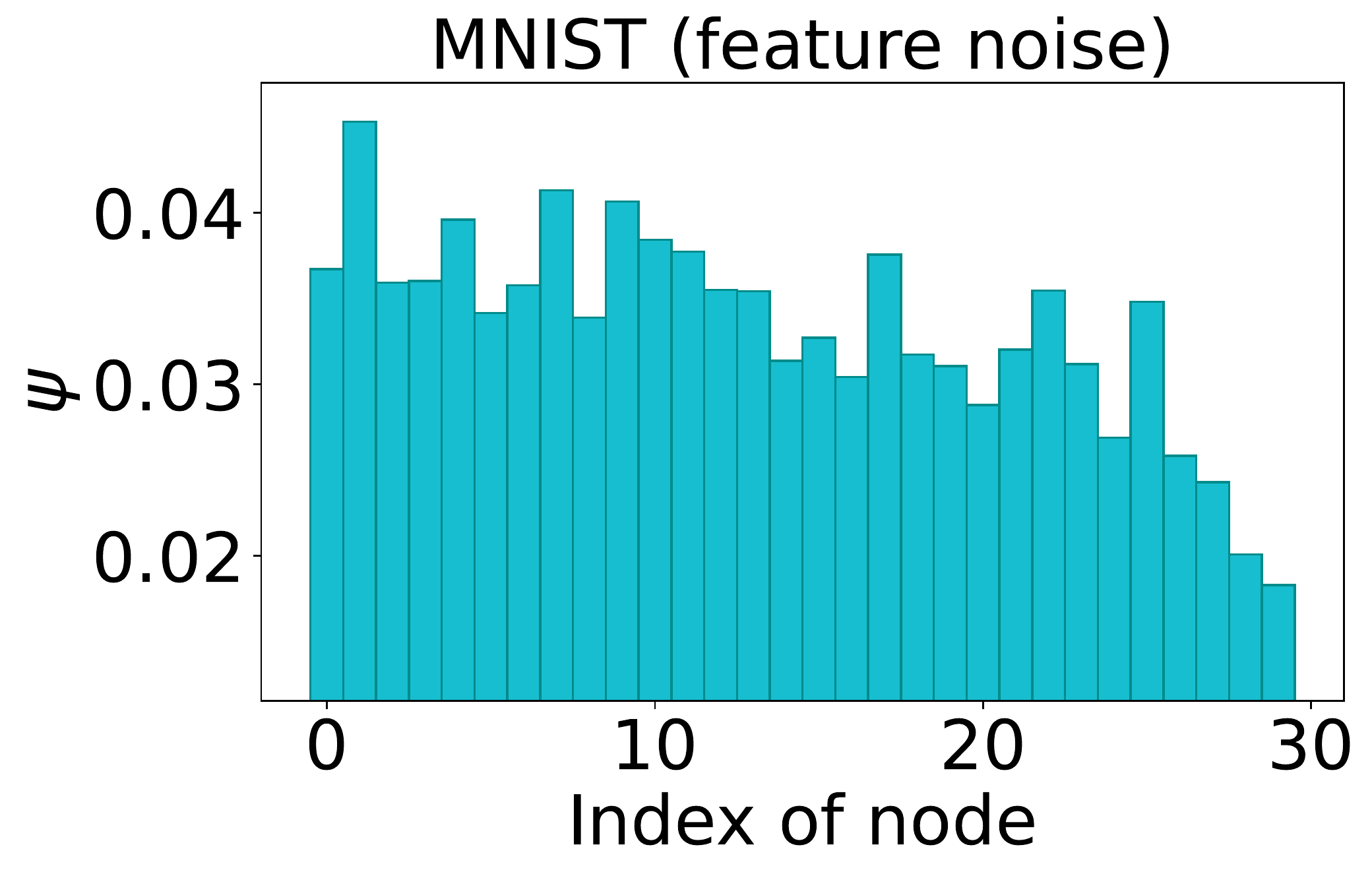}
  \includegraphics[width=0.195\textwidth]{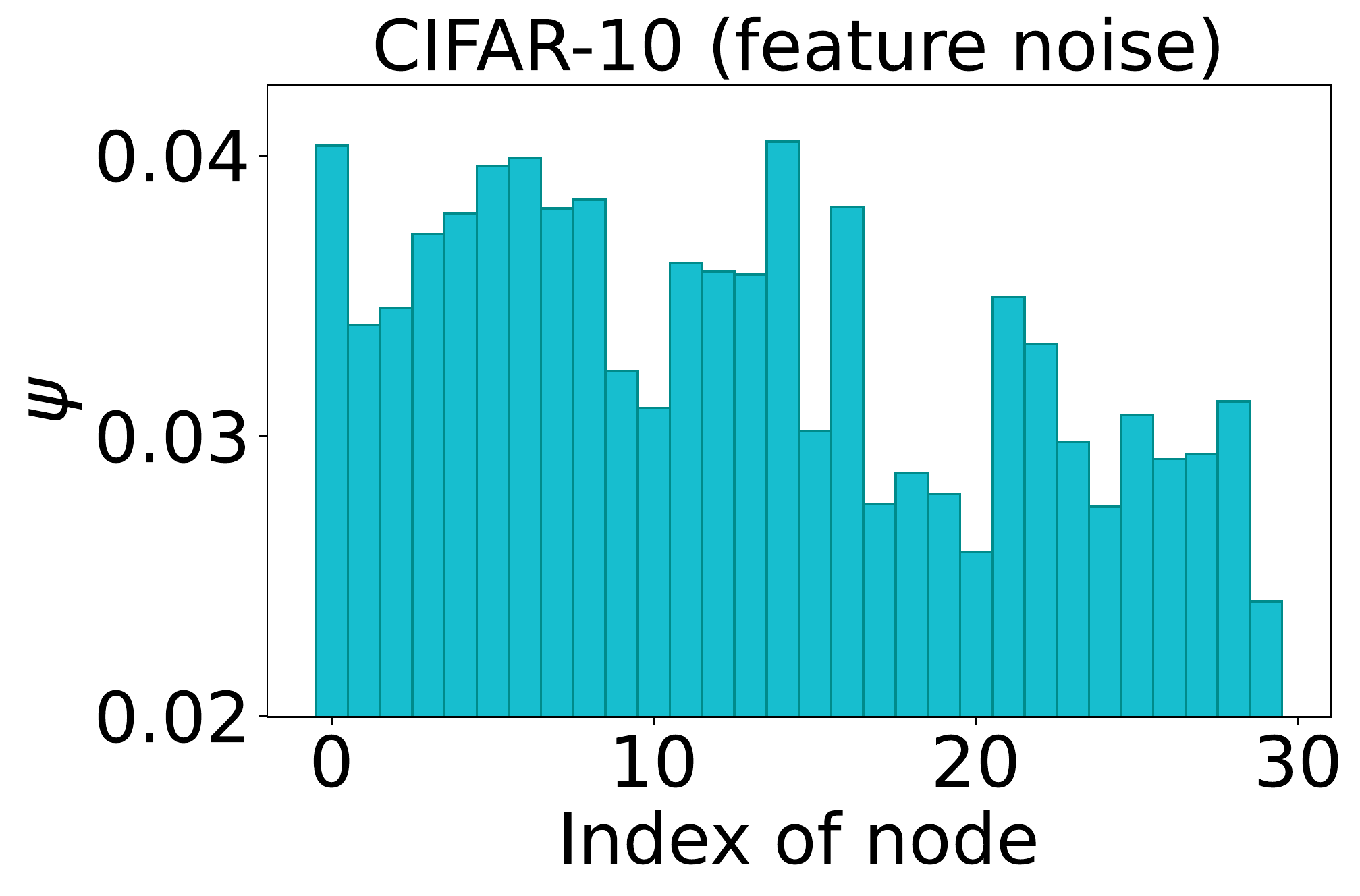}
  \includegraphics[width=0.195\textwidth]{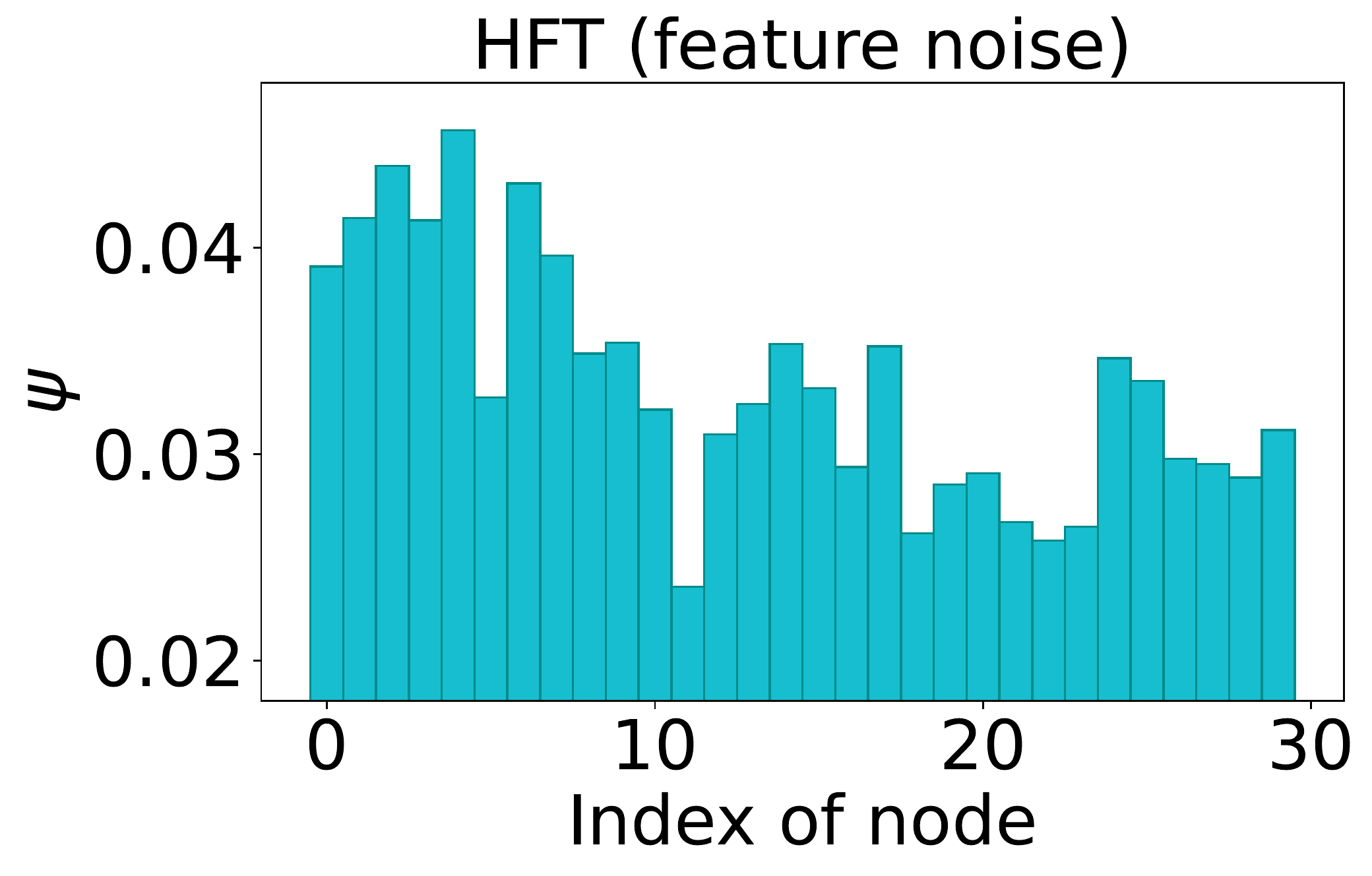}
  \includegraphics[width=0.195\textwidth]{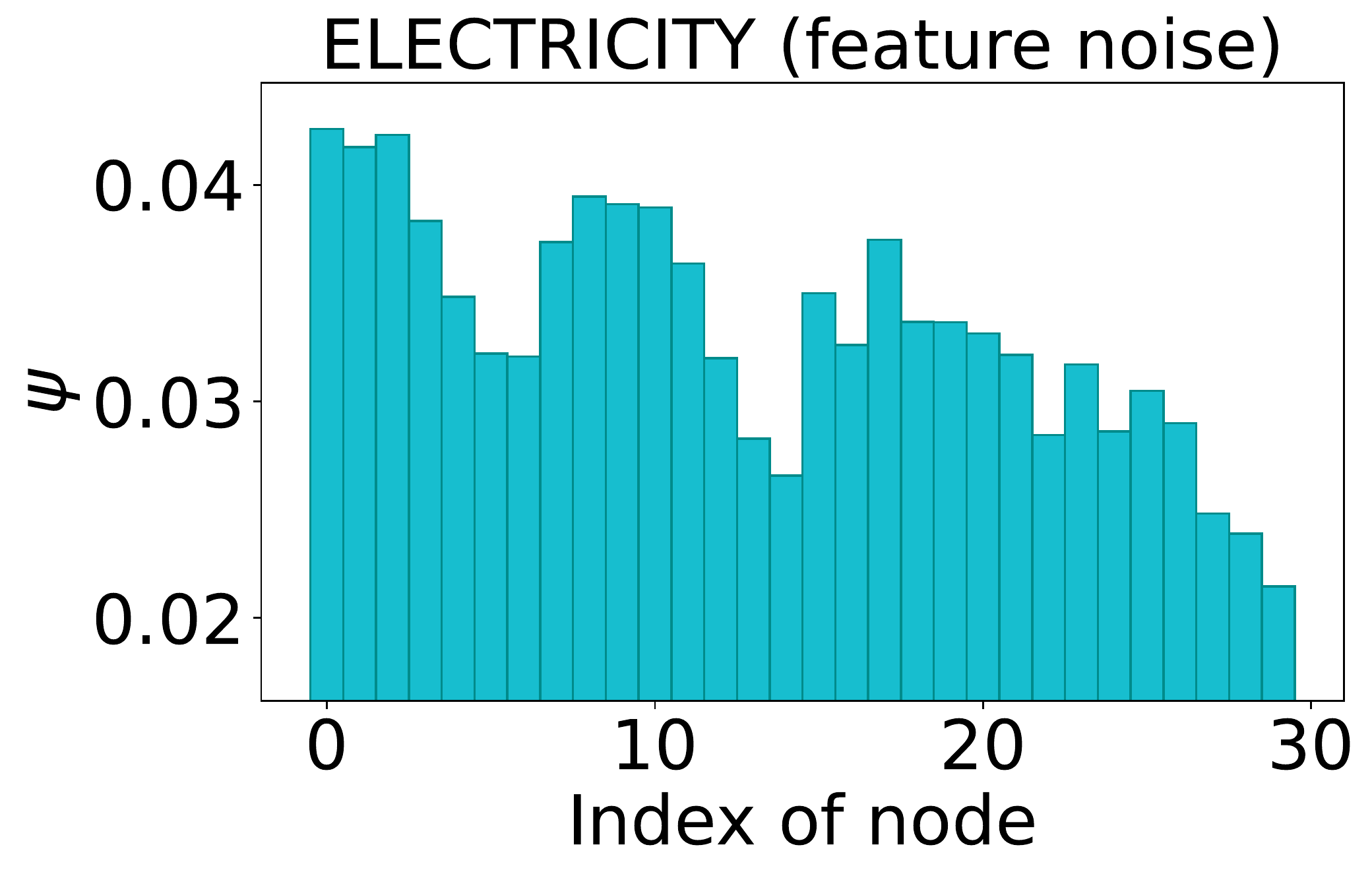}
  \includegraphics[width=0.195\textwidth]{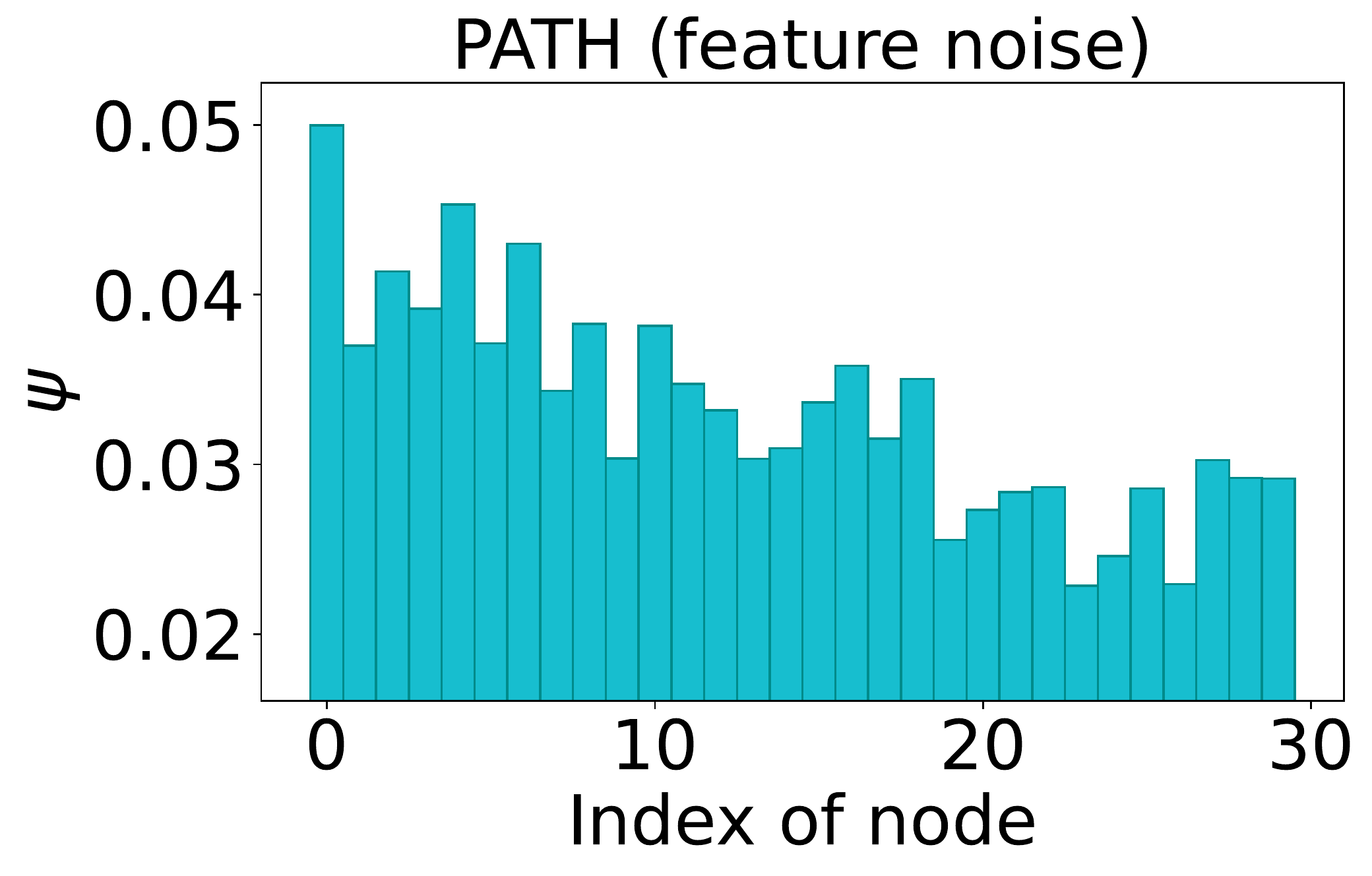}

  \caption{$\boldsymbol{\psi}_{T_{\alpha}}$ vs. node index in an increasing order of $\zeta_i$, the proportion noisy data. In general, higher $\zeta_i$ leads to lower $\psi_{i,T_{\alpha}}$.
    }
    \label{fig:sv_plot_appendix}
\end{figure}

\paragraph{Additional learning performance results.} We provide the performance result on the maximum online accuracy (final iteration accuracy) across different nodes in \cref{table:maximum-online-acc} which shows our approach outperforms other baselines overall. This can incentivize the more resourceful nodes to join the collaboration using our framework for better performance.

\begin{table}[ht]
\setlength{\tabcolsep}{4pt}
\caption{The maximum of the online accuracy~(standard error) over all nodes. Lower is better for ELECTRICITY.
}
\label{table:maximum-online-acc}
\begin{center}
\resizebox{\linewidth}{!}{
\begin{tabular}{c|ccccc|ccccc}
\multicolumn{1}{c}{}  & \multicolumn{5}{c}{\bf Feature Noise}&\multicolumn{5}{c}{\bf Label Noise}\\
\toprule
 & MNIST & CIFAR-10 & HFT & ELECTRICITY & PATH & MNIST & CIFAR-10 & HFT & ELECTRICITY & PATH \\
 \midrule
FedAvg&0.487(0.019)&0.167(0.011)&0.501(0.045)&1.411(0.081)&0.258(0.004)
&0.512(0.012)&0.162(0.009)&0.474(0.048)&1.388(0.052)&0.311(0.009)\\
qFFL&0.101(0.011)&0.100(0.004)&0.281(0.079)&1.413(0.055)&0.101(0.011)
&0.112(0.006)&0.103(0.002)&0.374(0.091)&1.618(0.117)&0.109(0.013)\\
FGFL&0.491(0.017)&0.170(0.010)&0.548(0.013)&1.645(0.096)&0.155(0.009)
&0.547(0.008)&0.159(0.008)&0.553(0.010)&1.946(0.046)&0.188(0.010)\\
GoG&0.588(0.014)&\textbf{0.197(0.005)}&0.559(0.015)&1.402(0.031)&0.300(0.004)
&0.612(0.006)&0.197(0.005)&0.539(0.016)&1.674(0.079)&0.344(0.006)\\
Standalone&0.571(0.012)&0.174(0.004)&0.561(0.014)&1.946(0.073)&0.266(0.004)
&0.587(0.004)&0.178(0.003)&\textbf{0.587(0.014)}&1.889(0.102)&0.263(0.002)\\
\midrule 
Ours&\textbf{0.613(0.009)}&0.196(0.007)&\textbf{0.581(0.014)}&\textbf{0.140(0.002)}&\textbf{0.305(0.005)}
&\textbf{0.664(0.003)}&\textbf{0.206(0.009)}&0.566(0.014)&\textbf{0.126(0.004)}&\textbf{0.384(0.008)}\\
\bottomrule 
\multicolumn{5}{c}{} \\
\multicolumn{1}{c}{}  &\multicolumn{5}{c}{\bf Quantity} &\multicolumn{5}{c}{\bf Missing Values} \\
\toprule 
 & MNIST & CIFAR-10 & HFT & ELECTRICITY & PATH & MNIST & CIFAR-10 & HFT & ELECTRICITY & PATH \\
 \midrule 
FedAvg&0.512(0.010)&0.139(0.009)&0.464(0.031)&1.759(0.128)&0.303(0.007)
&0.470(0.011)&0.150(0.003)&0.463(0.047)&1.429(0.057)&0.295(0.006)\\
qFFL&0.114(0.010)&0.104(0.004)&0.350(0.071)&1.697(0.148)&0.129(0.007)
&0.112(0.021)&0.108(0.004)&0.317(0.065)&1.425(0.080)&0.099(0.006)\\
FGFL&0.580(0.011)&0.170(0.006)&0.533(0.020)&1.956(0.066)&0.185(0.007)
&0.583(0.005)&0.157(0.010)&0.558(0.016)&2.614(0.066)&0.185(0.018)\\
GoG&0.615(0.009)&0.189(0.004)&0.545(0.020)&1.990(0.072)&0.333(0.005)
&0.572(0.011)&0.189(0.003)&\textbf{0.578(0.012)}&1.740(0.087)&0.308(0.004)\\
Standalone&0.646(0.006)&0.178(0.003)&0.557(0.014)&1.926(0.052)&0.278(0.004)
&0.630(0.003)&0.173(0.003)&0.554(0.013)&1.978(0.052)&0.260(0.003)\\
\midrule 
Ours&\textbf{0.654(0.009)}&\textbf{0.214(0.007)}&\textbf{0.571(0.012)}&\textbf{0.114(0.002)}&\textbf{0.357(0.006)}
&\textbf{0.635(0.003)}&\textbf{0.195(0.004)}&0.569(0.016)&\textbf{0.127(0.003)}&\textbf{0.333(0.003)}\\
\bottomrule 
\end{tabular}
}

\end{center}
\end{table}

\paragraph{Additional contribution estimates vs.~memory size/exploration ratio result in FRL on SpaceInvaders and Pong.} We provide the contribution estimates result of our framework on SpaceInvaders and Pong in \cref{fig:RL-additional-appendix}. The results from both these games/environments provide the consistent observations where a small memory size/a well-moderated exploration ratio results in a high contribution.

Note that our finding from \cref{fig:RL-additional} (left) is consistent with \cite{zhang2018deeper} who has a similar setting to ours, and empirically shows that a memory size of $10^4$ leads to better performance and faster improvement than $10^6$.

\begin{figure}[ht]
    \centering 
  \includegraphics[width=0.35\textwidth]{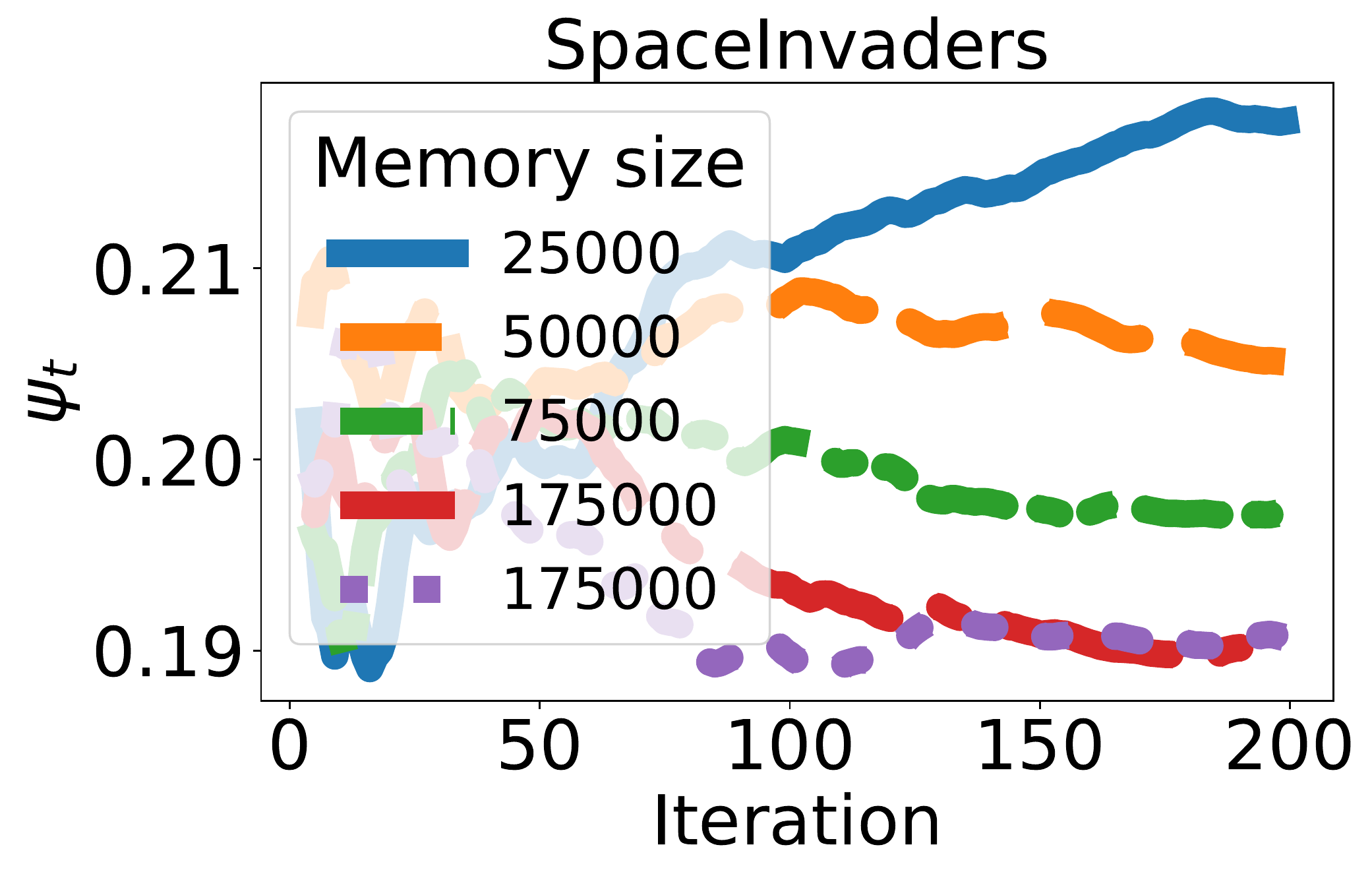}
  \includegraphics[width=0.35\textwidth]{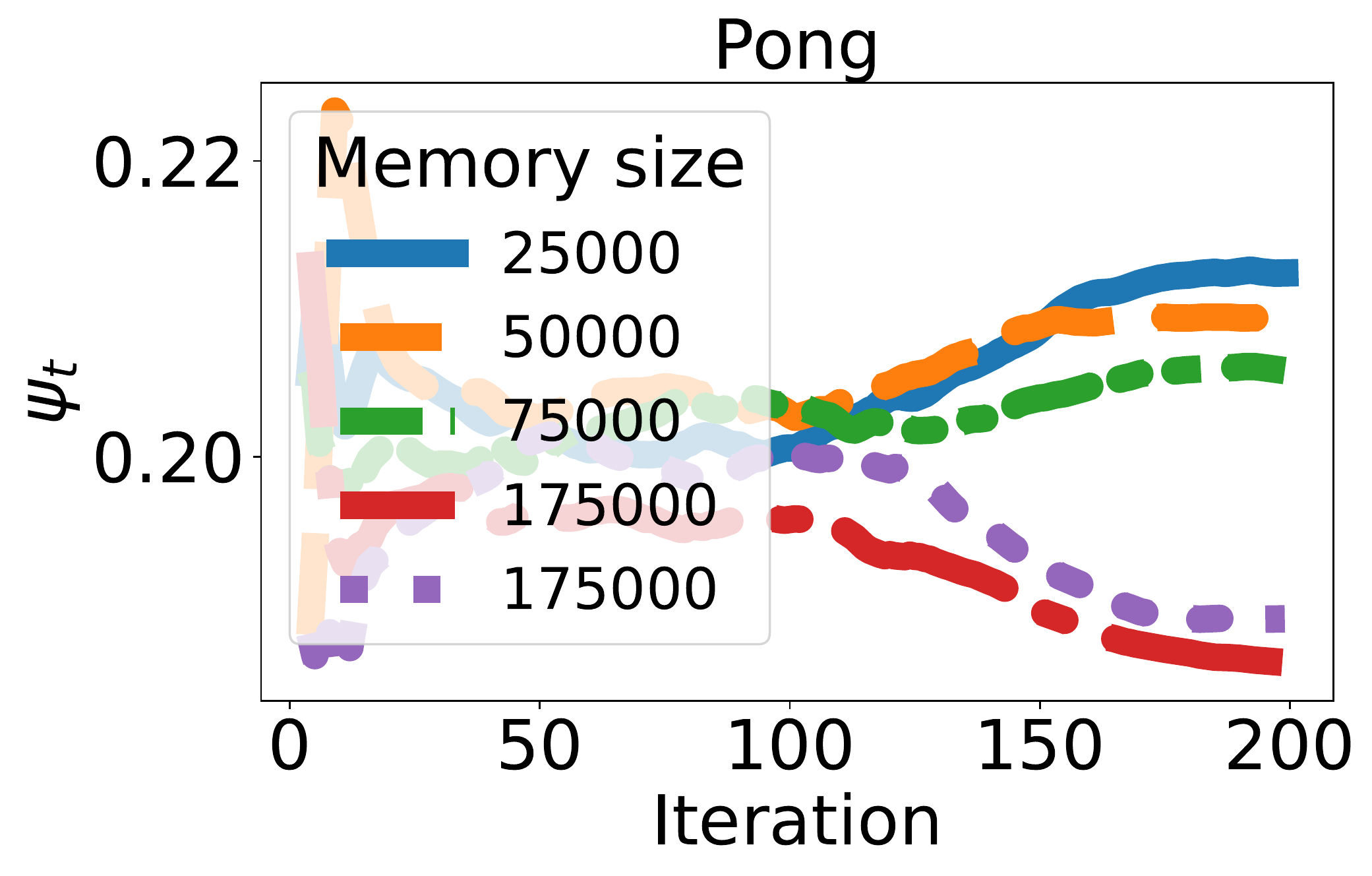} \\
  
  \includegraphics[width=0.35\textwidth]{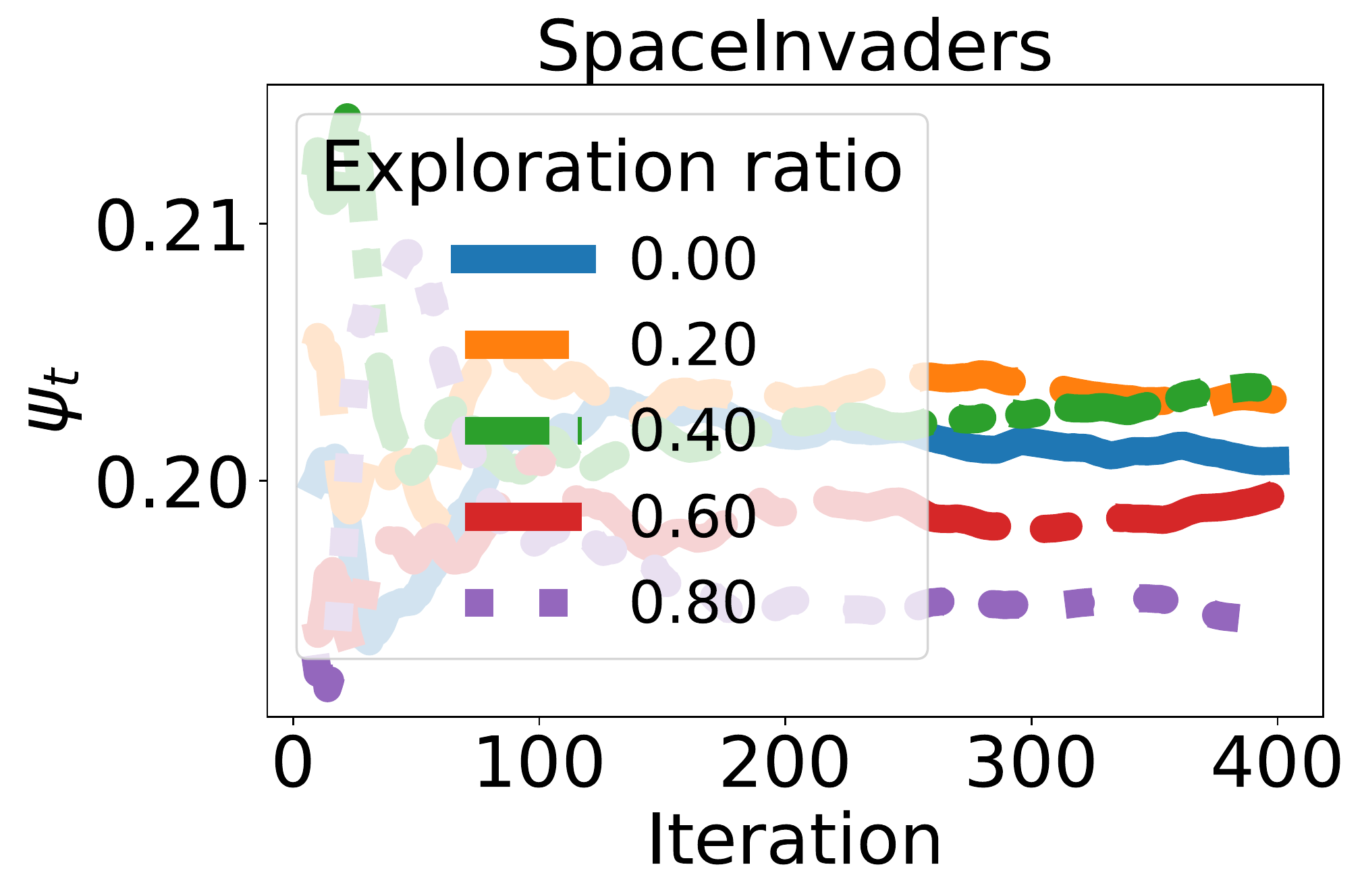}
  \includegraphics[width=0.35\textwidth]{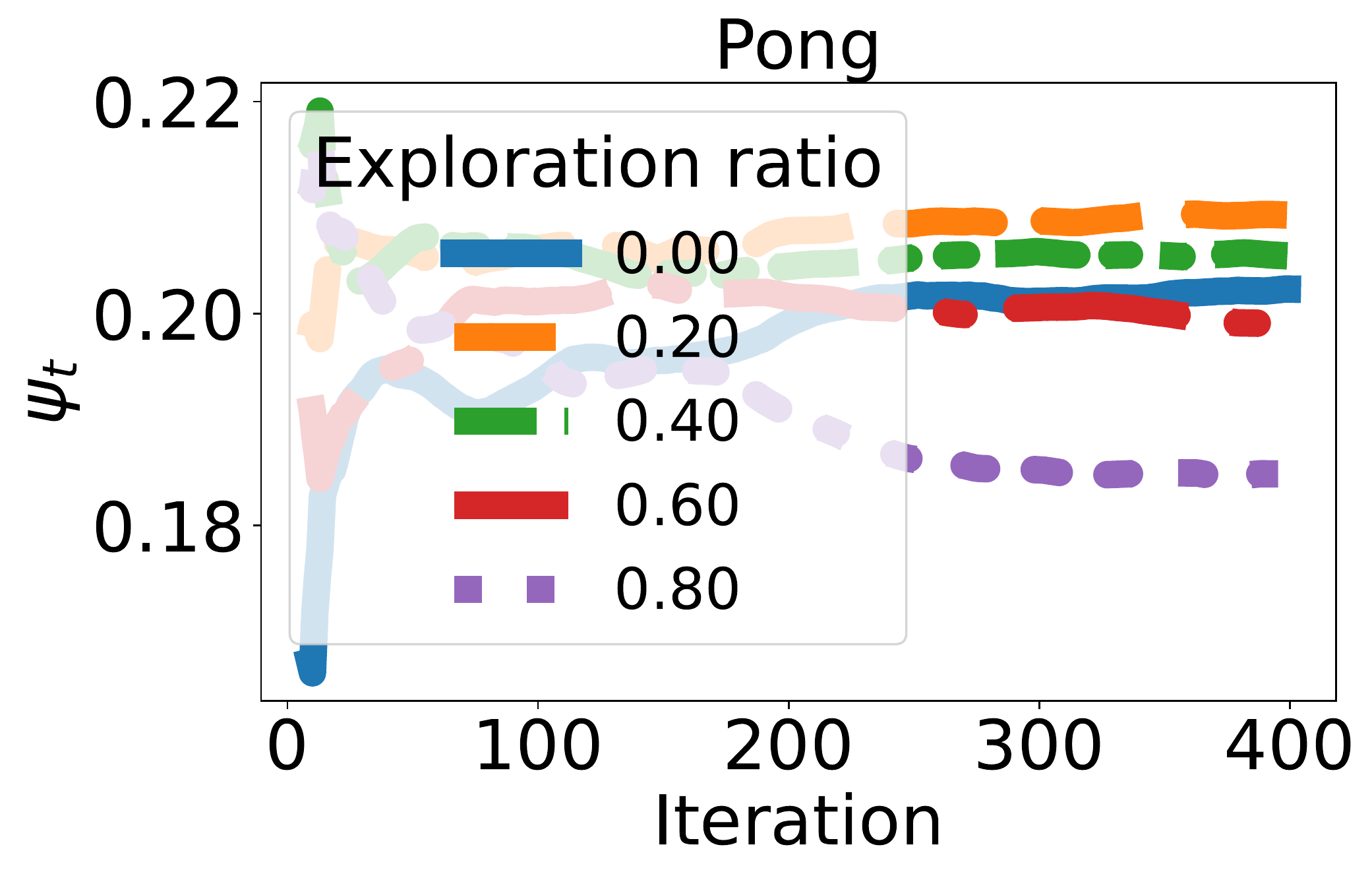}

  \caption{Contribution estimates $\boldsymbol{\psi}_t$ of nodes with different memory size (exploration ratio), smoothing (over $5$ consecutive $\psi_{i,t}$) is applied for plotting. 
  }
    \label{fig:RL-additional-appendix}
\end{figure}

\clearpage
\subsection{Additional Equality Comparison}
We provide additional equality comparison across different baselines and settings in \cref{table:standard-dev-online-loss} w.r.t.~loss and \cref{table:standard-dev-online-accu} w.r.t.~accuracy. In general a lower standard deviation suggests the performance among the nodes are more equitable~\cite{Li2019-fedavg-noniid}. While qFFL seems to have lowest standard deviation overall, the difference (in terms of standard deviation) among baselines (excluding Standalone) is quite small (all smaller or around $10^{-3}$).

\begin{table}[ht]
\setlength{\tabcolsep}{4pt}
\small
\caption{Standard deviation of the online loss and (standard error over $5$ runs) over all nodes.}
\label{table:standard-dev-online-loss}
\begin{center}
\begin{tabular}{c|ccccc}
\multicolumn{1}{c}{}  &\multicolumn{4}{c}{\bf Feature Noise} \\
\toprule 
 & MNIST & CIFAR-10 & HFT & ELECTRICITY & PATH \\
 \midrule 
FedAvg&4.8e-05(6.1e-06)&4.7e-06(1.5e-07)&7.1e-05(1.3e-05)&\textbf{6.1e-05(3.8e-06)}&1.9e-05(1.2e-06)\\
qFFL&\textbf{6.0e-06(4.2e-07)}&\textbf{3.7e-06(3.8e-07)}&\textbf{3.4e-08(2.1e-09)}&7.6e-05(5.0e-06)&\textbf{9.7e-07(1.0e-07)}\\
FGFL&2.5e-04(3.3e-05)&6.6e-06(1.3e-07)&8.9e-04(6.4e-04)&2.9e-04(5.6e-05)&4.3e-06(1.5e-06)\\
GoG&1.9e-04(7.6e-06)&1.3e-05(1.3e-06)&1.2e-04(5.1e-06)&1.1e-04(3.7e-06)&1.2e-04(1.0e-05)\\
Standalone&4.4e-03(2.8e-04)&1.5e-04(1.1e-05)&2.1e-03(9.6e-05)&4.4e-03(1.4e-04)&6.6e-04(1.3e-05)\\
\midrule 
Ours&4.1e-04(4.3e-05)&1.4e-05(1.4e-06)&2.1e-04(8.9e-05)&1.2e-04(7.1e-06)&2.9e-05(5.2e-06)\\
\bottomrule 
\multicolumn{5}{c}{} \\
\multicolumn{1}{c}{}  & \multicolumn{4}{c}{\bf Label Noise}\\
 \toprule
 & MNIST & CIFAR-10 & HFT & ELECTRICITY & PATH \\
 \midrule
FedAvg&5.7e-05(2.0e-06)&\textbf{4.3e-06(2.3e-07)}&6.5e-05(1.1e-05)&\textbf{6.5e-05(4.7e-06)}&1.7e-05(1.3e-06)\\
qFFL&\textbf{6.7e-06(5.1e-07)}&5.3e-06(7.8e-07)&\textbf{3.8e-08(4.5e-09)}&8.4e-05(4.0e-06)&\textbf{1.1e-06(7.2e-08)}\\
FGFL&1.0e-04(1.1e-05)&8.7e-06(1.4e-06)&8.0e-04(5.2e-04)&6.9e-04(3.0e-05)&2.1e-06(2.9e-07)\\
GoG&1.3e-04(5.1e-06)&1.3e-05(3.9e-07)&1.4e-04(6.1e-06)&1.0e-04(2.0e-06)&1.8e-04(1.8e-05)\\
Standalone&2.4e-03(5.0e-05)&1.9e-04(7.0e-06)&1.9e-03(5.0e-05)&2.1e-03(8.4e-05)&4.9e-04(2.9e-05)\\

\midrule 
Ours&1.2e-04(2.4e-05)&1.8e-05(6.6e-07)&7.8e-04(1.5e-04)&1.1e-04(1.9e-05)&3.5e-05(1.1e-05)\\
\bottomrule 
\multicolumn{5}{c}{} \\
\multicolumn{1}{c}{}  & \multicolumn{4}{c}{\bf Quantity}\\
 \toprule
 & MNIST & CIFAR-10 & HFT & ELECTRICITY & PATH \\
 \midrule
FedAvg&6.0e-05(3.0e-06)&\textbf{4.7e-06(5.5e-07)}&7.1e-05(6.7e-06)&1.5e-04(1.2e-05)&2.1e-05(7.3e-07)\\
qFFL&\textbf{4.7e-06(6.7e-07)}&5.1e-06(7.7e-07)&\textbf{1.6e-06(2.4e-07)}&1.8e-04(9.6e-06)&\textbf{1.1e-06(1.2e-07)}\\
FGFL&2.9e-04(1.2e-05)&8.1e-06(9.3e-07)&5.6e-04(1.5e-04)&1.4e-04(1.1e-05)&2.3e-06(3.9e-07)\\
GoG&1.1e-04(1.4e-06)&1.5e-05(1.7e-06)&1.3e-04(1.0e-05)&1.3e-04(7.9e-06)&2.8e-04(3.9e-05)\\
Standalone&1.8e-03(7.9e-05)&1.9e-04(8.2e-06)&3.2e-03(1.7e-04)&1.2e-03(7.3e-05)&6.5e-04(1.6e-05)\\
\midrule 
Ours&3.9e-04(2.8e-05)&3.5e-05(5.8e-06)&9.3e-04(2.0e-04)&\textbf{7.9e-05(1.2e-05)}&3.4e-05(6.0e-06)\\
\bottomrule 
\multicolumn{5}{c}{} \\
\multicolumn{1}{c}{}  & \multicolumn{4}{c}{\bf Missing Values}\\
 \toprule
 & MNIST & CIFAR-10 & HFT & ELECTRICITY & PATH \\
 \midrule
FedAvg&4.7e-05(4.3e-06)&\textbf{4.5e-06(3.3e-07)}&6.1e-05(5.3e-06)&8.8e-05(7.5e-06)&2.1e-05(1.9e-06)\\
qFFL&\textbf{3.9e-06(6.7e-07)}&4.6e-06(2.8e-07)&\textbf{3.6e-08(6.4e-09)}&\textbf{7.5e-05(5.2e-06)}&\textbf{9.8e-07(1.3e-07)}\\
FGFL&4.0e-05(5.3e-06)&6.6e-06(6.8e-07)&1.3e-04(2.3e-05)&7.2e-04(3.3e-05)&1.9e-06(1.4e-07)\\
GoG&9.5e-05(3.2e-06)&1.3e-05(4.3e-07)&1.2e-04(5.0e-06)&1.1e-04(8.6e-06)&2.3e-04(2.7e-05)\\
Standalone&1.6e-03(6.1e-05)&1.6e-04(1.6e-05)&1.7e-03(1.1e-04)&1.5e-03(5.0e-05)&5.1e-04(2.3e-05)\\
\midrule 
Ours&5.9e-05(3.6e-06)&1.7e-05(2.0e-06)&5.0e-04(5.4e-05)&8.0e-05(7.0e-06)&1.4e-05(1.1e-06)\\
\bottomrule 
\end{tabular}
\end{center}
\end{table}

\begin{table}[ht]
\setlength{\tabcolsep}{4pt}
\small
\caption{Standard deviation of the online accuracy and (standard error over $5$ runs) over all nodes. 
}
\label{table:standard-dev-online-accu}
\begin{center}
\begin{tabular}{c|ccccc}
\multicolumn{1}{c}{}  &\multicolumn{4}{c}{\bf Feature Noise} \\
\toprule 
 & MNIST & CIFAR-10 & HFT & ELECTRICITY & PATH \\
 \midrule 
FedAvg&2.3e-03(3.1e-04)&4.2e-04(3.7e-05)&1.1e-03(4.2e-04)&1.3e-03(1.9e-04)&1.7e-03(4.7e-05)\\
qFFL&\textbf{7.2e-05(1.5e-05)}&\textbf{1.8e-05(5.0e-06)}&\textbf{1.5e-06(9.7e-07)}&\textbf{5.1e-07(1.7e-07)}&\textbf{1.3e-05(1.3e-05)}\\
FGFL&1.0e-02(3.3e-03)&6.5e-04(8.0e-05)&1.0e-02(9.1e-03)&3.3e-02(6.8e-03)&5.9e-04(2.3e-04)\\
GoG&9.1e-03(8.3e-04)&1.7e-03(1.4e-04)&2.3e-03(5.0e-04)&4.8e-03(5.5e-04)&6.8e-03(4.4e-04)\\
Standalone&6.8e-02(3.8e-03)&1.0e-02(7.8e-04)&1.1e-02(1.3e-03)&2.0e-01(1.2e-02)&3.2e-02(6.3e-04)\\

\midrule 
Ours&2.2e-03(4.0e-04)&6.5e-04(1.5e-04)&5.8e-05(4.4e-05)&3.0e-04(1.9e-05)&1.4e-03(2.1e-04)\\
\bottomrule 
\multicolumn{5}{c}{} \\
\multicolumn{1}{c}{}  & \multicolumn{4}{c}{\bf Label Noise}\\
 \toprule
 & MNIST & CIFAR-10 & HFT & ELECTRICITY & PATH \\
 \midrule
FedAvg&2.3e-03(1.5e-04)&4.8e-04(7.6e-05)&1.6e-03(5.0e-04)&1.6e-03(2.3e-04)&1.7e-03(1.9e-04)\\
qFFL&\textbf{5.9e-05(2.3e-05)}&\textbf{1.0e-05(6.3e-06)}&\textbf{4.6e-06(2.9e-06)}&\textbf{5.8e-07(1.2e-07)}&\textbf{5.8e-06(5.8e-06)}\\
FGFL&3.1e-03(1.3e-03)&1.0e-03(3.7e-04)&2.4e-03(1.1e-03)&1.1e-01(1.3e-02)&5.6e-04(8.2e-05)\\
GoG&6.9e-03(6.4e-04)&2.4e-03(1.2e-04)&2.0e-03(3.7e-04)&6.5e-03(2.7e-04)&8.6e-03(3.2e-04)\\
Standalone&4.1e-02(1.1e-03)&1.2e-02(7.0e-04)&4.9e-03(1.8e-03)&1.4e-01(1.5e-02)&2.0e-02(1.8e-03)\\

\midrule 
Ours&7.5e-04(2.1e-04)&6.3e-04(1.4e-04)&3.4e-04(2.5e-04)&3.3e-04(5.6e-05)&1.9e-03(1.6e-04)\\
\bottomrule 
\multicolumn{5}{c}{} \\
\multicolumn{1}{c}{}  & \multicolumn{4}{c}{\bf Quantity}\\
 \toprule
 & MNIST & CIFAR-10 & HFT & ELECTRICITY & PATH \\
 \midrule
FedAvg&2.1e-03(2.9e-04)&3.4e-04(3.2e-05)&1.2e-03(1.5e-04)&2.5e-03(2.7e-04)&1.6e-03(1.1e-04)\\
qFFL&\textbf{1.5e-04(4.4e-05)}&\textbf{9.8e-06(6.2e-06)}&\textbf{4.1e-06(2.5e-06)}&\textbf{1.3e-06(2.5e-07)}&\textbf{1.3e-05(1.3e-05)}\\
FGFL&5.2e-03(6.4e-04)&1.2e-03(3.0e-04)&2.2e-03(1.0e-03)&6.1e-02(9.4e-03)&1.9e-04(4.2e-05)\\
GoG&5.7e-03(6.7e-04)&2.5e-03(1.6e-04)&1.5e-03(3.3e-04)&7.2e-03(6.5e-04)&8.7e-03(7.2e-04)\\
Standalone&4.3e-02(1.0e-03)&1.1e-02(1.0e-03)&1.6e-02(2.3e-03)&6.4e-02(3.7e-03)&2.4e-02(1.1e-03)\\
\midrule 
Ours&1.8e-03(3.4e-04)&1.2e-03(3.4e-04)&5.4e-05(1.2e-05)&1.8e-04(2.1e-05)&2.7e-03(3.6e-04)\\
\bottomrule 
\multicolumn{5}{c}{} \\
\multicolumn{1}{c}{}  & \multicolumn{4}{c}{\bf Missing Values}\\
 \toprule
 & MNIST & CIFAR-10 & HFT & ELECTRICITY & PATH \\
 \midrule
FedAvg&1.9e-03(2.5e-04)&4.1e-04(3.3e-05)&1.7e-03(1.4e-04)&2.8e-03(2.8e-04)&1.6e-03(1.8e-04)\\
qFFL&\textbf{3.8e-05(4.8e-06)}&\textbf{3.7e-06(3.7e-06)}&\textbf{7.3e-06(2.5e-06)}&\textbf{5.4e-07(5.0e-08)}&\textbf{2.2e-05(2.2e-05)}\\
FGFL&7.0e-04(1.8e-04)&5.5e-04(9.9e-05)&7.9e-04(3.6e-04)&1.3e-01(8.7e-03)&4.3e-04(7.4e-05)\\
GoG&4.4e-03(2.9e-04)&1.9e-03(6.2e-05)&1.1e-03(2.3e-04)&1.1e-02(1.9e-03)&5.2e-03(2.7e-04)\\
Standalone&2.7e-02(4.3e-04)&1.3e-02(7.4e-04)&8.8e-03(1.7e-03)&8.9e-02(5.3e-03)&2.1e-02(1.6e-03)\\
\midrule 
Ours&3.3e-04(3.9e-05)&5.8e-04(8.2e-05)&2.7e-04(2.5e-04)&2.6e-04(2.9e-05)&1.4e-03(2.2e-04)\\
\bottomrule 

\end{tabular}
\end{center}
\end{table}
\subsection{Additional comparison to the simple extension of FedAvg using fairness or equality constraints}\label{appendix:constraint}

For fairness, to the best of our knowledge, there does not seem to be a simple/straightforward or sensible extension to achieve fairness (i.e., giving commensurately more rewards to the nodes with higher contributions). As for equality, we can consider the following simple extension (by penalizing the deviation from equal performance in the objective function):
    $$\min_{\theta} \mathbf{J}(\theta)=\sum_{i=1}^n p_i \mathbf{L}(\theta; \mathcal{D_i}) \quad s.t.\sum_{i=1}^{n}\sum_{j=1}^{n} \bigl(\mathbf{L}(\theta; \mathcal{D_i}) - \mathbf{L}(\theta; \mathcal{D_j})\bigl)^2 \le \alpha \ .$$ Intuitively, it minimizes the overall federated objective and constraints difference of the losses among different nodes within some $\alpha$ to achieve equality of model performance among nodes. To make this optimization tractable, we transform the constraint to a penalty term in the objective as follows, $$\mathbf{J}(\theta)=\sum_{i=1}^n p_i \mathbf{L}(\theta; \mathcal{D_i}) + \lambda \sum_{i=1}^{n}\sum_{j=1}^{n} \bigl(\mathbf{L}(\theta; \mathcal{D_i}) - \mathbf{L}(\theta; \mathcal{D_j})\bigl)^2 $$ where $\lambda$ balances the importance of model performance and equality. Based on this, the modified FedAvg algorithm computes gradient update (for each node) as $$\sum_{i=1}^n p_i \nabla_{\theta}\mathbf{L}(\theta; \mathcal{D_i}) + 2\lambda\sum_{j=1}^n\sum_{k=1}^n \bigl(\mathbf{L}(\theta; \mathcal{D_j}) - \mathbf{L}(\theta; \mathcal{D_k})\bigl)\bigl(\nabla_{\theta}\mathbf{L}(\theta; \mathcal{D_j}) - \nabla_{\theta}\mathbf{L}(\theta; \mathcal{D_k})\bigl)\ .$$
    
    We compare the fairness, equality, and performance result of this constraint-based approach (denoted by Cons in the following tables) with our approach and other baselines. The parameter $\lambda$ is selected/tuned over the range $[0,1]$ to be $\lambda=0.3$ here since it achieves some degree of equality while maintaining a relatively good performance.

\begin{table}[!ht]
    \caption{Fairness comparison. $\rho(\text{online loss}, \zeta)$ FOIL under the setting of feature noise. Higher $\rho$ implies better fairness.
    }
    \centering
    \label{tab:fairness-online-constraint}
    \begin{tabular}{c|ccccc}
    \toprule 
     & MNIST & CIFAR-10 & HFT & ELECTRICITY & PATH \\
     \midrule 
    FedAvg&-0.020(0.097)&0.137(0.049)&0.038(0.045)&-0.033(0.038)&0.135(0.026)\\
    qFFL&-0.022(0.114)&-0.109(0.140)&0.060(0.078)&0.036(0.126)&-0.236(0.030)\\
    FGFL&0.556(0.032)&0.313(0.081)&0.055(0.033)&0.476(0.057)&0.419(0.098)\\
    GoG&0.551(0.023)&0.130(0.067)&0.201(0.021)&0.512(0.027)&0.189(0.102)\\
    Cons&0.101(0.103)&-0.085(0.136)&0.115(0.092)&0.033(0.048)&0.380(0.072)\\
    \midrule 
    Ours&0.647(0.018)&0.400(0.069)&0.378(0.055)&0.676(0.018)&0.557(0.060)\\
    \bottomrule 
    \end{tabular}
\end{table}

\begin{table}[!ht]
    \caption{Average of online accuracy (standard error) over all nodes under the setting of feature noise. For ELECTRICITY, we measure MAPE, so lower is better.
    }
    \centering
    \label{tab:online-acc-constraint}
    \begin{tabular}{c|ccccc}
    \toprule 
     & MNIST & CIFAR-10 & HFT & ELECTRICITY & PATH \\
     \midrule 
    FedAvg&0.483(0.019)&0.166(0.011)&0.499(0.046)&1.408(0.081)&0.255(0.004)\\
    qFFL&0.101(0.011)&0.100(0.004)&0.281(0.079)&1.413(0.055)&0.101(0.011)\\
    FGFL&0.485(0.018)&0.169(0.010)&0.496(0.056)&1.571(0.090)&0.154(0.009)\\
    GoG&0.572(0.015)&0.193(0.006)&0.556(0.016)&1.394(0.032)&0.288(0.004)\\ 
    Standalone&0.481(0.013)&0.153(0.004)&0.540(0.014)&1.581(0.083)&0.202(0.003)\\ 
    Cons&0.410(0.005)&0.154(0.012)&0.531(0.011)&2.314(0.091)&0.201(0.008)\\
    \midrule 
    Ours&0.611(0.009)&0.195(0.007)&0.581(0.014)&0.139(0.002)&0.302(0.005)\\
    \bottomrule 
    \end{tabular}
\end{table}

\begin{table}[!ht]
    \caption{Standard deviation of the online loss and (standard error over 5 runs) over all nodes. A lower value implies better equality. 
    }
    \centering
    \label{tab:std-online-loss-constraint}
    \begin{tabular}{c|ccccc}
    \toprule 
     & MNIST & CIFAR-10 & HFT & ELECTRICITY & PATH \\
     \midrule 
    FedAvg&4.8e-05(6.1e-06)&4.7e-06(1.5e-07)&7.1e-05(1.3e-05)&6.1e-05(3.8e-06)&1.9e-05(1.2e-06)\\
    qFFL&6.0e-06(4.2e-07)&3.7e-06(3.8e-07)&3.4e-08(2.1e-09)&7.6e-05(5.0e-06)&9.7e-07(1.0e-07)\\
    FGFL&2.5e-04(3.3e-05)&6.6e-06(1.3e-07)&8.9e-04(6.4e-04)&2.9e-04(5.6e-05)&4.3e-06(1.5e-06)\\
    GoG&1.9e-04(7.6e-06)&1.3e-05(1.3e-06)&1.2e-04(5.1e-06)&1.1e-04(3.7e-06)&1.2e-04(1.0e-05)\\
    Standalone&4.4e-03(2.8e-04)&1.5e-04(1.1e-05)&2.1e-03(9.6e-05)&4.4e-03(1.4e-04)&6.6e-04(1.3e-05)\\
    Cons&3.8e-05(3.1e-06)&2.2e-06(3.3e-07)&5.7e-05(7.0e-06)&4.7e-04(1.1e-04)&1.3e-05(1.3e-06)\\
    \midrule 
    Ours&4.1e-04(4.3e-05)&1.4e-05(1.4e-06)&2.1e-04(8.9e-05)&1.2e-04(7.1e-06)&2.9e-05(5.2e-06)\\    \bottomrule 
    \end{tabular}
\end{table}

From Table~\ref{tab:fairness-online-constraint}, Cons achieves very poor fairness compared to our approach since it does not have a fairness mechanism.

From Table~\ref{tab:online-acc-constraint}, Cons sacrifice performance by a lot (compared to FedAvg) due to its additional constraint in the objective function. 

From Table~\ref{tab:std-online-loss-constraint}, Cons achieves slightly better equality than our approach (better than ours in MNIST, CIFAR-10, HFT, worse than or similar to ours in ELECTRICITY and PATH). However, $q$-FFL outperforms Cons significantly in preserving equality. Additionally, according to the result in Tables A and B, Cons sacrifices the model performance considerably to achieve the marginal improvement of equality and achieves low fairness due to the lack of a fairness mechanism.

In conclusion, this simple extension can achieve equality reasonably well but unfortunately sacrifices two other important aspects, fairness and model performance. 

\clearpage
\subsection{Empirical Fairness vs.~Equality Trade-Off}

\cref{table:fairness-varying-beta-final-acc,table:fairness-varying-beta} show the empirical trade-off between fairness equality.
However, \cref{table:fairness-varying-beta-final-acc} calculates the standard deviation w.r.t.~\textbf{final test accuracy} while \cref{table:fairness-varying-beta} calculates the standard deviation w.r.t.~\textbf{online test accuracy}. The fairness results in both tables are the same.
Therefore, \cref{table:fairness-varying-beta-final-acc} shows the equality w.r.t.~the asymptotic model performance, instead of whether the models converge with the equal asymptotic complexities as guaranteed by \cref{prop:equal-convergence} and verified in \cref{table:fairness-varying-beta-feature-noise,table:fairness-varying-beta}.

\begin{table}[ht]
\setlength{\tabcolsep}{4pt}
\caption{Empirical trade-off between fairness and equality via $\beta$: Pearson coefficient $\rho$ between $\boldsymbol{\zeta}$ and online loss (standard deviation of final accuracy) under the setting of label noise.
High $\rho$ indicates fairness while lower standard deviation indicates better equality.
}
\label{table:fairness-varying-beta-final-acc}
\begin{center}
\begin{tabular}{c|ccccc}
 \toprule
 & MNIST & CIFAR-10 & HFT & ELECTRICITY & PATH \\
 \midrule
1/350&\textbf{0.713}(1.05e-02)&\textbf{0.555}(1.35e-02)&0.357(3.01e-04)&0.279(3.06e-03)&0.486(1.98e-02)\\
 1/150&0.678(2.2e-03)&0.455(3.2e-03)&0.347(3.4e-05)&\textbf{0.376}(9.85e-04)&\textbf{0.469}(2.27e-02)\\
1/100&0.657(3.13e-03)&0.361(5.43e-03)&0.357(8.45e-05)&0.323(5.71e-04)&0.316(1.30e-02)\\
1/50&0.427(3.56e-03)&0.182(3.14e-03)&\textbf{0.374}(\textbf{0})&0.268(\textbf{5.23e-04})&0.045(1.51e-02)\\
1/20&0.075(6.57e-03)&0.173(\textbf{2.58e-03})&0.154(\textbf{0})&0.133(5.28e-04)&0.033(1.04e-02)\\
1/10&0.015(\textbf{1.86e-03})&0.048(4.07e-03)&-0.027(\textbf{0})&0.214(6.02e-04)&0.045(6.43e-03)\\
1&-0.180(2.19e-03)&-0.033(5.98e-03)&-0.153(\textbf{0})&-0.048(5.92e-04)&-0.015(1.04e-02)\\
10&-0.192(2.44e-03)&0.109(4.58e-03)&-0.222(\textbf{0})&0.148(5.67e-04)&-0.096(\textbf{7.37e-03})\\
1000&-0.158(2.28e-03)&-0.019(6.08e-03)&-0.209(\textbf{0})&-0.006(5.94e-04)&-0.155(8.88e-03)\\
\bottomrule 
\end{tabular}
\end{center}
\end{table}

\begin{table}[!ht]
\setlength{\tabcolsep}{4pt}
\caption{Empirical trade-off between fairness and equality via $\beta$: Pearson coefficient $\rho$ between $\boldsymbol{\zeta}$ and online loss (standard deviation of online accuracy) under the setting of label noise.
High $\rho$ indicates fairness while lower standard deviation indicates better equality.
}
\label{table:fairness-varying-beta}
\begin{center}
\begin{tabular}{c|ccccc}
 \toprule
 & MNIST & CIFAR-10 & HFT & ELECTRICITY & PATH \\
 \midrule
1/350&\textbf{0.713}(1.75e-03)&\textbf{0.555}(2.62e-03)&0.357(8.03e-04)&0.279(1.34e-03)&\textbf{0.486}(4.35e-03)\\
 1/150&0.678(7.5e-04)&0.455(6.3e-04)&0.347(3.4e-04)&\textbf{0.376}(3.30e-04)&0.469(1.85e-03)\\
1/100&0.657(5.25e-04)&0.361(4.53e-04)&0.357(9.55e-04)&0.323(1.86e-04)&0.316(1.21e-03)\\
1/50&0.427(\textbf{1.94e-04})&0.182(\textbf{2.33e-04})&\textbf{0.374}(2.10e-05)&0.268(1.11e-04)&0.045(9.74e-04)\\
1/20&0.075(2.33e-04)&0.173(2.84e-04)&0.154(3.34e-04)&0.133(8.40e-05)&0.033(1.27e-03)\\
1/10&0.015(3.04e-04)&0.048(2.89e-04)&-0.027(6.77e-05)&0.214(8.66e-05)&0.045(1.09e-03)\\
1&-0.180(2.98e-04)&-0.033(3.06e-04)&-0.153(1.73e-04)&-0.048(\textbf{6.67e-05})&-0.015(\textbf{9.24e-04})\\
10&-0.192(2.45e-04)&0.109(2.96e-04)&-0.222(\textbf{4.76e-06})&0.148(7.46e-05)&-0.096(1.03e-03)\\
1000&-0.158(2.52e-04)&-0.019(2.58e-04)&-0.209(1.71e-05)&-0.006(8.07e-05)&-0.155(9.60e-04)\\
\bottomrule 
\end{tabular}
\end{center}
\end{table}

\subsection{Additional Results on Poor Fairness Performance from Inaccurate Contribution Estimates}
In this experiment, we have $N=10$ nodes on MNIST~(each with uniformly randomly selected $600$ images without noise) using the same CNN model as before. There is no data partitioning to simulate the online setting. In each iteration $t$, $20\%$ nodes are randomly sampled with probabilities directly proportional to $\boldsymbol{\psi}_{t}$, so their probabilities are dynamically updated with $\boldsymbol{\psi}_{t}$. The selected nodes synchronize their local models with the latest model~(as their incentives), conduct training and upload the updates. Importantly, $\boldsymbol{\psi}_{t}$ for only the selected nodes are evaluated and updated because the coordinator receives no updates from the other nodes. We plot $\boldsymbol{\psi}_t$ and the validation accuracy in \cref{fig:error-propagation}.

\begin{figure}[ht]
    \centering
    \includegraphics[width=0.35\linewidth]{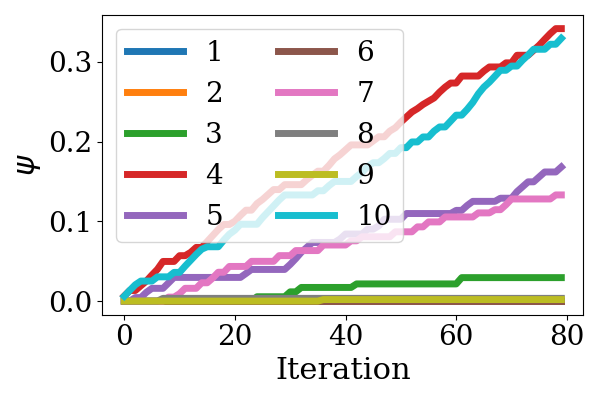}
    \includegraphics[width=0.35\linewidth]{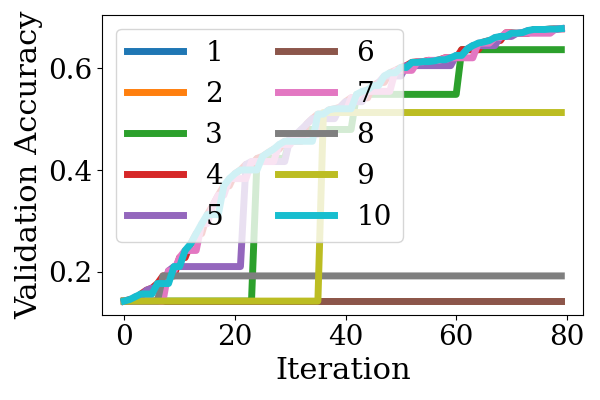}
    \caption{Contribution estimates~(left) and validation accuracy~(right) vs.~iterations $t$ under the paradigm which concurrently performs contribution evaluation and incentive realization~(updating the sampling probabilities).}
    \label{fig:error-propagation}
\end{figure}

Since the data are i.i.d.~without noise, the true contributions $\boldsymbol{\psi}^*$ are statistically equal. However, we observe an increasing separation in the contribution estimates $\boldsymbol{\psi}_t$. For instance, $\psi_{4,t}$ (red line for node $4$) is larger than $\psi_{9,t}$ (yellow line for node $9$) in the beginning, and the difference in $\psi_{4,t}, \psi_{9,t}$ is increasing over iterations.
As the true contributions should be approximately equal, the fair incentives should result in approximately equal model performance. However, the inaccuracy $\boldsymbol{\psi}_t$ affects the incentives as in \cref{fig:error-propagation}~(right). 
This further validates our approach that decouples contribution evaluation and incentive realization and first ensures the accuracy in the contribution estimates before constructing and realizing the incentives.

\clearpage
\section{Proofs and Derivations} \label{sec:appendix-proofs}
\subsection{Shapley Value Approximation} \label{sec:sv-approximation}
\textbf{A linear estimation to Shapley value.}
The contribution of node $i$ in iteration $t$ is computed as $\phi_{i,t} \coloneqq N^{-1} \sum_{\mathcal{S} \subseteq{[N]}\setminus \{i\}} \binom{N-1}{|\mathcal{S}|}^{-1} \mathbf{U}( \mathcal{S} \cup \{i\}) - \mathbf{U}( \mathcal{S})$. Here we focus on the exponential complexity that arises due to the summation i.e., $\sum_{\mathcal{S} \subseteq [N]\setminus \{i\}}$. This summation enumerates all $2^{N-1}$ coalitions that do not contain $i$. This complexity becomes infeasible even with a medium number of nodes $N$ so we need to provide an efficient estimation. For notational simplicity, we omit the subscript $t$ since we are referring to a particular iteration $t$.

Denote $\Delta_{i,\mathcal{S}} \coloneqq \mathbf{U}( \mathcal{S} \cup \{i\}) - \mathbf{U}( \mathcal{S})$. We use an unbiased estimator with time complexity linear in $N$ as follows:
\begin{equation}
    \label{equ:linear-estimator}
    \hat{\phi}_{i} \coloneqq  N^{-1} \textstyle \sum_{ m=0 }^{N-1} 
    \Delta_{i,\mathcal{S}_m}\ ,
    \quad \mathcal{S}_m\sim \{  \mathcal{S} \}_{\mathcal{S}\subseteq [N]\setminus\{i\}: |\mathcal{S}|=m}
\end{equation}
where $\mathcal{S}_m$ is a uniformly randomly sampled coalition of size $m$.
The Shapley value computes an average of the expected marginal contribution 
that $i$ makes to a coalition $\mathcal{S}_m$ of size $m$. 
The estimator $\hat{\phi}_i$ computes an average of the marginal contribution that $i$ makes to a \emph{randomly} selected coalition $\mathcal{S}_m$ of size $m$. 

\begin{proposition}[\bf Unbiased Estimator for SV]\label{prop:unbiased-SV-estimator}
Assume the $i$'s marginal contribution to a random coalition of size $m$, $\Delta_{i,\mathcal{S}_m}$ has mean and variance $\mu_m, \sigma_m^2$, then $\hat{\phi}_i$ is an unbiased estimator for $\phi_i$ with variance $\text{Var}( \hat{\phi}_i ) = \sum_{m=0}^{N-1} (\sigma_m/N)^2$. 

\end{proposition}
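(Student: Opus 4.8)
The plan is to exploit the fact that both the exact Shapley value and its estimator can be organised by coalition \emph{size} $m$, after which unbiasedness reduces to linearity of expectation and the variance formula to independence across sizes.

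First I would regroup the defining sum for $\phi_i$ according to $m=|\mathcal{S}|$. There are exactly $\binom{N-1}{m}$ coalitions $\mathcal{S}\subseteq[N]\setminus\{i\}$ of size $m$, each weighted by $\binom{N-1}{m}^{-1}$ in the definition of $\phi_i$. Hence the inner sum over all size-$m$ coalitions equals their unweighted average $\binom{N-1}{m}^{-1}\sum_{|\mathcal{S}|=m}\Delta_{i,\mathcal{S}}$, which is precisely $\mu_m$ by the stated assumption that $\Delta_{i,\mathcal{S}_m}$ (for $\mathcal{S}_m$ uniform over size-$m$ coalitions) has mean $\mu_m$. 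This collapses the exponential sum into the single expression $\phi_i = N^{-1}\sum_{m=0}^{N-1}\mu_m$.

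For unbiasedness, I would note that uniform sampling of $\mathcal{S}_m$ over the $\binom{N-1}{m}$ size-$m$ coalitions gives $\mathbb{E}[\Delta_{i,\mathcal{S}_m}]=\mu_m$ directly. Applying linearity of expectation to $\hat{\phi}_i=N^{-1}\sum_{m=0}^{N-1}\Delta_{i,\mathcal{S}_m}$ then yields $\mathbb{E}[\hat{\phi}_i]=N^{-1}\sum_{m=0}^{N-1}\mu_m=\phi_i$, matching the regrouped form above. For the variance, I would invoke that the $N$ sampled coalitions $\{\mathcal{S}_m\}_{m=0}^{N-1}$ are drawn independently (one per size class), so the summands $\Delta_{i,\mathcal{S}_m}$ are mutually independent; the variance of a sum of independent terms is the sum of the variances, giving $\mathrm{Var}(\hat{\phi}_i)=N^{-2}\sum_{m=0}^{N-1}\sigma_m^2=\sum_{m=0}^{N-1}(\sigma_m/N)^2$.

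The main obstacle is not computational but a matter of making the right assumptions explicit: the variance identity requires the coalitions across different sizes $m$ to be sampled independently, which should be declared as part of the sampling scheme rather than derived. A secondary point to verify carefully is that $\mu_m$ and $\sigma_m^2$ are defined with respect to the \emph{uniform} distribution over size-$m$ coalitions, so that the same quantities simultaneously (i) reconstruct the exact $\phi_i$ after regrouping and (ii) describe the sampling distribution used by $\hat{\phi}_i$; any mismatch there would break either the unbiasedness step or the variance formula.
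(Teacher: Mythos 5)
Your proof is correct. The paper itself gives no proof of this proposition---it states the result and defers to the cited works on linear Shapley value approximation---and your argument is precisely the standard one those references rely on: regrouping the exact Shapley sum by coalition size so that $\phi_i = N^{-1}\sum_{m=0}^{N-1}\mu_m$, then linearity of expectation for unbiasedness and independence across the $N$ size classes for the variance identity. Your two explicit caveats (that the coalitions $\mathcal{S}_m$ are sampled independently across sizes, and that $\mu_m,\sigma_m^2$ are defined with respect to the uniform distribution over size-$m$ coalitions, matching both the Shapley weights $\binom{N-1}{m}^{-1}$ and the sampling scheme) are exactly the assumptions the paper leaves implicit in its description of the estimator, so making them explicit strengthens rather than departs from the intended argument.
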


The assumption placed on $\Delta_{i,\mathcal{S}_m}$~\citep{Fatima2008-linear-SV-approximation,Rozemberczki2021-ensemble-games} states that the average effect node $i$ makes depends on the size of coalition that $i$ joins. In our scenario, it means the relative value of $i$'s gradient in improving the coordinator model depends on how many others gradients have already been applied to the coordinator model. Intuitively, if many other nodes' uploaded gradients have already been applied to the coordinator model, the relative value of $i$'s gradient in improving the coordinator model further may be limited. On the other hand, if no gradient has been applied to the coordinator model and applying $i$'s gradient improves the coordinator model, then node $i$ can be viewed as making valuable contributions.

\subsection{Proof of \cref{prop:hypothesis-testing}, Normality Test for $\phi$, and Empirical Verification on Independence of $\phi$}\label{appendix:normality-assumption-verification}

\begin{proof}[Proof of \cref{prop:hypothesis-testing}]
With a normality assumption (discussed below): $\{\boldsymbol{\phi}_{t}\}_{t=1,\ldots, t_s}$ are i.i.d.~samples from an $N$-dimensional multivariate Gaussian $\mathcal{N}(\boldsymbol{\mu}_{t_s}, \Sigma)$, the statistic \texttt{T2} follows Hotelling's $T$-squared distribution $T^2_{\mathcal{N}, 2(t_s-1)}$ \cite{hotelling} when the null hypothesis~(i.e.,  $\boldsymbol{\mu}_{t_s} = \boldsymbol{\mu}_{t_{s'}}$) is true~\cite{hotelling}. 

Therefore, we can reject the null hypothesis with at most $\alpha$ type-$1$ error if $\texttt{T2} \geq T^2_{1-\alpha, \mathcal{N}, 2(t_s - 1)}$, the $1-\alpha$ quantile for the distribution $T^2_{\mathcal{N}, 2(t_s-1)}$. 
\end{proof}

\begin{remark}
Contrast the hypothesis testing in \cref{prop:hypothesis-testing} with how hypothesis testing is more commonly used where we tend to reject the null hypothesis. For instance, in testing whether the parameters $\vartheta$ in a linear regression are $0$, we usually expect the null hypothesis $h_0:\vartheta =0$ to be rejected, so the $p$-value and the corresponding significance level $\alpha$ are often small~(e.g., set to $0.05$) for rejecting $h_0$. 
However, in our formulation, we expect $h_{0,t_s}$ to hold (not to be rejected) over more iterations, so we set larger $\alpha$ values (e.g., $0.5$).
\end{remark}

\paragraph{On the normality assumption.}
For notational simplicity, we suppress the subscript $t$ in this theoretical analysis since we are focusing on an arbitrary iteration $t$.

Define the marginal contribution from node $i$ to a subset/coalition $\mathcal{S}\subseteq [N]\setminus \{i\}$ as $\text{MC}_{i,\mathcal{S}} \coloneqq \mathbf{U}(\mathcal{S} \cup \{i\}) - \mathbf{U}(\mathcal{S})$.
Note that the linearity of inner product as $\mathbf{U}$ is useful in decomposing terms to compute the marginal contribution as follows:
\begin{align} \label{equ:normality-of-inner-product}
    \text{MC}_{i,\mathcal{S}} & \triangleq \left\langle \Delta \theta_{\mathcal{S}\cup \{i\}}\ , \Delta \theta_{[N]}  \right\rangle -  \left\langle \Delta \theta_{\mathcal{S}}\ , \Delta \theta_{[N]}  \right\rangle \notag\\
    &= \left\langle \sum_{i' \in \mathcal{S} \cup\{i\}} p_{i'}\ \Delta \theta_{i'}\ ,  \sum_{i' \in [N]} p_{i'}\ \Delta \theta_{i'} \right\rangle - \left\langle \sum_{i' \in \mathcal{S}} p_{i'}\ \Delta \theta_{i'}\ ,  \sum_{i' \in [N]} p_{i'}\ \Delta \theta_{i'} \right\rangle \notag\\
    &=  \left\langle p_i\ \Delta \theta_i\ ,  \sum_{i' \in [N]} p_{i'}\ \Delta \theta_{i'}  \right\rangle  \notag\\
    &= p_i \sum_{i'\in[N]} p_{i'} \left\langle \Delta \theta_i\ , \Delta\theta_{i'} \right\rangle\ .
\end{align}
Next, we focus on arguing for the normality of $\langle \Delta \theta_i, \Delta \theta_{i'} \rangle $ since the factors $p_i, p_{i'}$ are constants and the summation does not affect the normality.
Recall $\Delta \theta_i$ is an empirical mean estimate via a randomly selected mini-batch $\mathcal{B}_i$ of size $B$ as follows: 
\[\Delta \theta_i \triangleq \frac{1}{B}\sum_{(x,y) \in \mathcal{B}_i} \Delta \theta_{i, (x,y)} \]
where $\Delta \theta_{i, (x,y)}$ denotes the single-sample gradient/model update w.r.t.~input data $(x,y)$ on the selected loss function and the model from the previous iteration (omitted for notational simplicity). Then,
\[
\langle \Delta \theta_i, \Delta \theta_{i'} \rangle = \frac{1}{B^2} 
\sum_{ \substack{(x,y) \in \mathcal{B}_i\ ,\\ (x',y') \in \mathcal{B}_{i'} }} \langle \Delta \theta_{i, (x,y)}\ , \Delta \theta_{i', (x',y')} \rangle 
\]
is an empirical mean estimate dependent on the joint distribution of $(x,y), (x',y')$ (since the loss function is fixed and the model parameter from the previous iteration is also fixed). Note that in the case of $i=i'$, $(x,y) = (x',y')$, but $\langle \Delta \theta_i, \Delta \theta_{i'} \rangle$ is an empirical mean estimate nonetheless. 

Consequently, we can apply the \textit{central limit theorem} so that as $B \to \infty$, $\langle \Delta \theta_i, \Delta \theta_{i'} \rangle$ follows a normal distribution. Since the linear scaling and summation in \cref{equ:normality-of-inner-product} does not affect normality, $\text{MC}_{i,\mathcal{S}}$ also follows a normal distribution. Subsequently, since $\phi_{i}$ is a linear combination of $\text{MC}_{i,\mathcal{S}}$ with fixed constants, $\phi_{i}$ also follows a normal distribution. 

Regarding the independence assumption over iterations, since each $\phi_{i,t}$ is calculated using the model updates from that iteration, it is independent of previous $\phi_{i,t'<t}$ conditioned on the model parameter $\theta_{t-1}$. 
Regarding the assumption on identical distribution, we can view $\phi_{i,t}$ in each $t$ is from some distribution that depends on node $i$'s true contribution.

\paragraph{Normality test and some implementation details.}

In experiments (i.e., implementation), we use cosine similarity $\text{cos-sim}(a,b) \coloneqq \langle a,b \rangle/(\Vert a \Vert \times \Vert b \Vert)$ (instead of inner product) for $\mathbf{U}$ as the utility function in \cref{equ:cumulative-sv-psi} for the following practical considerations:
\textbf{(i)} cosine similarity empirically performs well as the additional normalization (compared to only inner product) can be used to mitigate the vanishing/exploding gradient/model update issue~\cite{Xu2021-fair-CML}; \textbf{(ii)} the normalization in cosine similarity makes the analytic argument difficult (see below), but in practice this small linear scaling does not seem to violate normality, verified below.

In addition, since in practice we cannot have $B \to \infty$, we perform the Henze-Zirkler test~\cite{Henze1990ACO} on the normality of $\{\boldsymbol{\phi}_t \}$ in \cref{table:normality_test_appendix}.
The null hypothesis for this test is that $\{\boldsymbol{\phi}_{t}\}_{t=1,\ldots, t_s}$ are from a multivariate Gaussian distribution, and we reject the null hypothesis if the $p$-value is less than $0.05$. 
We take the samples from a fixed window size of iterations as $\{\boldsymbol{\phi}_{t}\}_{t= t_s,\ldots, t_s+\tau}$ where $\tau = 50$ and conduct multiple tests from a single trial and additionally perform $10$ random trials. Each experiment is conducted on a batch size of $32$, and other settings are the same as \cref{sec:setting} with label noise.
We perform multiple random trials and show the percentage of trials of \textit{not} rejecting the null hypothesis in \cref{table:normality_test_appendix}. The values in the brackets are the average $p$-values (all above $0.05$).

\begin{table}[!ht]
\setlength{\tabcolsep}{4pt}
\caption{The percentage of random trials of \textit{not} rejecting the null hypothesis for the Henze-Zirkler test on the normality of $\{\boldsymbol{\phi}_{t}\}_{t=1,\ldots, t_s}$. The numbers in the brackets are the average $p$-values.
}
\label{table:normality_test_appendix}
\begin{center}
\begin{tabular}{c|ccc}
 \toprule
 & Label Noise & Feature Noise\\
 \midrule
MNIST & 0.949 (0.171)& 0.984 (0.194)\\
CIFAR-10 &0.930 (0.115)& 0.924 (0.114)\\
\bottomrule
\end{tabular}
\end{center}
\end{table}

Lastly  we describe the difficulties in providing an analytical argument for normality if we use cosine similarity as $\mathbf{U}$. Specifically, the difficulty stems from the division by $ \Vert a \Vert \Vert b \Vert$ where $a =\Delta \theta_{\mathcal{S}}, b=\Delta \theta_{[N]}$. We focus on $\Delta \theta_{\mathcal{S}}$.
Suppose $\Delta \theta_{\mathcal{S}} \sim \mathcal{N}(\boldsymbol{\mu}, \Sigma)$ for some unknown $\boldsymbol{\mu}, \Sigma$. If $\Sigma = \sigma^2 \mathbf{I}$ (i.e., isotropic), then $\Vert \Delta \theta_{i,t} \Vert$ follows a noncentral chi distribution, which seems to suggest the overall expression (from cosine similarity) $\langle \Delta \theta_{\mathcal{S}},\Delta \theta_{[N]}\rangle / ( \Vert \Delta \theta_{\mathcal{S}} \Vert  \Vert \Delta \theta_{[N]}  \Vert)$ may not be normal even if the numerator is normal (as discussed above w.r.t.~inner product).
Furthermore, $\mathbb{E}[\Delta \theta_{\mathcal{S}}]$ has a complicated expression~\citep[Theorem 3.2b.5]{Mathai1994-non-normality} which can further suggest the difficulties in deriving an analytic expression.

\textbf{Empirical verification on independence of $\phi$ from different iterations.} An additional verification on the independence of $\phi_{i,t}$ i.e. the contribution estimate of node $i$ in iteration $t$ is provided. The experiment setting can be found in \cref{sec:foil}. We compute the autocorrelation function (ACF) for the series of contribution estimates of a specific node $i$ $\{\phi_{i,t}\}_{t=0,\ldots, T_{\alpha}}$. The ACF gives the correlation between the series and a $k$ step lag of the series. If a series has no significant non-zero autocorrelation value in all $k \ge 1$, it means that every step of the series is uncorrelated to each other, which can serve as an empirical evidence of independence. \cref{fig:sv_acf_plot_appendix} presents the ACF plot for different dataset under the setting of label noise. And most autocorrelation values lie within the $95\%$ confidence interval from $0$, which means that no significant correlation is found in all the series. Even though some series have non-zero values at lag $k=1$, the correlation decreases to zero quickly with a larger lag ($k\ge2$). This result verifies the rationality of our assumption by empirically showing that the contribution estimates for a specific node from different iterations are uncorrelated (empirically independent) in general.

\begin{figure}[!htb]
    \centering 
  \includegraphics[width=0.195\textwidth]{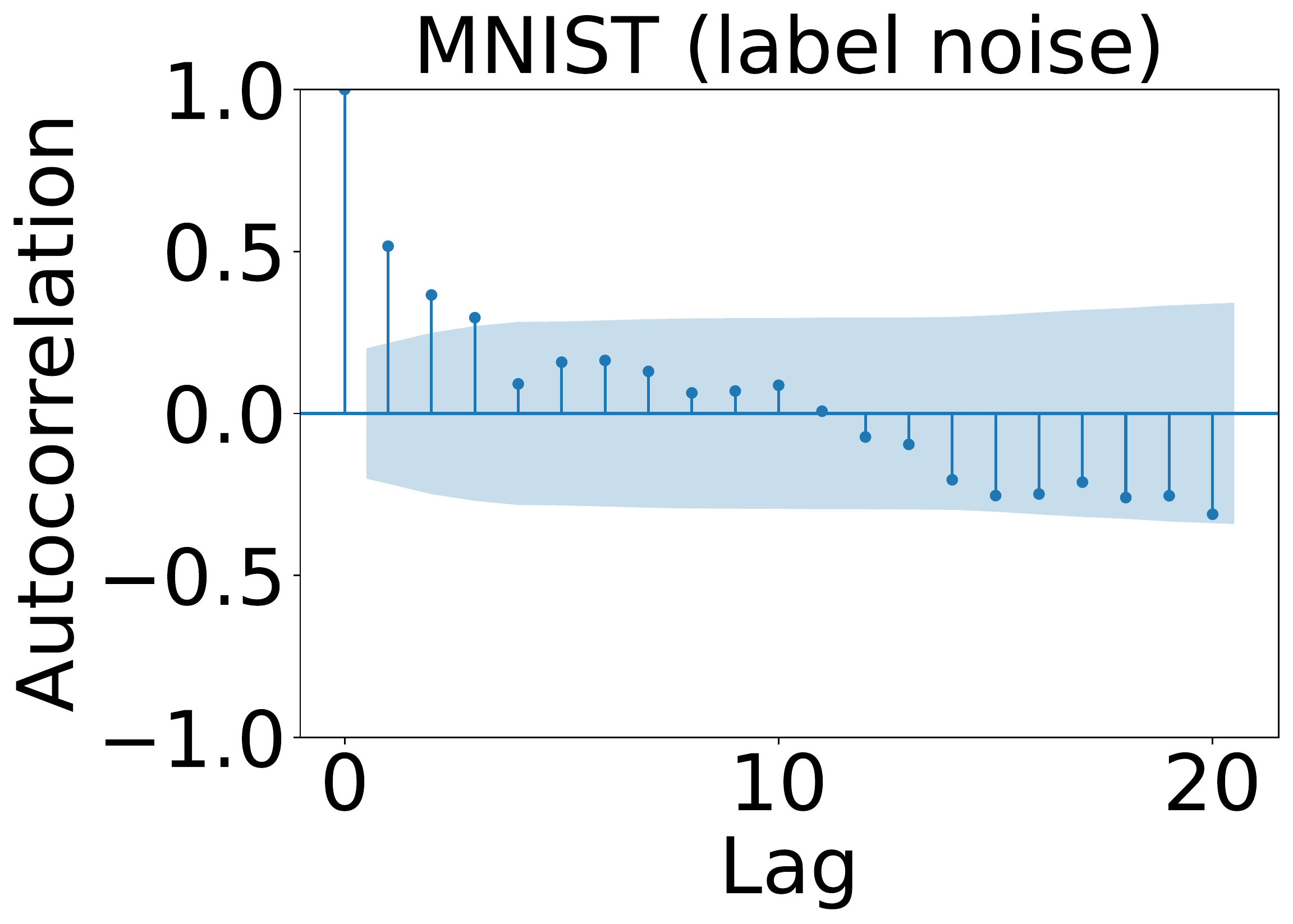}
  \includegraphics[width=0.195\textwidth]{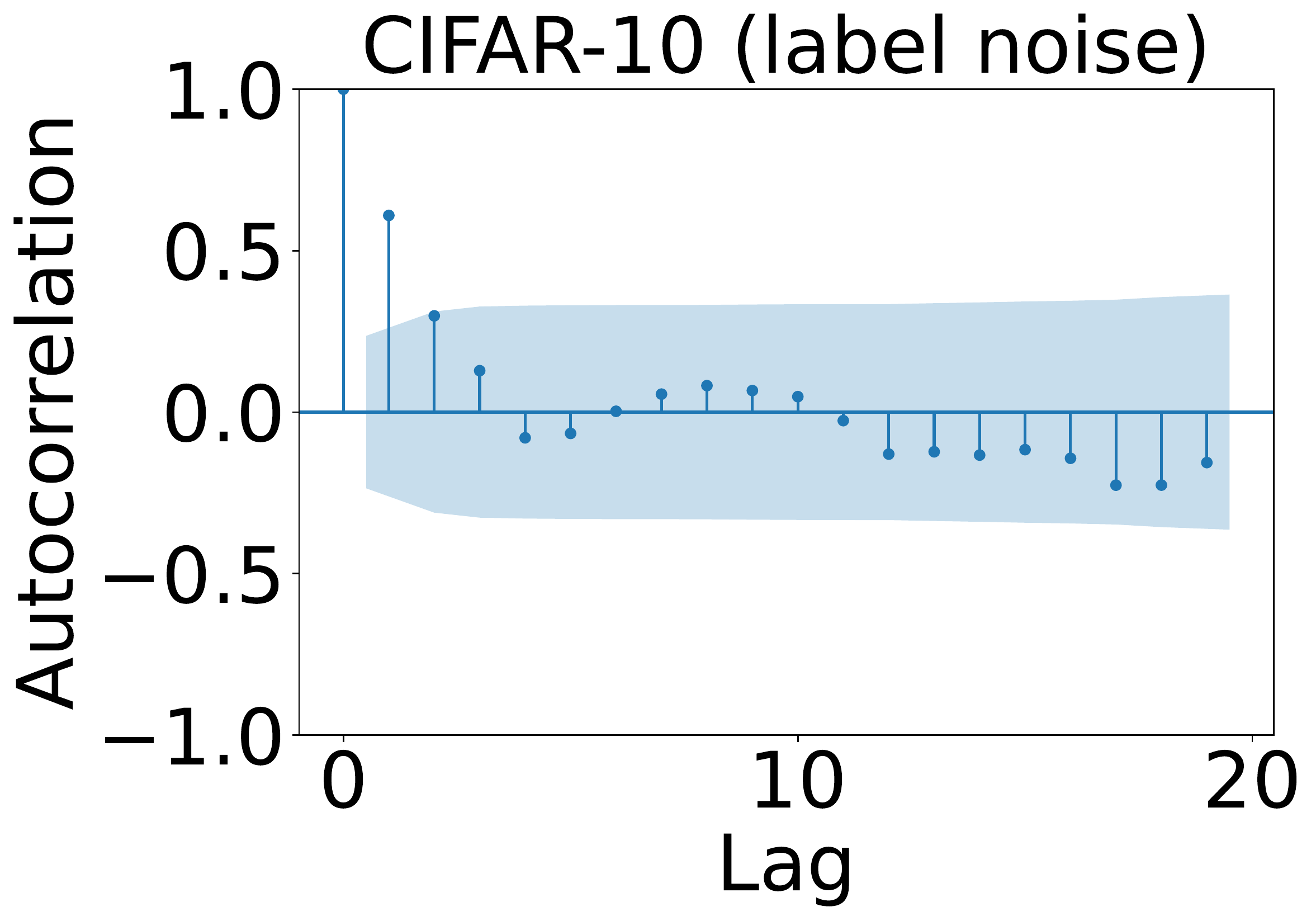}
  \includegraphics[width=0.195\textwidth]{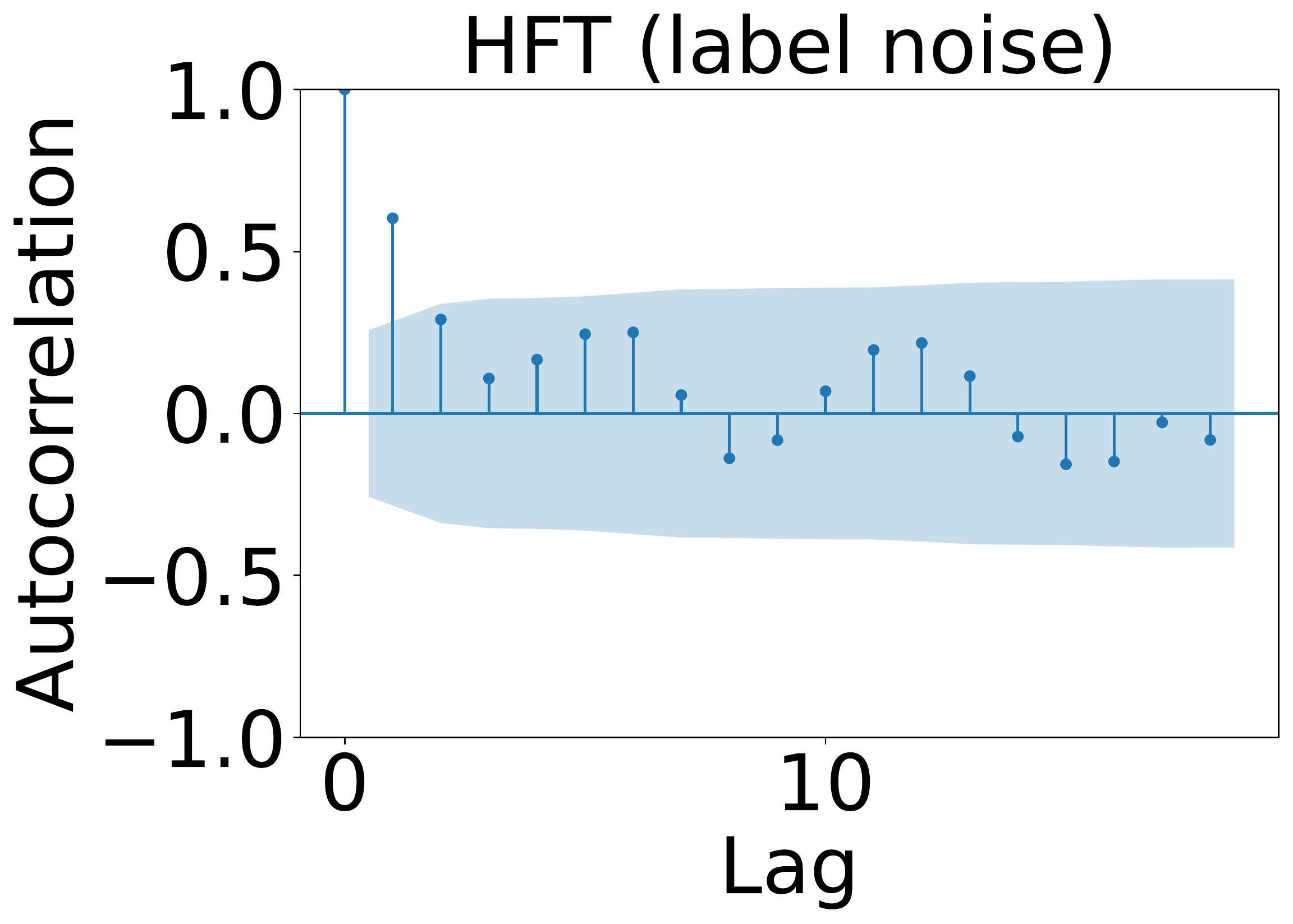}
    \includegraphics[width=0.195\textwidth]{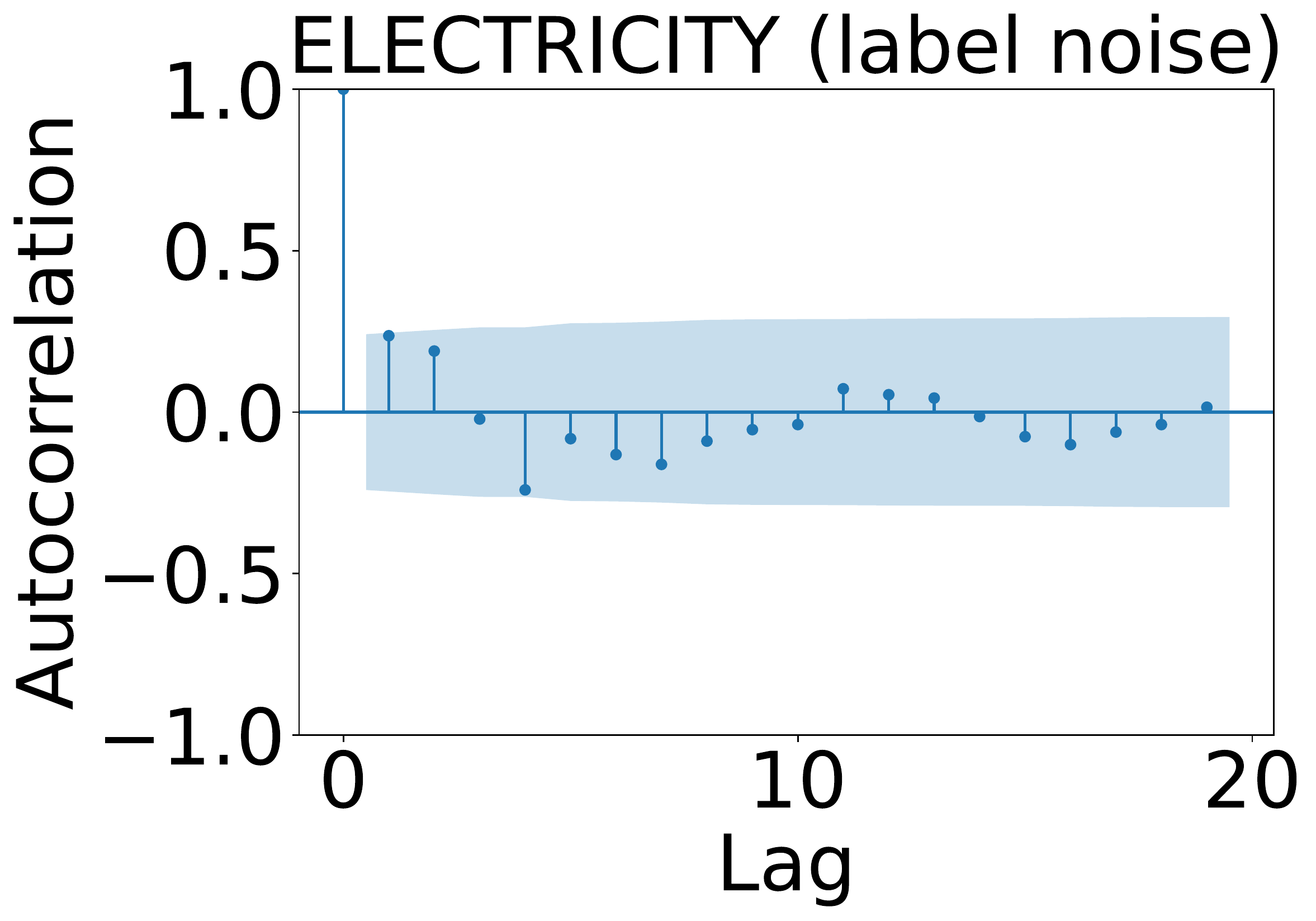}
     \includegraphics[width=0.195\textwidth]{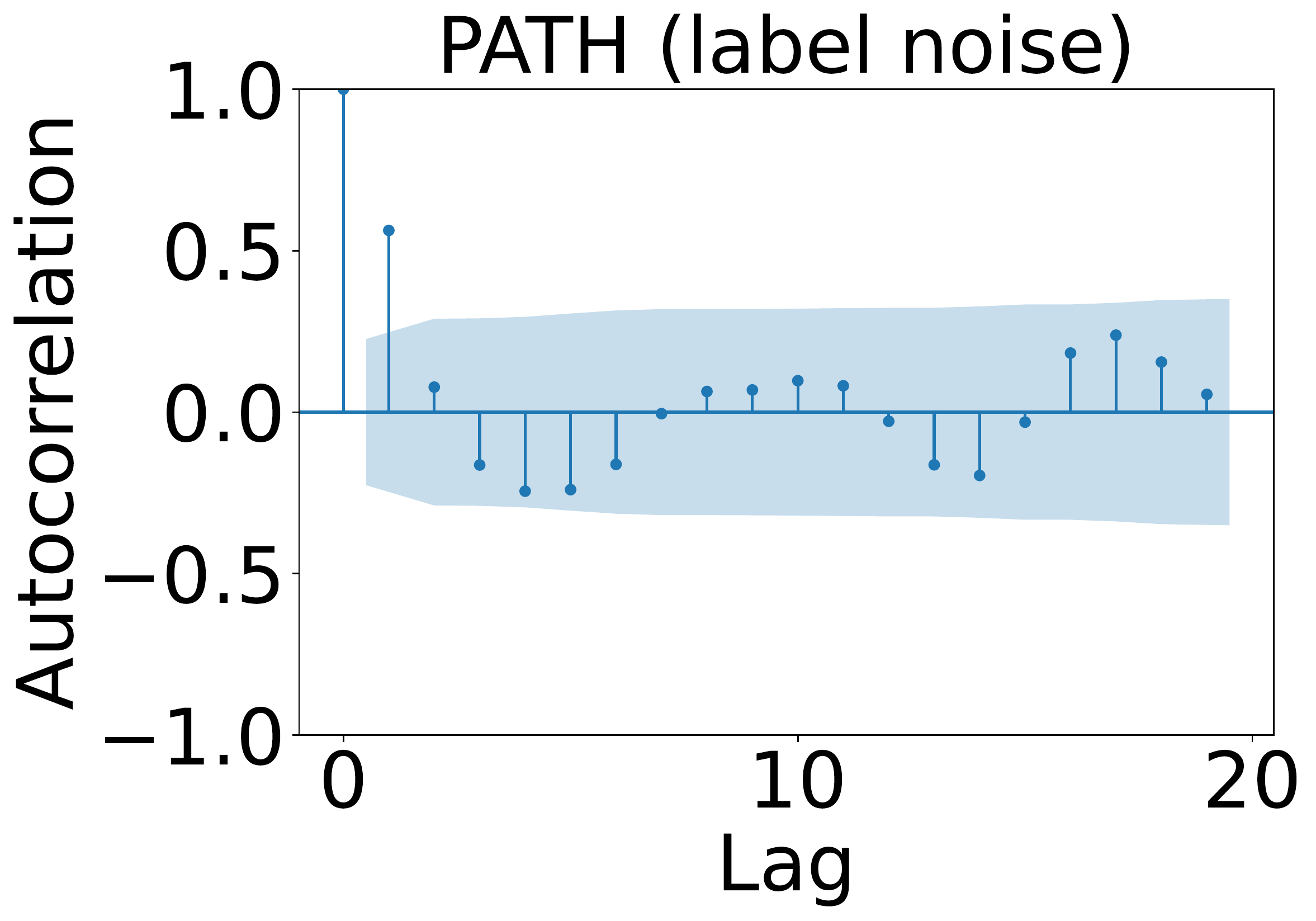}
  \caption{Autocorrelation function (ACF) plot with different lags of the series $\{\phi_{29,t}\}_{t=0, \ldots, T_{\alpha}}$, i.e., the contribution estimates series of node $29$ (the least contribution node). If the autocorrelation value lies within the $95\%$ confidence interval (blue shadow area), it means that we do not have enough evidence to conclude that the autocorrelation value is non-zero (no significant correlation is found in the series) for a specific lag.
    }
    \label{fig:sv_acf_plot_appendix}
\end{figure}

\subsection{Proof of \cref{proposition:fairness}}

We first derive the closed-form expression for the expected staleness $\Gamma_i$ and then show the proof of \cref{proposition:fairness}.

\begin{proof}[Derivation of expected staleness $\Gamma_i$]
We substitute $\mathbb{P}[\text{stale for } \gamma \text{ iterations}] = (1-q_i)^\gamma$ and apply the formula for geometric series. The derivation consists of two parts: (a) Show that $\Gamma_i$ is a convergent series, and (b) determine the value it converges to.

(a) By ratio test, we require 
\[\lim_{n\to \infty} (1-q_i) \frac{n+1}{n}  < 1 \]
which is satisfied if $q_i < 1$.

(b) Next, we determine what $\Gamma_i$ converges to. Observe the following:
\[ \sum_{n=0}^{\infty} x^n n = x \sum_{n=0}^{\infty} x^{n-1} n  = x \frac{\text{d}}{\text{d}x} \sum_{n=0}^{\infty} x^n =x \frac{\text{d}}{\text{d}x} \frac{1}{1-x} = \frac{x}{(1-x)^2}\ .\]
Substituting $x = (1-q_i)$ completes the derivation.
\end{proof}

\begin{proof}[Proof of \cref{proposition:fairness}]
Firstly, note that the comparison between $C_i$ and $C_{i'}$ boils down to that between the expected staleness $\Gamma_i$ and $\Gamma_{i'}$ since $C_i$ and $C_{i'}$ share the other term of $\mathcal{O}(1/\epsilon)$. Subsequently, we use the derived the closed-form expression for $\Gamma_i$ as above.

Symmetry follows directly. Strict desirability and strict monotonicity follow by observing $\Gamma_i$ is monotonic in $\psi_{i, T_{\alpha}}$: a larger $\psi_{i, T_{\alpha}}$ implies a smaller $\Gamma_i$.
\end{proof}

Before we describe and interpret when the conditions \cref{proposition:fairness} are satisfied (next), we first note that we exclude the null player property as we assume positive contributions from the nodes that represent companies/organizations and are unlikely to have zero contributions (e.g., free-riders) due to their reputations being at stake. Empirically, we observe $\psi_{i,t}$ to be generally positive (\cref{fig:sv_plot_appendix} in \cref{appendix:experiments}). 

We can view each iteration $t$ as a game of $N$ nodes and the $\phi_{i,t}$ is node $i$'s SV in this game, so $\psi_{i,t}$ is the average of node $i$'s SV in $t$ such games and intuitively represents node $i$'s aggregate contributions. In particular, we can define $\mathbf{U}_t(\mathcal{S}) \coloneqq \mathbf{U}(\Delta \theta_{\mathcal{S},t}, \Delta \theta_{[N],t})$ where the dependence on $t$ is from the $\Delta \theta_{\mathcal{S},t}$. With this formulation, we have 
$\phi_{i,t} \triangleq \phi_i(\mathbf{U}_t)$ and can subsequently exploit the linearity in SV:
\[ \phi_i(\mathbf{U} + \mathbf{U}') = \phi_i(\mathbf{U}) + \phi_i(\mathbf{U'})\ .\]

First, we focus on the game defined by $\mathbf{U}_t$ (i.e., in iteration $t$) and analyze the symmetry, strict desirability and strict monotonicity guaranteed by the original SV w.r.t.~$\phi_i(\mathbf{U}_t)$:
\begin{enumerate}
    \item symmetry \cite{shapley1953value}: $\forall i,i', (\forall \mathcal{S}\subseteq [N]\setminus \{i,i'\}, \mathbf{U}_t(\mathcal{S}\cup\{i\}) =\mathbf{U}_t(\mathcal{S}\cup\{i'\})    ) \implies \phi_i(\mathbf{U}_t) = \phi_i'(\mathbf{U}_t) $.
    \item strict desirability \cite{Bahir1966-desirability}:
    $\forall i,i', (\forall \mathcal{S}\subseteq [N]\setminus \{i,i'\}, \mathbf{U}_t(\mathcal{S}\cup\{i\})  \geq  \mathbf{U}_t(\mathcal{S}\cup\{i'\})   ) \wedge 
    (\exists \mathcal{P}\subseteq [N]\setminus \{i,i'\}, \mathbf{U}_t(\mathcal{P}\cup\{i\}) >  \mathbf{U}_t(\mathcal{P}\cup\{i'\})  ) \implies \phi_i(\mathbf{U}_t) > \phi_i'(\mathbf{U}_t) $.
    \item strict monotonicity \cite{Young1985-monotonicity}: 
    for two $\mathbf{U}_t, \mathbf{U}_t'$, 
    $( \forall \mathcal{S}\subseteq [N]\setminus \{i\}, \mathbf{U}_t(\mathcal{S}\cup\{i\}) \geq  \mathbf{U}'_t(\mathcal{S}\cup\{i\}) ) \wedge ( \exists \mathcal{P}\subseteq [N]\setminus \{i\}, \mathbf{U}_t(\mathcal{S}\cup\{i\}) >  \mathbf{U}'_t(\mathcal{P}\cup\{i\}) )  \implies \phi_i(\mathbf{U}_t) > \phi_i(\mathbf{U}'_t) $.
\end{enumerate}

Then by exploiting the linearity property, we can derive the corresponding sufficient conditions in \cref{proposition:fairness} in a straightforward way: adding and taking average of all the $\mathbf{U}_t$ and $\phi_{i,t}$ up to $t = T_{\alpha}$ to obtain $\psi_{i,T_{\alpha}}$.

This affords us the following simple, albeit somewhat restrictive interpretations. For symmetry, if two nodes $i,i'$ contribute to \textit{all} possible $\mathcal{S} \subseteq [N]\setminus \{i,i'\}$ equally over \textit{all} iterations up to $T_\alpha$ (so that $\phi_{i,t} = \phi_{i',t}$), then their contribution estimates are equal, $\psi_{i, T_\alpha} =\psi_{i',T_\alpha}$.
The interpretation for strict desirability is similar by replacing the equality relationship with a greater than relationship.

The interpretation for strict monotonicity is only slightly different as it concerns with two different $\mathbf{U}_t, \mathbf{U}_t'$: if a node $i$ does something (e.g., makes a better contribution) in each iteration $t$ to improve its contribution  
(i.e., $\phi_{i,t} \coloneqq \phi_i (\mathbf{U}_t) > \phi'_{i,t} \coloneqq \phi_i (\mathbf{U}'_t)$), then the corresponding overall contribution estimate is also improved (i.e., $\psi_{i, T_\alpha} > \psi'_{i,T_\alpha}$).

These interpretations, while theoretically straightforward, are fairly restrictive due to the two \textit{for all} clauses ($\forall \mathcal{S} \subseteq [N]\setminus \{i,i'\}$ and $\forall t \leq T_{\alpha}$). In fact, for the sufficient conditions in \cref{proposition:fairness}, there is some leeway. 

To illustrate, node $i$ contributes more in some iteration $t$ while node $i'$ contributes more in some other iteration $t'$, as long as the difference in their contributions `balances out' when averaged over $T_{\alpha}$ iterations, the symmetry property is applicable in \cref{proposition:fairness}. Similar interpretations are available for the strict desirability and strict monotonicity. Intuitively, \cref{proposition:fairness} says to receive a good incentive/low convergence complexity, a node needs to do well/make high contributions in aggregate/overall (over $T_\alpha$ iterations) and one such (restrictive) possibility is the node makes a high contribution in \textit{all} the iterations.

\paragraph{Comparison with FGFL~\cite{Xu2021-fair-CML}.}
Contrast this with~\citep[Theorem 2]{Xu2021-fair-CML}, our result explicitly guarantees fair model convergence~(to global optimum), empirically compared in \cref{fig:cml_vs_ours_convergence_path}. Their result provides fairness restricted to each iteration $t$ instead of overall model performance or convergence. Furthermore, their fairness result requires additional regularity conditions on the models and the objective function, which may be difficult to verify w.r.t.~complicated models such as deep neural networks.

\subsection{Honest Nodes Assumption and Its Relaxation}
\label{sec:appendix-optimal-strategy-proofs}

\paragraph{Honest nodes assumption.} Honest nodes are assumed not to deviate from the proposed algorithm. In our context, honest nodes do not have strategic behavior that exploits our algorithm. In contrast, a dishonest node may try to exploit the algorithm by uploading contributions honestly in the exploration phase (when their contributions are evaluated) and deliberately lowering contributions in the exploitation phase (e.g., via uploading random/zero values). The motivation for such dishonest behaviors is that once the exploration phase is over, the contributions of the nodes are no longer evaluated and the nodes are rewarded subsequently according to the latest contribution estimates (which are fixed after the exploration phase). This assumption is plausible for cross-silo FL where each node represents a company or organisation (e.g., hospitals) where the dishonest and exploitative behavior can damage their reputations, especially if they are in a long-term collaboration with several partners.
Nevertheless, we provide some additional discussion by relaxing this assumption and argue that honesty is the optimal strategy under some mild conditions.

\paragraph{Proof of optimal strategy for nodes and empirical verification.}
Assume that the utility function $\mathbf{U}$ in \cref{sec:setting} is a submodular function. For example, when using the test accuracy or negative log-likelihood as the utility function, the submodularity of the utility function means that as the size of coalition increases the improvement in the performance of an ML model will be smaller and smaller~\cite{Wang2021-learnability}. Denote node $i^s$, that potentially has positive contribution, is the dishonest node that tries to predict the stopping iteration of contribution evaluation as $T_{\text{pred}}$. 
Denote the contribution estimate for node $i$ in iteration $t$ \emph{without} the exploitative behavior of stopping contributions as $\phi^n_{i,t}$, and the contribution estimate for \emph{with} such exploitative behavior for $i^s$ stopping contributing after its predicted iteration as $\phi^s_{i,t}$. Denote $T_{\text{next}}=\min(T_{\alpha}, T_{\text{pred}}+1)$ and $C_{i^s, T_{\text{next}}}^s$ ($\rho_{i^s,T_{\text{next}}}^s$) as the convergence complexity (selection probability) for $i^s$ in the setting where the node $i^s$ stop contributing after $T_{\text{pred}}$ and the coordinator stops the contribution evaluation in $T_{\text{next}}$. Similarly, $C_{i^s,T_{\text{next}}}^n$ ($\rho_{i^s,T_{\text{next}}}^n$) is for $i^s$ without stopping contributions (similar notations for other quantities under these two settings: stop contributing vs.~not stop contributing).
\begin{proposition}[\bf Optimal Strategy for Nodes]
\label{prop:optimal-strategy}
With a submodular utility function $\mathbf{U}$ and suppose $\phi^n_{i^s,t} > 0$ for all $t >0$. Then, $\mathbb{E}[\rho_{i^s,T_{\text{next}}}^s] \le \mathbb{E}[\rho_{i^s,T_{\text{next}}}^n]$ and $\mathbb{E}[C_{i^s,T_{\text{next}}}^s] \ge \mathbb{E}[C_{i^s,T_{\text{next}}}^n]$. Equality holds when $\sum_{T_{\alpha}, T_{\text{pred}}: T_{\alpha} > T_{\text{pred}}} p(T_{\alpha}, T_{\text{pred}}) =0$.

\begin{proof}[Proof of \cref{prop:optimal-strategy}]
Since utility function $\mathbf{U}$ is a submodular function, i.e., for every $\mathcal{S}_1,\mathcal{S}_2 \subseteq [N]$ with $\mathcal{S}_1 \subseteq \mathcal{S}_2$ and every $i \in [N]\setminus \mathcal{S}_2$, $\mathbf{U}(\mathcal{S}_1 \cup {i}) - \mathbf{U}(\mathcal{S}_1) \ge \mathbf{U}(\mathcal{S}_2 \cup {i}) - \mathbf{U}(\mathcal{S}_2)$. Denote the contribution estimate for non-stop contribution setting as $\boldsymbol{\psi}_t^n = \{\psi_{i,t}^n\}_{i \in [N]}$ and the contribution estimate for the setting with node $i^s$ stopping to contribute after its predicted iteration as $\boldsymbol{\psi}_t^s = \{\psi_{i,t}^s\}_{i \in [N]}$.
For node $i^s$, we assume that it is a normally behaved node with positive contribution in the non-stop contribution setting i.e., $\phi_{i^s,t}^n > 0$, while it makes no contribution in the stopping setting after $T_{\text{pred}}$, i.e., $\phi_{i^s, t}^s = 0$ for all $t > T_{\text{pred}}$. In our framework, the node can simply upload gradient with all $0$ elements to simulate a $0$ contribution node. Denote $T_{\text{next}} \coloneqq \min(T_{\alpha}, T_{\text{pred}}+1)$. Then, when $i \in [N]\setminus\{i^s\}$,
\begin{equation*}
    \begin{aligned}
    &\mathbb{E}_{T_{\alpha}, T_{\text{pred}}}\left[\psi_{i, T_{\text{next}}}^s\right] \\
    &=  \mathbb{E}_{T_{\alpha}, T_{\text{pred}}}\left[\frac{1}{T_{\text{next}}}\sum_{t=1}^{T_{\text{next}}}\phi_{i,t}^s\right] \\
    &= \sum_{T_{\alpha}, T_{\text{pred}}: T_{\alpha} \le T_{\text{pred}}} p(T_{\alpha}, T_{\text{pred}})\frac{1}{T_{\alpha}}\sum_{t=1}^{T_{\alpha}}\phi_{i,t}^s + \sum_{T_{\alpha}, T_{\text{pred}}: T_{\alpha} > T_{\text{pred}}} p(T_{\alpha}, T_{\text{pred}})\frac{1}{T_{\text{pred}}+1}\sum_{t=1}^{T_{\text{pred}}+1}\phi_{i,t}^s \\
    &= \sum_{T_{\alpha}, T_{\text{pred}}: T_{\alpha} \le T_{\text{pred}}} p(T_{\alpha}, T_{\text{pred}})\frac{1}{T_{\alpha}}\sum_{t=1}^{T_{\alpha}}\phi_{i,t}^n + \sum_{T_{\alpha}, T_{\text{pred}}: T_{\alpha} > T_{\text{pred}}} p(T_{\alpha}, T_{\text{pred}})\frac{1}{T_{\text{pred}}+1}\left[\phi_{i,T_{\text{pred}}+1}^s + \sum_{t=1}^{T_{\text{pred}}}\phi_{i,t}^n\right] \\
    &\ge \sum_{T_{\alpha}, T_{\text{pred}}} p(T_{\alpha}, T_{\text{pred}})\frac{1}{T_{\text{next}}}\sum_{t=1}^{T_{\text{next}}}\phi_{i,t}^n \\
    &= \mathbb{E}_{T_{\alpha}, T_{\text{pred}}}\left[\psi_{i, T_{\text{next}}}^n\right] 
    \end{aligned}
\end{equation*}
where the inequality holds because when $T_{\text{pred}} < T_{\alpha}$ and $t=T_{\text{pred}}+1$, 
\begin{equation*}
    \begin{aligned}
    \phi_{i, t}^s & = \frac{1}{N} \sum_{\mathcal{S}^s \subseteq [N]\setminus \{i\}} \binom{N-1}{|S^s|}^{-1}\mathbf{U}(\mathcal{S}^s\cup \{i\}) - \mathbf{U}(\mathcal{S}^s)\\
    & = \frac{1}{N} \sum_{\mathcal{S}^s \subseteq [N]\setminus \{i\}} \binom{N-1}{|S^s|}^{-1}\mathbf{U}(\mathcal{S}^s\setminus \{i^s\}\cup \{i\}) - \mathbf{U}(\mathcal{S}^s\setminus \{i^s\})\\
    & = \frac{1}{N} \sum_{\mathcal{S}^n \subseteq [N]\setminus \{i\}} \binom{N-1}{|S^n|}^{-1}\mathbf{U}(\mathcal{S}^n\setminus \{i^s\}\cup \{i\}) - \mathbf{U}(\mathcal{S}^n\setminus \{i^s\})\\
    & \ge \frac{1}{N} \sum_{\mathcal{S}^n \subseteq [N]\setminus \{i\}} \binom{N-1}{|S^n|}^{-1}\mathbf{U}(\mathcal{S}^n\cup \{i\}) - \mathbf{U}(\mathcal{S}^n)\\
    & = \phi_{i,t}^n\ .
    \end{aligned}
\end{equation*}    
For node $i^s$,
\begin{equation*}
    \begin{aligned}
    &\mathbb{E}_{T_{\alpha}, T_{\text{pred}}}\left[\psi_{i^s, T_{\text{next}}}^s\right] \\
    &=\mathbb{E}_{T_{\alpha}, T_{\text{pred}}}\left[\frac{1}{T_{\alpha}}\sum_{t=1}^{T_{\alpha}}\phi_{i^s,t}^s\right] \\
    &=\sum_{T_{\alpha}, T_{\text{pred}}: T_{\alpha} \le T_{\text{pred}}} p(T_{\alpha}, T_{\text{pred}})\frac{1}{T_{\alpha}}\sum_{t=1}^{T_{\alpha}}\phi_{i^s,t}^s + \sum_{T_{\alpha}, T_{\text{pred}}: T_{\alpha} > T_{\text{pred}}} p(T_{\alpha}, T_{\text{pred}})\frac{1}{T_{\text{pred}}+1}\sum_{t=1}^{T_{\text{pred}}+1}\phi_{i^s,t}^s \\
    &=\sum_{T_{\alpha}, T_{\text{pred}}: T_{\alpha} \le T_{\text{pred}}} p(T_{\alpha}, T_{\text{pred}})\frac{1}{T_{\alpha}}\sum_{t=1}^{T_{\alpha}}\phi_{i^s,t}^n + \sum_{T_{\alpha}, T_{\text{pred}}: T_{\alpha} > T_{\text{pred}}} p(T_{\alpha}, T_{\text{pred}})\frac{1}{T_{\text{pred}}+1}\left[\phi_{i^s,T_{\text{pred}}+1}^s+\sum_{t=1}^{T_{\text{pred}}}\phi_{i^s,t}^n\right] \\
    &\le \mathbb{E}_{T_{\alpha}, T_{\text{pred}}}\left[\frac{1}{T_{\text{next}}}\sum_{t=1}^{T_{\text{next}}}\phi_{i^s,t}^n\right]\\
    &= \mathbb{E}_{T_{\alpha}, T_{\text{pred}}}\left[\psi_{i^s, T_{\text{next}}}^n\right]
    \end{aligned}
\end{equation*}
where the inequality holds due to $\phi_{i^s,T_{\text{pred}}+1}^n > \phi_{i^s,T_{\text{pred}}+1}^s = 0$ and the equality only holds when $\sum_{T_{\alpha}, T_{\text{pred}}: T_{\alpha} > T_{\text{pred}}} p(T_{\alpha}, T_{\text{pred}}) =0$. It is impractical for the predictive model to guarantee this condition to hold. From the two inequalities above, $\mathbb{E}[\rho_{i^s, T_{\text{next}}}^s] \le \mathbb{E}[\rho_{i^s,T_{\text{next}}}^n]$ and $\mathbb{E}[C_{i^s, T_{\text{next}}}^s] \ge \mathbb{E}[C_{i^s, T_{\text{next}}}^n]$ which follow similar derivations. Equality holds iff $\sum_{T_{\alpha}, T_{\text{pred}}: T_{\alpha} > T_{\text{pred}}} p(T_{\alpha}, T_{\text{pred}}) =0$.
\end{proof}

\end{proposition}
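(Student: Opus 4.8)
The plan is to reduce both inequalities to a single monotonicity fact about how $i^s$'s stopping affects the cumulative contribution estimates, and then propagate that effect through the softmax sampling rule and the closed-form expected staleness. First I would fix an arbitrary realized pair $(T_\alpha, T_{\text{pred}})$ and split along the two regimes encoded by $T_{\text{next}} = \min(T_\alpha, T_{\text{pred}}+1)$. When $T_\alpha \le T_{\text{pred}}$ the evaluation terminates before any zero upload occurs, so $\phi^s_{i,t} = \phi^n_{i,t}$ for every node and every $t \le T_\alpha$ and the two settings coincide. The only informative regime is $T_\alpha > T_{\text{pred}}$, where at iteration $t = T_{\text{pred}}+1$ the node $i^s$ uploads a zero update and $T_{\text{next}} = T_{\text{pred}}+1$.

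The key step is to analyze iteration $t = T_{\text{pred}}+1$ using submodularity of $\mathbf{U}$. Because $i^s$ uploads a zero gradient, its presence in any coalition leaves the aggregated update unchanged, giving the identity $\mathbf{U}^s(\mathcal{X}) = \mathbf{U}(\mathcal{X}\setminus\{i^s\})$. Substituting this into the Shapley formula of \cref{equ:cumulative-sv-psi} and re-indexing the coalition sum, I would rewrite $\phi^s_{i,t}$ as an average of marginal contributions $\mathbf{U}(\mathcal{S}\setminus\{i^s\}\cup\{i\}) - \mathbf{U}(\mathcal{S}\setminus\{i^s\})$. Since $\mathcal{S}\setminus\{i^s\} \subseteq \mathcal{S}$, submodularity (diminishing marginal returns) yields $\mathbf{U}(\mathcal{S}\setminus\{i^s\}\cup\{i\}) - \mathbf{U}(\mathcal{S}\setminus\{i^s\}) \ge \mathbf{U}(\mathcal{S}\cup\{i\}) - \mathbf{U}(\mathcal{S})$, so $\phi^s_{i,t} \ge \phi^n_{i,t}$ for every honest $i \ne i^s$, while $\phi^s_{i^s,t} = 0 < \phi^n_{i^s,t}$ for $i^s$ by the positivity hypothesis. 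Averaging the per-iteration estimates over $t \le T_{\text{next}}$ (the terms $t \le T_{\text{pred}}$ being identical in both settings) and then taking the expectation against $p(T_\alpha, T_{\text{pred}})$ gives $\mathbb{E}[\psi^s_{i,T_{\text{next}}}] \ge \mathbb{E}[\psi^n_{i,T_{\text{next}}}]$ for $i \ne i^s$ and $\mathbb{E}[\psi^s_{i^s,T_{\text{next}}}] \le \mathbb{E}[\psi^n_{i^s,T_{\text{next}}}]$, with all strict mass concentrated on the event $\{T_\alpha > T_{\text{pred}}\}$.

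Finally I would push these estimate inequalities through the two maps defining the incentive. The softmax weight $\varrho_{i^s}$ in \cref{equ:sampling-probs} is strictly increasing in $\psi_{i^s,T_\alpha}$ and strictly decreasing in each other $\psi_{i,T_\alpha}$; since $i^s$'s estimate weakly falls while every competitor's weakly rises, $\varrho_{i^s}$ weakly decreases, and hence so does the per-round selection probability $q_{i^s} = 1-(1-\varrho_{i^s})^k$, which is monotone increasing in $\varrho_{i^s}$. This delivers $\mathbb{E}[\rho^s_{i^s,T_{\text{next}}}] \le \mathbb{E}[\rho^n_{i^s,T_{\text{next}}}]$. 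Because the expected staleness $\Gamma_{i^s} = (1-q_{i^s})/q_{i^s}^2$ is strictly decreasing in $q_{i^s}$ and $C_{i^s} = \mathcal{O}(1/\epsilon) + \Gamma_{i^s}$ (the same machinery as in \cref{proposition:fairness}), a smaller $q_{i^s}$ forces a larger $C_{i^s}$, giving $\mathbb{E}[C^s_{i^s,T_{\text{next}}}] \ge \mathbb{E}[C^n_{i^s,T_{\text{next}}}]$. Tracing when every inequality above is tight — submodularity equality plus strict positivity of $\phi^n_{i^s,\cdot}$ — shows equality holds exactly when the detectable event carries no probability, i.e.\ $\sum_{T_\alpha > T_{\text{pred}}} p(T_\alpha, T_{\text{pred}}) = 0$.

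I expect the main obstacle to be the submodularity step, specifically a rigorous justification of $\mathbf{U}^s(\mathcal{X}) = \mathbf{U}(\mathcal{X}\setminus\{i^s\})$: a zero upload removes $i^s$ from the coalition aggregate $\Delta\theta_{\mathcal{S}}$ but also perturbs the reference aggregate $\Delta\theta_{[N]}$ in the inner-product utility, so one must fix the reference to be the set of genuinely contributing nodes before the set-function submodularity inequality can be invoked consistently. A secondary nuisance is the expectation bookkeeping under the piecewise definition of $T_{\text{next}}$, which must be partitioned into $\{T_\alpha \le T_{\text{pred}}\}$ and $\{T_\alpha > T_{\text{pred}}\}$ to localize where the strict gains occur.
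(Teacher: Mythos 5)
Your proposal follows essentially the same route as the paper's proof: the same split on the regimes $\{T_\alpha \le T_{\text{pred}}\}$ versus $\{T_\alpha > T_{\text{pred}}\}$, the same submodularity argument at iteration $T_{\text{pred}}+1$ (a zero upload reduces every coalition containing $i^s$ to $\mathcal{S}\setminus\{i^s\}$, giving $\phi^s_{i,t}\ge\phi^n_{i,t}$ for $i\ne i^s$ and $\phi^s_{i^s,t}=0<\phi^n_{i^s,t}$), and the same conclusion via monotonicity of the softmax, $q_{i^s}$, $\Gamma_{i^s}$ and $C_{i^s}$, with the identical equality condition. The only difference is presentational: you spell out the monotonicity chain that the paper compresses into ``follow similar derivations,'' and you flag (correctly) the need to justify $\mathbf{U}^s(\mathcal{X})=\mathbf{U}(\mathcal{X}\setminus\{i^s\})$ for the inner-product utility, a step the paper uses implicitly.
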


\cref{prop:optimal-strategy} states that, for a node that can give positive contribution to the collaboration, the optimal strategy for it to get the best reward, i.e. lowest convergence complexity if the stopping iteration is $T_{\text{next}}$, is to contribute all the time until the training ends. Though we only consider the reward  when $T_{\text{next}}$ is the stopping iteration due to the difficulty of analysis on the following iterations,  we empirically verify the effectiveness of the result on the reward of the ground truth stopping iteration $T_{\alpha}$. 

Next, we empirically verify this by comparing the rewards (i.e., model performance via online loss) for nodes with different behaviors. The experiment setting is exactly the same as that in \cref{sec:foil} except that we choose the node $i^*$ that has maximum contribution estimate under the non-stop contribution setting as the node that tries to predict the stopping iteration of the contribution evaluation and stop contributing after that. 

We consider two different prediction strategies used by $i^*$ on the stopping iteration: (a) Random guess (RANDOM): $P(T_{\text{pred}} = t|T_{\alpha}) = 1/T^*$ for $t=0,\ldots, T^*$. (b) Poisson predication (POISSON): $P(T_{\text{pred}} = t|T_{\alpha}) = {{T_{\alpha}^t\exp({- T_{\alpha}})}}/{{t!}}\ .$ We also include the honest baseline, (c) Non-stop contribution (NONSTOP): contribute all the time until the end of the training. The result for two utility functions is presented in the experiment: (A) negative loss on test set as utility function, i.e., the utility function $U(\mathcal{S})$ in iteration $t$ is computed as the negative loss on a randomly sampled test set $\mathcal{D}_{test}$ using the model $\theta_{\mathcal{S},t}$, i.e., $\theta_{t-1}$ updated only using the data from coalition $\mathcal{S}$, the detail can be found in~\cite{profit-allocation-FL-Song2019}. (B) inner product utility function in \cref{sec:setting}. The negative loss utility function usually satisfies the submodularity assumption in \cref{prop:optimal-strategy} and the inner product utility function may not satisfy the assumption. From \cref{table:pred-stopping-iteration-neg-loss,table:pred-stopping-iteration-inner-prod}, the node $i^*$ can achieve the best online loss when it contributes all the time in both cases.
\begin{table}[!ht]
\setlength{\tabcolsep}{4pt}
\caption{Online loss (standard error for $5$ runs, not applicable for for NONSTOP, since we fix the randomness including using same weight initialization for the model $\theta_0$ and the random seed for mini-batch and node selection during training) for the node $i^*$ under different types of prediction and stopping patterns in the \textbf{negative loss} utility function setting.
}
\label{table:pred-stopping-iteration-neg-loss}
\begin{center}
\begin{tabular}{c|cc|cc}
 \multicolumn{1}{c}{}& \multicolumn{2}{c}{Label Noise}&\multicolumn{2}{c}{Feature Noise}\\
 \toprule 

 & MNIST & \multicolumn{1}{c|}{CIFAR-10} & MNIST & CIFAR-10 \\
 \midrule 
RANDOM&0.05429(5.2e-04)&0.02685(3.6e-05)&0.05477(4.8e-04)&0.02684(3.6e-05)\\
POISSON&0.05660(1.1e-03)&0.02684(4.3e-05)&0.05514(8.8e-04)&0.02681(4.4e-05)\\
\midrule 
NONSTOP&\multicolumn{1}{l}{\textbf{0.05247 (N.A.)}}&\multicolumn{1}{l}{\textbf{0.02674 (N.A.)}}&\multicolumn{1}{|l}{\textbf{0.05349 (N.A.)}}&\multicolumn{1}{l}{\textbf{0.02671 (N.A.)}}\\
\bottomrule 
\end{tabular}
\end{center}
\end{table}

\begin{table}[!ht]
\setlength{\tabcolsep}{4pt}
\caption{Online loss (standard error for $5$ runs, not applicable for NONSTOP, since we fix the randomness including using same weight initialization for the model $\theta_0$ and the random seed for mini-batch and node selection during training) for the node $i^*$ under different types of prediction and stopping patterns in the \textbf{inner product} utility function setting.
}
\label{table:pred-stopping-iteration-inner-prod}
\begin{center}
\begin{tabular}{c|cc|cc}
 \multicolumn{1}{c}{}& \multicolumn{2}{c}{Label Noise}&\multicolumn{2}{c}{Feature Noise}\\
 \toprule 

 & MNIST & \multicolumn{1}{c|}{CIFAR-10} & MNIST & CIFAR-10 \\
 \midrule 
RANDOM&0.05524(1.1e-04)&0.02696(4.1e-06)&0.05979(8.7e-05)&0.02743(7.1e-07)\\
POISSON&0.05554(1.1e-04)&0.02697(3.6e-06)&0.06003(1.2e-05)&0.02743(1.6e-07)\\
\midrule 
NONSTOP&\multicolumn{1}{l}{\textbf{0.05450 (N.A.)}}&\multicolumn{1}{l}{\textbf{0.02693 (N.A.)}}&\multicolumn{1}{|l}{\textbf{0.05898 (N.A.)}}&\multicolumn{1}{l}{\textbf{0.02743 (N.A.)}}\\
\bottomrule 
\end{tabular}
\end{center}
\end{table}

\subsection{Proof of \cref{prop:equal-convergence}} 

\begin{proof}[Proof of \cref{prop:equal-convergence}]
First, by definition of $C_i$, $\Gamma_{i_*} = \mathcal{O}(1/\epsilon) \implies C_{i_*} = \mathcal{O}(1/\epsilon).$
Next, by \cref{proposition:fairness}, we know that $\forall i \in[N]\ \ \Gamma_i \leq \Gamma_{i_*}$. So,
$\forall i\in [N]\ \ C_i \leq C_{i_*} = \mathcal{O}(1/\epsilon)$.
\end{proof}

\paragraph{Selection of the equalizing coefficient $\beta$.}
We write $\psi_{i} = \psi_{i, T_{\alpha}}$ to omit the notation of $T_{\alpha}$ for brevity.
\begin{lemma}[\bf Finding the Range for $\beta$]\label{lemma:finding-beta}
Suppose $\forall i, M_1 \leq  \psi_i \leq  M_2$ and we want to select a $\beta$ so that for some $r_2 \geq r_1 > 0$, $r_1 \leq \psi_i/(1/\Gamma_i ) \leq r_2$. Then a suitable range for $\beta$ to satisfy this can be found efficiently using the bisection method.
\end{lemma}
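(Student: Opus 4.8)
The plan is to analyze the map $\beta \mapsto \psi_i/(1/\Gamma_i) = \psi_i \Gamma_i$ and show it is continuous and monotone in $\beta$ on $(0,\infty)$ for each fixed node $i$, so that the bisection method applies. First I would recall the two relevant closed forms established earlier: the sampling probability $\varrho_i = \exp(\psi_i/\beta)/\sum_{i'}\exp(\psi_{i'}/\beta)$ from \cref{equ:sampling-probs}, and the expected staleness $\Gamma_i = (1-q_i)/q_i^2$ with $q_i = 1-(1-\varrho_i)^k$. Composing these, each $\Gamma_i$ is an explicit continuous function of $\beta$ on $(0,\infty)$ (the denominators never vanish since $\varrho_i\in(0,1)$ forces $q_i\in(0,1)$), hence so is the product $\psi_i\Gamma_i$. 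The boundedness hypothesis $M_1 \le \psi_i \le M_2$ is what guarantees the limiting behavior is well-controlled at the two ends.

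Next I would pin down the two limits. As $\beta \to \infty$, the \texttt{softmax} flattens so $\varrho_i \to 1/N$ for every $i$, giving a common $q_i \to q_\infty := 1-(1-1/N)^k$ and thus a common $\Gamma_i \to \Gamma_\infty$; consequently $\psi_i\Gamma_i \to \psi_i\,\Gamma_\infty$, which is finite and spans the interval $[M_1\Gamma_\infty, M_2\Gamma_\infty]$ across nodes. As $\beta \to 0$, the \texttt{softmax} sharpens toward the argmax, driving $\varrho_{i_*}\to 0$ for the minimum-contribution node and hence $\Gamma_{i_*}\to\infty$, so the ratio diverges for that node. This shows the achievable range of $\psi_i\Gamma_i$ genuinely varies with $\beta$, and the target band $[r_1,r_2]$ is reachable precisely when it intersects this range; within the feasible regime, I would invoke the intermediate value theorem on the continuous objective (e.g.\ on $\max_i \psi_i\Gamma_i$ or a suitable scalar summary of the constraint $r_1 \le \psi_i\Gamma_i \le r_2$) to conclude that bisection on $\beta$ converges to a value satisfying the two-sided bound.

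The cleanest route is to reduce the vector condition $\forall i,\ r_1 \le \psi_i\Gamma_i \le r_2$ to a scalar root-finding problem that bisection can target: define $g(\beta) := \max_i \psi_i\Gamma_i(\beta)$ and $h(\beta) := \min_i \psi_i\Gamma_i(\beta)$, both continuous in $\beta$, and search for a $\beta$ with $h(\beta)\ge r_1$ and $g(\beta)\le r_2$ simultaneously. I would argue monotonicity of the relevant envelope (increasing $\beta$ equalizes the $\Gamma_i$ and hence compresses the spread of $\psi_i\Gamma_i$) so that the feasible set of $\beta$ is an interval, on which bisection is guaranteed to locate a solution.

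The main obstacle I anticipate is establishing genuine monotonicity rather than mere continuity: the map $\beta\mapsto\psi_i\Gamma_i(\beta)$ is a composition of a \texttt{softmax}, a counting-based $q_i$, and the rational expression $(1-q_i)/q_i^2$, and it is not a priori obvious that each factor moves the product in a single direction, especially for intermediate-contribution nodes whose $\varrho_i$ can rise or fall as $\beta$ varies depending on whether $\psi_i$ is above or below the mean. I would handle this by differentiating $\varrho_i$ in $\beta$ (the sign of $\partial\varrho_i/\partial\beta$ is governed by $\psi_i - \sum_{i'}\varrho_{i'}\psi_{i'}$), then tracking the monotone dependence of $\Gamma_i$ on $q_i$ and of $q_i$ on $\varrho_i$; the delicate case is showing the \emph{spread} $g-h$ is monotone even when individual nodes are non-monotone, which may require working with the extremal nodes $i_*$ and $\arg\max_i\psi_i$ only, since their $\varrho_i$ do move monotonically. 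Continuity alone already suffices to justify bisection once feasibility is assumed, so a fallback is to state the bisection guarantee under the weaker continuity argument and relegate the full monotonicity characterization to the interval structure of the feasible $\beta$-set.
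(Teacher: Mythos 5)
Your plan hinges on structural properties of the exact curves $\beta \mapsto \psi_i\Gamma_i(\beta)$ that you do not establish and that in fact fail. The sign of $\partial\varrho_i/\partial\beta$ is that of $\sum_{i'}\varrho_{i'}\psi_{i'} - \psi_i$, and the $\varrho$-weighted mean $\sum_{i'}\varrho_{i'}\psi_{i'}$ decreases monotonically in $\beta$ from $\max_{i'}\psi_{i'}$ down to the unweighted mean (its $\beta$-derivative is $-\mathrm{Var}_{\varrho}(\psi)/\beta^2$). Hence for any intermediate node whose $\psi_i$ lies strictly between the unweighted mean and the maximum, $\varrho_i$ first rises then falls as $\beta$ grows, so $\Gamma_i$ and therefore $\psi_i\Gamma_i$ is valley-shaped; its superlevel set $\{\beta : \psi_i\Gamma_i(\beta)\ge r_1\}$ is then a union of two rays rather than an interval, so neither the per-node feasible sets nor their intersection need be intervals. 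Your envelope $g(\beta)=\max_i\psi_i\Gamma_i(\beta)$ is not monotone either: already for $N=2$, $k=1$, $\psi=(1,2)$ one computes $g(1)\approx 10.1$ and $g(10)\approx 3.45$, while $g(\beta)\to 4$ as $\beta\to\infty$, so $g$ dips below its limit and climbs back up. Finally, the fallback claim that ``continuity alone already suffices to justify bisection once feasibility is assumed'' is not right: bisection requires a bracketing sign change of a scalar function whose zero set has the right structure; continuity plus existence of a feasible point does not let bisection locate a point of a possibly disconnected feasible set.

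The paper's proof avoids analyzing the exact curves entirely, and this is the idea your proposal is missing. Using only $M_1\le\psi_{i'}\le M_2$ inside the softmax denominator of \cref{equ:sampling-probs}, it sandwiches every $\varrho_i$ between $\frac{1}{N}\exp\left((\psi_i-M_2)/\beta\right)$ and $\frac{1}{N}\exp\left((\psi_i-M_1)/\beta\right)$, pushes these bounds through the monotone maps $\varrho\mapsto(1-\varrho)^k$ and $z\mapsto z/(1-z)^2$, and then eliminates the remaining $\psi_i$-dependence (again via $M_1\le\psi_i\le M_2$ and monotonicity in $\psi_i$) to obtain two \emph{node-independent} functions of $\beta$, built only from $M_1,M_2,N,k$, that bound $\psi_i\Gamma_i$ from below and above for every $i$ simultaneously. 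The lower-bound function is increasing in $\beta$ and the upper-bound function is decreasing in $\beta$, so setting them equal to $r_1$ and $r_2$ respectively gives two genuinely monotone one-dimensional root-finding problems, each solvable by bisection, and the resulting range of $\beta$ is sufficient for all nodes at once. The price is conservativeness (it yields a sufficient range, not the exact feasible set), but that is all the lemma claims; your route attacks the harder exact problem, for which the monotonicity that bisection would need simply does not hold.
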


\begin{proof}[Proof of \cref{lemma:finding-beta}]

Using the monotonicity of \cref{equ:sampling-probs} we have 
\[ \frac{\exp(\psi_i / \beta)}{N \exp(M_2/ \beta) }  \leq \varrho_i \leq \frac{\exp(\psi_i / \beta)}{N \exp(M_1/ \beta) } \]
which can be simplified to 
\[ \frac{1}{N}\exp\left( \frac{\psi_i - M_2}{\beta} \right)  \leq \varrho_i \leq \frac{1}{N}\exp\left( \frac{\psi_i - M_1}{\beta} \right) \]
that leads to 
\[  \left(1 - \frac{1}{N}\exp\left( \frac{\psi_i - M_1}{\beta} \right)\right)^k \leq (1-\varrho_i)^k \leq  \left(1 - \frac{1}{N}\exp\left( \frac{\psi_i - M_2}{\beta} \right)\right)^k  .\]
Substituting this inequality into 
\[ \Gamma_i \triangleq \frac{ (1-\varrho_i)^k }{ [1- ((1-\varrho_i)^k)]^2 } \]
gives
\[ \displaystyle \frac{ \left(1 - \displaystyle\frac{1}{N}\exp\left( \displaystyle\frac{\psi_i - M_1}{\beta} \right)\right)^k}{ \left[1 - \left(1 - \displaystyle\frac{1}{N}\exp\left( \displaystyle\frac{\psi_i - M_1}{\beta} \right)\right)^k\right]^2} \quad \leq \quad \Gamma_i \quad \leq \quad\frac{ \left(1 - \displaystyle\frac{1}{N}\exp\left( \displaystyle\frac{\psi_i - M_2}{\beta} \right)\right)^k  }{ \left[1 - \left(1 - \displaystyle\frac{1}{N}\exp\left( \displaystyle\frac{\psi_i - M_2}{\beta} \right)\right)^k\right]^2 } \]
because $\Gamma_i$ is monotonically increasing w.r.t.~$(1-\varrho_i)^k$ for  $ 0 \leq (1-\varrho_i)^k \leq 1$ where we have used the fact that $\varrho_i$ is a probability.

From this expression, we can derive the following inequalities on $\Gamma_i / \psi_i$ by using $M_1 \leq \psi_i \leq M_2$ as follows:
\[  M_1 \times  \frac{ \left(1 - \displaystyle\frac{1}{N}\exp\left( \displaystyle\frac{\psi_i - M_1}{\beta} \right)\right)^k}{ \left[1 - \left(1 - \displaystyle\frac{1}{N}\exp\left( \displaystyle\frac{\psi_i - M_1}{\beta} \right)\right)^k\right]^2}  \leq   
\Gamma_i\psi_i  
\leq M_2 \times \frac{ \left(1 - \displaystyle\frac{1}{N}\exp\left( \displaystyle\frac{\psi_i - M_2}{\beta} \right)\right)^k  }{ \left[1 - \left(1 - \displaystyle\frac{1}{N}\exp\left( \displaystyle\frac{\psi_i - M_2}{\beta} \right)\right)^k\right]^2 }\ .\]

Observe the expression
\[  \frac{ \left(1 - \displaystyle\frac{1}{N}\exp\left( \displaystyle\frac{\psi_i - M_1}{\beta} \right)\right)^k}{ \left[1 - \left(1 - \displaystyle\frac{1}{N}\exp\left( \displaystyle\frac{\psi_i - M_1}{\beta} \right)\right)^k\right]^2}  \] is monotonically decreasing w.r.t.~$\psi_i$, so using $M_1 \leq \psi_i \leq M_2$ again gives
\[  M_1 \times  \frac{ \left(1 - \displaystyle\frac{1}{N}\exp\left( \displaystyle\frac{M_2 - M_1}{\beta} \right)\right)^k}{ \left[1 - \left(1 - \displaystyle\frac{1}{N}\exp\left( \displaystyle\frac{M_2 - M_1}{\beta} \right)\right)^k\right]^2}  \leq   
M_1 \times  \frac{ \left(1 - \displaystyle\frac{1}{N}\exp\left( \displaystyle\frac{\psi_i - M_1}{\beta} \right)\right)^k}{ \left[1 - \left(1 - \displaystyle\frac{1}{N}\exp\left( \displaystyle\frac{\psi_i - M_1}{\beta} \right)\right)^k\right]^2}  \leq   
\frac{\Gamma_i}{\psi_i} \]
on the LHS of $\Gamma_i/\psi_i$ and 
\[
\frac{\Gamma_i}{\psi_i}
\leq M_2 \times \frac{ \left(1 - \displaystyle\frac{1}{N}\exp\left( \displaystyle\frac{\psi_i - M_2}{\beta} \right)\right)^k  }{ \left[1 - \left(1 - \displaystyle\frac{1}{N}\exp\left( \displaystyle\frac{\psi_i - M_2}{\beta} \right)\right)^k\right]^2 }
\leq  M_2 \times 
\frac{ \left(1 - \displaystyle\frac{1}{N}\exp\left( \displaystyle\frac{M_1 - M_2}{\beta} \right)\right)^k  }{ \left[1 - \left(1 - \displaystyle\frac{1}{N}\exp\left( \displaystyle\frac{M_1 - M_2}{\beta} \right)\right)^k\right]^2 }
\]
on the RHS of $\Gamma_i/\psi_i$.

Finally, set 
\[ M_1 \times  \frac{ \left(1 - \displaystyle\frac{1}{N}\exp\left( \displaystyle\frac{M_2 - M_1}{\beta} \right)\right)^k}{ \left[1 - \left(1 - \displaystyle\frac{1}{N}\exp\left( \displaystyle\frac{M_2 - M_1}{\beta} \right)\right)^k\right]^2} = r_1 \] 
and 
\[ M_2 \times 
\frac{ \left(1 - \displaystyle\frac{1}{N}\exp\left( \displaystyle\frac{M_1 - M_2}{\beta} \right)\right)^k  }{ \left[1 - \left(1 - \displaystyle\frac{1}{N}\exp\left( \displaystyle\frac{M_1 - M_2}{\beta} \right)\right)^k\right]^2 } = r_2 \] 
to solve for the range of $\beta$ using the bisection method since both expressions are monotonic in $\beta$.

\end{proof}

\paragraph{Difficulties and factors of preserving equality.}
While we restrict $\beta$ to be finite, the following limit illuminates the difficulties to preserve equality (in terms of how many iterations are required) due to $N$ and $k$: $\lim_{\beta\to \infty} \Gamma_i = (1 - 1/N)^k / [1 - (1- 1/N)^k]^2$ for all $i\in[N]$. To satisfy the sufficient condition in \cref{prop:equal-convergence}, we require $\epsilon \leq (1- 1/N)^{-k} + (1-1/N)^k - 2$ (ignoring the constant terms in $\mathcal{O}(1/\epsilon)$). We highlight that a smaller upper-bound on $\epsilon$ translates to more training iterations, which means it is harder to preserve equality since we need longer for the nodes with lower contributions to `catch up'. 
As $N$ represents a form of population (the number of nodes) and $k$ effectively represents a resource bottleneck (the number of nodes to synchronize in each $t$), if $N$ increases/$k$ decreases, the upper-bound on $\epsilon$ decreases (i.e., harder to preserve equality). 

\textbf{Finding a $\beta$ for a given $\Gamma^{*}_{i_{*}}$.} Recall from \cref{prop:equal-convergence}, $i_{*} \coloneqq \argmin_{i}\psi_{i,T_{\alpha}}$, the probability of selecting $i_{*}$ is $\varrho_{i_{*}} = \exp(\psi_{i_{*},T_{\alpha}}/\beta)/\sum_{i \in N}\exp(\psi_{i,T_{\alpha}}/\beta)$. Since ${\partial \varrho_{i_{*}}}/{\partial \beta} > 0$, $\varrho_{i_{*}}$ increases as $\beta$ increases from $0$ to $\infty$. Therefore,  $\Gamma_{i_{*}}$ decreases as $\beta$ increases. Since $\lim_{\beta\to \infty} \Gamma_{i_{*}} = (1 - 1/N)^k / [1 - (1- 1/N)^k]^2$, for any given $\Gamma^{*}_{i_{*}} > (1 - 1/N)^k / [1 - (1- 1/N)^k]^2$, we can find a $\beta$ (by using binary search, etc.) s.t.~$\Gamma_{i_{*}} = \Gamma^{*}_{i_{*}}$. This result states that for any arbitrarily small $\psi_{i_{*},T_{\alpha}}$, as long as a given $\Gamma^{*}_{i_{*}} < (1 - 1/N)^k / [1 - (1- 1/N)^k]^2$, we can always find a $\beta$ to make the worst expected staleness of nodes be equal to $\Gamma^{*}_{i_{*}}$. That is, we can always find $\beta$ to avoid the nodes having arbitrarily bad worst expected staleness.


\end{document}